\newtheorem{assumption}{Assumption}
\newtheorem{observation}{Observation}
\begin{document}
	\title{On the Approximation of Cooperative Heterogeneous Multi-Agent Reinforcement Learning (MARL) using Mean Field Control (MFC)}

		\author{\name Washim Uddin Mondal \email wmondal@purdue.edu \\
		\addr Lyles School of Civil Engineering, \\
		School of Industrial Engineering,\\
		Purdue University, \\
		West Lafayette, IN, 47907, USA
		\AND
		\name Mridul Agarwal \email agarw180@purdue.edu \\
		\addr School of Electrical and Computer Engineering,\\
		 Purdue University, \\
		 West Lafayette, IN, 47907, USA
		\AND
				\name Vaneet Aggarwal \email vaneet@purdue.edu \\
		\addr School of Industrial Engineering,\\
		 School of Electrical and Computer Engineering,\\
		Purdue University, \\
		West Lafayette, IN, 47907, USA		
		\AND
		\name Satish V. Ukkusuri \email sukkusur@purdue.edu \\
		\addr Lyles School of Civil Engineering, \\
		Purdue University, \\
		West Lafayette, IN, 47907, USA
		\thanks{This work was presented in part at the NeurIPS Workshop on Cooperative AI, Dec. 2021. \\
		The current version is published in the Journal of Machine Learning Research 23(129): 1--46, 2022. }
	}

	\if 0
	\author[1,2]{Washim Uddin Mondal}
	\author[3]{Mridul Agarwal}
	\author[2,3]{Vaneet Aggarwal}
	\author[1]{Satish V. Ukkusuri}
	
	\affil[1]{Lyles School of Civil Engineering, Purdue University, West Lafayette, IN, 47907, USA}
	\affil[2]{School of Industrial Engineering, Purdue University, West Lafayette, IN, 47907, USA}
	\affil[3]{School of Electrical and Computer Engineering, Purdue University, West Lafayette, IN, 47907, USA}                 
	\fi
	\maketitle
	
	\begin{abstract}
	Mean field control (MFC) is an effective way to mitigate the curse of dimensionality of cooperative multi-agent reinforcement learning (MARL) problems. This work considers a collection of $N_{\mathrm{pop}}$  heterogeneous agents that can be segregated into $K$ classes such that the $k$-th class contains $N_k$ homogeneous agents. We  aim to prove approximation guarantees of the MARL problem for this heterogeneous system by its corresponding MFC problem.  We consider three  scenarios where the reward and transition dynamics of all agents are respectively taken to be functions of $(1)$ joint state and action distributions across all classes, $(2)$ individual distributions of each class, and $(3)$ marginal distributions of the entire population. We show that, in these cases, the $K$-class MARL problem can be approximated by MFC with errors given  as $e_1=\mathcal{O}(\frac{\sqrt{|\mathcal{X}|}+\sqrt{|\mathcal{U}|}}{N_{\mathrm{pop}}}\sum_{k}\sqrt{N_k})$, $e_2=\mathcal{O}(\left[\sqrt{|\mathcal{X}|}+\sqrt{|\mathcal{U}|}\right]\sum_{k}\frac{1}{\sqrt{N_k}})$ and $e_3=\mathcal{O}\left(\left[\sqrt{|\mathcal{X}|}+\sqrt{|\mathcal{U}|}\right]\left[\frac{A}{N_{\mathrm{pop}}}\sum_{k\in[K]}\sqrt{N_k}+\frac{B}{\sqrt{N_{\mathrm{pop}}}}\right]\right)$, respectively,  where $A, B$ are some constants and $|\mathcal{X}|,|\mathcal{U}|$ are the sizes of state and action spaces of each agent. Finally, we design a Natural Policy Gradient (NPG) based algorithm that, in the three cases stated above, can converge to an optimal MARL policy within $\mathcal{O}(e_j)$ error with a sample complexity of $\mathcal{O}(e_j^{-3})$, $j\in\{1,2,3\}$, respectively.
\end{abstract}

\begin{keywords}
multi-agent learning, heterogeneous systems, mean-field control, approximation guarantees, policy gradient algorithm
\end{keywords}

	\section{Introduction}
\label{section_intro}

The control of a large number of interacting agents is a common problem in  social science and engineering with applications in finance, smart grids, transportation, wireless networks, epidemic control, etc. \citep{schwartz2014multi,zhang2021multi}. A common approach for decision making in such environments is multi-agent reinforcement learning (MARL). In \textit{cooperative} MARL, the target is to design a sequence of \textit{decision rules} or a \textit{policy} that instructs the agents how to select \textit{actions} based on their observed \textit{state} of the environment such that the long-term \textit{collective} reward is maximized. The joint state and action spaces of the agents, however, increase exponentially with the  size of the population. 
This makes the computation of reward maximizing policy an incredibly challenging pursuit, especially  when the number of agents is large. \hspace{0cm}

To overcome the exponential blow-up of joint state and action spaces in collaborative MARL, several computationally efficient approaches have been proposed, including Independent Q-learning (IQL) \citep{tan1993multi}, centralized training with decentralized execution (CTDE) \citep{rashid2020weighted,sunehag2018value,son2019qtran,rashid2018qmix}, and mean-field control (MFC) \citep{angiuli2020unified}. IQL forces the environment to be non-stationary and thus its global convergence cannot be shown in general \citep{zhu2019q}. Global convergence for CTDE-type algorithms is also not known. On the other hand, the core idea of MFC is that, if the population size is infinite and the agents are \textit{homogeneous}, then one can draw accurate inferences about the population by studying only one representative agent \citep{bensoussan2018mean}.  The assumption of homogeneity, however, does not go hand-in-hand with many scenarios of practical interest. For example, ride-hailing services typically offer multiple types of vehicles and drivers, each with different accommodation capacity, driving behavior, searching behavior and preferred travel range. If the profit earned per unit time is considered as reward, then each type of vehicle/driver will possess a distinct reward function and thus the system as a whole cannot be homogeneous.

It is evident from the above discussion that there are no scalable approaches in the literature to solve the problem of heterogeneous MARL with global convergence guarantees. The goal of our paper is to bridge this gap. In particular, we consider a population of $N_{\mathrm{pop}}$ heterogeneous agents that can be partitioned into $K$ classes such that $k$-th class consists of $N_{k}$ homogeneous agents. In other words, the agents in each class are assumed to have identical reward function and state transition dynamics. However, those functions are different in different classes. 
In this framework, we prove that MARL can be approximated 
as a $K$-class MFC problem and obtain the approximation error as a function of different class sizes.
We further develop an algorithm to solve the $K$-class MFC problem and with the help of our approximation result, show that it efficiently converges to a provably near-optimal policy of heterogeneous MARL.

$K$-class MFC can be depicted as a generalization of traditional MFC-based approach which as stated before, assumes all agents to be identical. Homogeneity enforces the impact of the population on any agent to be summarized by the state and action distributions of the entire population. In contrast, $K$-class MFC does not allow such simplification. The agents in such a case, not only influence other agents from the same class but their influence  extends to agents from other classes as well. Due to the inter-class interaction, the influence of the whole population must be summarized either via joint state and action distributions over all classes or via the collection of distributions of each individual classes. The  analysis of a $K$-class MFC, as a result, turns out to be very different from that of  a single class/traditional MFC.

\subsection{Key Contributions:}
We analyse the above heterogeneous system under two generic setups. In the first case, the reward and transition functions of all agents are assumed to be functions of joint state and action distributions across all classes while in the second scenario, those are taken to be functions of state and action distributions of each individual classes. We prove that, in the first case, the $N_{\mathrm{pop}}$-agent RL problem can be approximated by the $K$-class MFC problem within an error of  $e_1=\mathcal{O}(\left[\sqrt{|\mathcal{X}|}+\sqrt{|\mathcal{U}|}\right]\frac{1}{N_{\mathrm{pop}}}\sum_{k\in[K]}\sqrt{N_k})$ where $N_k$ is the population size of $k$-th class, $k\in\{1,\cdots,K\}\triangleq [K]$ and $|\mathcal{X}|, |\mathcal{U}|$  denote the size of state and action spaces of individual agents, respectively. In the second case, the approximation error is proven to be $e_2=\mathcal{O}(\left[\sqrt{|\mathcal{X}|}+\sqrt{|\mathcal{U}|}\right]\sum_{k\in[K]}\frac{1}{\sqrt{N_k}})$.

For single class of agents, the approximation error reduces to $\mathcal{O}\left(\left[\sqrt{|\mathcal{X}|}+\sqrt{|\mathcal{U}|}\right]\frac{1}{\sqrt{N_{\mathrm{pop}}}}\right)$ which matches a recent result of \citep{gu_mean-field_2020}. It is worthwhile to point out that our proof methods are distinct from that used in \citep{gu_mean-field_2020}. In particular, at the heart of our approximation, lies a novel 
inequality on independent random variables bounded in $[0,1]$ with constrained 
parameters (Lemma \ref{simple_lemma} of Appendix \ref{app_th_1}). This, in conjugation with two important observations about state and action evolution of the agents, establishes our preliminary results. In contrast, \citep{gu_mean-field_2020} utilises a well-known property of sub-Gaussian variables. Although for $K=1$, both our  bound and that suggested in \citep{gu_mean-field_2020}   are of the same order, our bounds possess smaller leading constant terms\footnote{We note that the authors of \citep{gu_mean-field_2020} had an incorrect result when we first posted our version on arXiv in Sept 2021 (https://arxiv.org/pdf/2109.04024.pdf), and the error was detailed in our arXiv version. The authors of \citep{gu_mean-field_2020} fixed the error in the final version, acknowledging our manuscript. }.

We also consider a special case where the reward and transition dynamics are functions of aggregate state and action distributions of the entire population. In this case, the approximation error reduces to $e_3=\mathcal{O}\left(\left[\sqrt{|\mathcal{X}|}+\sqrt{|\mathcal{U}|}\right]\left[\frac{A}{N_{\mathrm{pop}}}\sum_{k\in[K]}\sqrt{N_k}+\frac{B}{\sqrt{N_{\mathrm{pop}}}}\right]\right)$ where $A,B$ are some constants.

Finally, extending the approach in \citep{liu2020improved}, we develop a natural policy-gradient (NPG) based algorithm for MFC, which, combined with the approximation results between MARL and MFC, shows that the proposed NPG algorithm converges to the optimal MARL policy within $\mathcal{O}(e_j)$ error with a sample complexity of $\mathcal{O}(e_j^{-3})$, $j\in\{1,2,3\}$ for the three cases, respectively.

\section{Related Work}

{\bf Approaches for RL: }  
Tabular algorithms such as Q-learning \citep{watkins1992q} and SARSA   \citep{rummery1994line} were the earliest approaches to solve RL problems. However, they are not suitable for large state-action space due to their huge memory requirement. Recently, Deep Q-network (DQN) \citep{mnih2015human} and policy-gradient based  algorithms \citep{mnih2016asynchronous} have shown promising results in terms of scalability. Although these algorithms can handle large state-space due to neural network (NN) based architecture, the approach is not scalable to multiple agents. Further, the guarantees of these algorithms either require the underlying Markov Decision Processes to be linear \citep{jin2020provably}, of low Bellman rank \citep{jiang2017contextual}, or the scaling of parameters of NNs to be increasing with time \citep{wang2019neural} - all of which are restrictive assumptions and may not hold for general MARL.

{\bf Use of MFC for MARL problems: } MFC has found its application in various MARL setups. For example, it has been used in traffic signal control \citep{wang2020large}, management of power grids \citep{chen2016state}, ride-sharing \citep{al2019deeppool},  and epidemic control \citep{watkins2016optimal}, among others.

{\bf Learning Algorithms for MFC:}  To solve homogeneous MFC problems, several learning algorithms have been proposed. For example, model-free Q-learning algorithms have been suggested in \citep{angiuli2020unified,gu_mean-field_2020,carmona2019model} while \citep{carmona2019linear}  designed a policy-gradient based method. Recently, \citep{pasztor_efficient_2021} proposed a model-based algorithm for MFC. All of these works are appropriate only for homogeneous MFC. 

{\bf Theoretical Relation between MARL and MFC:} It is well known that when the number of agents approaches infinity, the limiting behaviour of homogeneous MARL is described by MFC \citep{lacker2017limit}. However, it was proven only recently    \citep{gu_mean-field_2020} that for a finite $N_{\mathrm{pop}}$ number of agents, MARL is approximated by MFC within $\mathcal{O}(1/\sqrt{N_{\mathrm{pop}}})$ error margin. 
Our work is the first to provide such approximation bound for the heterogeneous MARL.

{\bf Mean Field Games: }  Alongside MFC, mean field games (MFG) has garnered attention in the mean-field community. MFG analyses an infinite population of \textit{non-cooperative} homogeneous  agents. The target is to identify the Nash equilibrium (NE) of the game and design learning algorithms that    converge to such an equilibrium \citep{guo2019learning,elie2020convergence,yang2018learning,agarwal2022reinforcement}.

	\section{ Model for Heterogeneous Cooperative MARL}
\label{section1}

We consider $K$ classes of agents where the agents belonging to each class are identical and interchangeable. The population size of  $k$-th class, where $k\in\{1,\cdots,K\}\triangleq [K]$ is $N_k$, while the total population size is $N_\mathrm{pop}\triangleq \sum_{k\in[K]}N_k$. Also, $\mathbf{N}\triangleq\{N_k\}_{k\in[K]}$. Let $\mathcal{X}, \mathcal{U}$ be (finite) state and action spaces of each agent. At time $t\in\{0,1,\cdots\}$, $j$-th agent belonging to $k$-th class possesses a state $x_{j,k}^{t,\mathbf{N}}\in\mathcal{X}$ and takes an action $u_{j,k}^{t,\mathbf{N}}\in\mathcal{U}$.
As a consequence, it receives a reward $r_{j,k}^{t,\mathbf{N}}$ and its state changes to $x_{j,k}^{t+1,\mathbf{N}}$ following some transition probability law. In general $r_{j,k}^{t,\mathbf{N}}$ is a function of $(x_{j,k}^{t,\mathbf{N}}, u_{j,k}^{t,\mathbf{N}})$, i.e, the state and action of the concerned agent at time $t$, as well as the joint states and actions of all the agents at time $t$ which are denoted by $\mathbf{x}_t^{\mathbf{N}}$ and $\mathbf{u}_t^{\mathbf{N}}$,  respectively. Mathematically,
\begin{align}
\label{eq_reward}
r^{t,\mathbf{N}}_{j,k}=\tilde{r}_k(x_{j,k}^{t,\mathbf{N}}, u_{j,k}^{t,\mathbf{N}},\mathbf{x}_t^{\mathbf{N}},\mathbf{u}_t^{\mathbf{N}})
\end{align}

Note that the function $\tilde{r}_k(\cdot,\cdot,\cdot,\cdot)$ is identical for all agents of $k$-th class. This is due to the fact that the agents of a certain class are homogeneous. Recall that the agents belonging to a given class are interchangeable as well. Thus if $\boldsymbol{\mu}_t^{\mathbf{N}}$, $\boldsymbol{\nu}_t^{\mathbf{N}}$ are empirical joint distributions of states and actions of all agents at time $t$, i.e, $\forall x\in\mathcal{X}$, $\forall u\in\mathcal{U}$, $ \forall k\in[K]$,
\begin{align}
\label{def_mu}
\boldsymbol{\mu}_t^{\mathbf{N}}(x,k)\triangleq\dfrac{1}{N_{\mathrm{pop}}}\sum_{j=1}^{N_k}\delta(x_{j,k}^{t,\mathbf{N}}=x),\\
\label{def_nu}
\boldsymbol{\nu}_t^{\mathbf{N}}(u,k)\triangleq\dfrac{1}{N_{\mathrm{pop}}}\sum_{j=1}^{N_k}\delta(u_{j,k}^{t,\mathbf{N}}=u)
\end{align}
where $\delta(\cdot)$ is an indicator function, then, for some function $r_k$, we can rewrite $(\ref{eq_reward})$ as 
\begin{align}
\label{eq_reward_rw}
r^{t,\mathbf{N}}_{j,k}={r}_k(x_{j,k}^{t,\mathbf{N}}, u_{j,k}^{t,\mathbf{N}},\boldsymbol{\mu}_t^{\mathbf{N}},\boldsymbol{\nu}_t^{\mathbf{N}},N_{\mathrm{pop}})
\end{align}

Note that the output of $r_k$, in general, is dependent on the total number of agents, $N_{\mathrm{pop}}$. Moreover, if, for an arbitrary set $\mathcal{A}$, the collection of all distributions over $\mathcal{A}$ is denoted as $\mathcal{P}(\mathcal{A})$, then $\boldsymbol{\mu}_t^{\mathbf{N}}\in\mathcal{P}(\mathcal{X}\times [K])$, and $\boldsymbol{\nu}_t^{\mathbf{N}}\in\mathcal{P}(\mathcal{U}\times [K])$. 

We shall now show that $(\ref{eq_reward})$ can be also written in an alternate form. Let, $\boldsymbol{\bar{\mu}}_t^{\mathbf{N}}$, $\boldsymbol{\bar{\nu}}_t^{\mathbf{N}}$ be such that $\boldsymbol{\bar{\mu}}_t^{\mathbf{N}}(.,k)$ and $\boldsymbol{\bar{\nu}}_t^{\mathbf{N}}(.,k)$  are state and action distributions of the agents of $k$-th class, i.e., $\boldsymbol{\bar{\mu}}_t^{\mathbf{N}}\in\mathcal{P}^K(\mathcal{X})\triangleq \mathcal{P}(\mathcal{X})\times \cdots \times\mathcal{P}(\mathcal{X})$,  $\mathbf{\bar{\nu}}_t^{\mathbf{N}}\in\mathcal{P}^K(\mathcal{U})$, and $\forall x\in\mathcal{X}$, $\forall u\in\mathcal{U}$, $ \forall k\in[K]$
\begin{align}
\label{eq5}
\boldsymbol{\bar{\mu}}_t^{\mathbf{N}}(x,k)\triangleq\dfrac{1}{N_k}\sum_{j=1}^{N_k}\delta(x_{j,k}^{t,\mathbf{N}}=x),\\
\label{eq6}
\boldsymbol{\bar{\nu}}_t^{\mathbf{N}}(u,k)\triangleq\dfrac{1}{N_k}\sum_{j=1}^{N_k}\delta(u_{j,k}^{t,\mathbf{N}}=u)
\end{align}

With this notation, for some $\bar{r}_k$, we can rewrite $(\ref{eq_reward})$ as
\begin{align}
\label{eq_reward_rw_2}
r^{t,\mathbf{N}}_{j,k}=\bar{r}_k(x_{j,k}^{t,\mathbf{N}}, u_{j,k}^{t,\mathbf{N}},{\boldsymbol{\bar{\mu}}}_t^{\mathbf{N}},{\boldsymbol{\bar{\nu}}}_t^{\mathbf{N}}, \mathbf{N})
\end{align}

Note that the output of $\bar{r}_k$ is, in general, dependent on $\mathbf{N}$, i.e., the population size of each of the classes. Similar to $(\ref{eq_reward})$, the state transition law in general can be written as
\begin{align}
\label{eq_transition}
x^{t+1,\mathbf{N}}_{j,k}\sim \tilde{P}_k(x_{j,k}^{t,\mathbf{N}}, u_{j,k}^{t,\mathbf{N}},\mathbf{x}_t^{\mathbf{N}},\mathbf{u}_t^{\mathbf{N}}),
\end{align}
for some function $\tilde{P}_k$. Using the same argument as used in $(\ref{eq_reward_rw})$ and $(\ref{eq_reward_rw_2})$, we can express $(\ref{eq_transition})$ in the following two equivalent forms for some functions $P_k$ and $\bar{P}_k$.
\begin{align}
\label{eq_trans_law}
\begin{split}
x^{t+1,\mathbf{N}}_{j,k}&\sim P_k(x_{j,k}^{t,\mathbf{N}}, u_{j,k}^{t,\mathbf{N}},\boldsymbol{\mu}_t^{\mathbf{N}},\boldsymbol{\nu}_t^{\mathbf{N}}, N_{\mathrm{pop}}),
\end{split}\\
\label{eq_trans_law_2}
\begin{split}
x^{t+1,\mathbf{N}}_{j,k}&\sim \bar{P}_k(x_{j,k}^{t,\mathbf{N}}, u_{j,k}^{t,\mathbf{N}},{\boldsymbol{\bar{\mu}}}_t^{\mathbf{N}},{\boldsymbol{\bar{\nu}}}_t^{\mathbf{N}}, \mathbf{N})
\end{split}
\end{align}

To proceed with the analysis, we need to assume one of the following assumptions to be true.

\begin{assumption}
	\label{ass_1}
	(a)~$\forall k\in[K]$, the outputs of $r_k,P_k$ are independent of the last argument $N_{\mathrm{pop}}$. To simplify notations, $N_{\mathrm{pop}}$ can be dropped as argument from both the functions.
	\begin{align*}
	(b) |{r}_k(x, u, \boldsymbol{\mu}_1,\boldsymbol{\nu}_1)&|\leq M_R\\	
	(c) |{r}_k(x, u,\boldsymbol{\mu}_1,\boldsymbol{\nu}_1)&-{r}_k(x, u, \boldsymbol{\mu}_2,\boldsymbol{\nu}_2)|
	\leq L_R\left[|\boldsymbol{\mu}_1-\boldsymbol{\mu}_2|_{1} + |\boldsymbol{\nu}_1-\boldsymbol{\nu}_2|_{1}\right]\\
	(d)  |{P}_k(x, u, \boldsymbol{\mu}_1,\boldsymbol{\nu}_1&)-{P}_k(x, u, \boldsymbol{\mu}_2,\boldsymbol{\nu}_2)|_{1}\leq L_P\left[|\boldsymbol{\mu}_1-\boldsymbol{\mu}_2|_{1} + |\boldsymbol{\nu}_1-\boldsymbol{\nu}_2|_{1}\right]
	\end{align*}
	$\forall x\in \mathcal{X}, \forall u\in\mathcal{U},\forall \boldsymbol{\mu}_1, \boldsymbol{\mu}_2\in\mathcal{P}(\mathcal{X}\times[K]),\forall \boldsymbol{\nu}_1, \boldsymbol{\nu}_2 \in \mathcal{P}(\mathcal{U}\times[K])$, $\forall k\in[K]$. The terms $M_R,L_R,L_P$ denote some positive constants. The function $|.|_1$ indicates $L_1$-norm.
\end{assumption}

\begin{assumption}
	\label{ass_2}
	(a) $\forall k\in[K]$, the outputs of $\bar{r}_k$, $\bar{P}_k$ are independent of the last argument $\mathbf{N}$. For simplifying notations, $\mathbf{N}$ can be dropped as argument from both the functions.	
	\begin{align*}
	&(b)|\bar{r}_k(x, u, \boldsymbol{\bar{\mu}}_1,\boldsymbol{\bar{\nu}}_1)|\leq \bar{M}_R\\	
	&(c) |\bar{r}_k(x, u, \boldsymbol{\bar{\mu}}_1,\boldsymbol{\bar{\nu}}_1)-\bar{r}_k(x, u, \boldsymbol{\bar{\mu}}_2,\boldsymbol{\bar{\nu}}_2)|\leq \bar{L}_R\left[|\boldsymbol{\bar{\mu}}_1-\boldsymbol{\bar{\mu}}_2|_{1} + |\boldsymbol{\bar{\nu}}_1-\boldsymbol{\bar{\nu}}_2|_{1}\right]\\
	&(d)  |\bar{P}_k(x, u, \boldsymbol{\bar{\mu}}_1,\boldsymbol{\bar{\nu}}_1)-\bar{P}_k(x, u, \boldsymbol{\bar{\mu}}_2,\boldsymbol{\bar{\nu}}_2)|_{1}
	\leq \bar{L}_P\left[|\boldsymbol{\bar{\mu}}_1-\boldsymbol{\bar{\mu}}_2|_{1} + |\boldsymbol{\bar{\nu}}_1-\boldsymbol{\bar{\nu}}_2|_{1}\right]
	\end{align*}	
	$\forall x\in \mathcal{X}, \forall u\in\mathcal{U},\forall \boldsymbol{\bar{\mu}}_1, \boldsymbol{\bar{\mu}}_2\in\mathcal{P}^K(\mathcal{X}),\forall \boldsymbol{\bar{\nu}}_1, \boldsymbol{\bar{\nu}}_2 \in \mathcal{P}^K(\mathcal{U})$, $\forall k\in[K]$. The terms $\bar{M}_R, \bar{L}_R$ and $\bar{L}_P$ are constants.
\end{assumption}

Assumptions $\ref{ass_1}(a)$,  $\ref{ass_2}(a)$  state that the influence of the population on individual agents is summarized by the state and action distributions only and it does not vary with the scale of the population. In particular, Assumption $\ref{ass_1}(a)$ dictates that such influence is conveyed through joint state and action distributions across all classes which makes the reward and transition functions invariant to $N_{\mathrm{pop}}$. In contrast, Assumption $\ref{ass_2}(a)$ presumes that the joint influence of the whole population can be segregated based on the class it originated from. This makes the reward and  transition law invariant to the  population size of each individual class. For single class of agents (i.e., when $K=1$), both assumptions are identical. Scale invariance is one of the fundamental assumptions in the mean-field literature \citep{carmona2018probabilistic,gu_mean-field_2020,angiuli2020unified}.

Assumptions $\ref{ass_1}(b)$, $\ref{ass_2}(b)$ state that the reward functions are bounded while Assumptions $\ref{ass_1}(c)$, $\ref{ass_2}(c)$ and $\ref{ass_1}(d)$, $\ref{ass_2}(d)$ dictate that the reward functions and the transition probabilities are Lipschitz continuous w. r. t. their respective state and action distribution arguments. These assumptions are common in the literature \citep{carmona2018probabilistic,gu_mean-field_2020,angiuli2020unified}.

It is worthwhile to mention that for given $r_k$'s and $P_k$'s satisfying Assumption \ref{ass_1}, one can define equivalent $\bar{r}_k$'s and $\bar{P}_k$'s that satisfy Assumption \ref{ass_2} and vice versa. For example, in appendix \ref{app_loose_bounds}, we exhibit that if $r_k$'s and $P_k$'s satisfy Assumption 1 with Lipschitz constants $L_R, L_P$ respectively, then we can define equivalent $\bar{r}_k$'s and $\bar{P}_k$'s that satisfy Assumption \ref{ass_2} with constants $L_P\boldsymbol{\theta}_M^{-1}, L_Q\boldsymbol{\theta}_M^{-1}$ respectively where $\boldsymbol{\theta}_M^{-1}\triangleq \max_{k\in[K]}\{N_{\mathrm{pop}}/N_k\}$. Note that the modified `constants' are dependent on the population sizes of different classes. Therefore, if we have an approximation bound for Assumption 2, by injecting the values of the modified constants into the expression of that bound, we can obtain a bound for Assumption \ref{ass_1}. In appendix \ref{app_loose_bounds}, however, we demonstrate that such translated bounds are, in general, loose. This is primarily because, in the derivation of the bound for Assumption \ref{ass_2}, the Lipschitz constants are not treated as functions of the population sizes. Therefore, it cannot account for any stringent inequality that might be applicable due to the special structure of the translated functions. We can similarly argue why a translation from Assumption \ref{ass_2} to Assumption \ref{ass_1} may not produce a tight result. In summary, although the approximation result derived for one of the above assumptions can be cast, with slight modifications, as an approximation result for the other assumption, in general, such translated results are loose. To derive tighter bounds, it is therefore necessary to produce analysis for each of these assumptions separately. We shall establish our approximation result first with Assumption \ref{ass_1} and then with Assumption \ref{ass_2}.

\section{Policy, Value Function and Mean Field Limit under Assumption \ref{ass_1}}

\subsection{Policy and Value Function}

Recall that the distributions $\boldsymbol{\mu}_t^{\mathbf{N}}$ and $\boldsymbol{\nu}_t^{\mathbf{N}}$ defined by $(\ref{def_mu}), (\ref{def_nu})$ are elements of $\mathcal{P}(\mathcal{X}\times[K])$ and $\mathcal{P}(\mathcal{U}\times[K])$ respectively. Therefore, presuming Assumption \ref{ass_1} to be true, the reward function $r_k$ for $k$-th class of agents can be described as a map of the following form, $r_k: \mathcal{X}\times\mathcal{U}\times \mathcal{P}(\mathcal{X}\times[K])\times\mathcal{P}(\mathcal{U}\times[K])\rightarrow \mathbb{R}$. Similarly, the transition law $P_k$ can be described as, $P_k:\mathcal{X}\times\mathcal{U}\times \mathcal{P}(\mathcal{X}\times[K])\times\mathcal{P}(\mathcal{U}\times[K])\rightarrow\mathcal{P}(\mathcal{X})$.

A time-dependent decision rule $\pi_k^t$ for $k$-th class of agents is a map, $\pi_k^t:\mathcal{X}\times\mathcal{P}(\mathcal{X}\times[K])\rightarrow \mathcal{P}(\mathcal{U})$. In simple words, a decision rule $\pi_k^t$ states with what probability a certain action $u\in\mathcal{U}$ should be selected by any agent of $k$-th class at time $t$, given its own state and the state distribution across all classes at time $t$. A policy $\boldsymbol{\pi}\triangleq\{(\pi_k^t)_{k\in[K]}\}_{t\in\{0,1,\cdots\}}$ is defined as a sequence of decision rules over all classes of agents. For a policy $\boldsymbol{\pi}$ and given initial states $\mathbf{x}_0^{\mathbf{N}}$, the infinite-horizon $\gamma\in[0,1)$-discounted value of the policy $\boldsymbol{\pi}$ for $j$-th agent of $k$-th class is defined as
\begin{align}
\label{eq_10}
v_{j,k}^{\mathbf{N}}(\mathbf{x}_0^{\mathbf{N}},\boldsymbol{\pi}) =\mathbb{E}\left[ \sum_{t=0}^{\infty}  \gamma^t r_k(x_{j,k}^{t,\mathbf{N}},u_{j,k}^{t,\mathbf{N}}, \boldsymbol{\mu}_t^{\mathbf{N}}, \boldsymbol{\nu}_t^{\mathbf{N}}) \right],
\end{align}
where the expectation is taken over $u_{j,k}^{t,\mathbf{N}}\sim \pi^{t}_{k}(x_{j,k}^{t,\mathbf{N}},\boldsymbol{\mu}_t^{\mathbf{N}})$,  $x_{j,k}^{t+1,\mathbf{N}}\sim P_k(x_{j,k}^{t,\mathbf{N}},u_{j,k}^{t,\mathbf{N}}, \boldsymbol{\mu}_t^{\mathbf{N}}, \boldsymbol{\nu}_t^{\mathbf{N}})$.
Also, $\boldsymbol{\mu}_t^{\mathbf{N}}$, and $\boldsymbol{\nu}_t^{\mathbf{N}}$ are obtained from $\mathbf{x}_t^{\mathbf{N}}$ and $\mathbf{u}_t^{\mathbf{N}}$ respectively. The average infinite-horizon discounted value of policy $\boldsymbol{\pi}$ is defined as
\begin{align}
\label{eq_vN}
\begin{split}
&v^{\mathbf{N}}(\mathbf{x}_0^{\mathbf{N}},\boldsymbol{\pi}) \triangleq\dfrac{1}{N_{\mathrm{pop}}}\sum_{k\in[K]}\sum_{j=1}^{N_k} v_{j,k}^{\mathbf{N}}(\mathbf{x}_0^{\mathbf{N}},\boldsymbol{\pi})
\end{split}
\end{align}

In the next subsection, we discuss how to compute the mean-field limit of the empirical value  function $v^{\mathbf{N}}$. The following two observations will be useful in many of our forthcoming analyses. 
\begin{observation}
	\label{obs1}
	$\{u_{j,k}^{t,\mathbf{N}}\}_{j\in[N_k],k\in[K]}$ are independent conditioned on $\mathbf{x}_t^{\mathbf{N}}$,  $\forall t\in\{0,1,\cdots\}$. Specifically, for a given policy $\boldsymbol{\pi}$, and $\forall j\in[N_k], \forall j'\in[N_{k'}]$, $\forall k,k'\in[K]$, 
	\begin{align*}
	\mathbb{P}(u_{j,k}^{t,\mathbf{N}},u_{j',k'}^{t,\mathbf{N}}|\mathbf{x}_t^{\mathbf{N}})=\mathbb{P}(u_{j,k}^{t,\mathbf{N}}|\mathbf{x}_t^{\mathbf{N}})\mathbb{P}(u_{j',k'}^{t,\mathbf{N}}|\mathbf{x}_t^{\mathbf{N}})
	\end{align*}
\end{observation}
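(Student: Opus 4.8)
The plan is to reduce the claimed factorization to the elementary fact that, once the conditioning information is fixed, a stochastic policy generates each agent's action through its own independent randomization. The observation is essentially a statement about the \emph{mechanism} by which actions are sampled, so the bulk of the work is in making that mechanism precise rather than in any nontrivial estimate.

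First I would record that, by definition $(\ref{def_mu})$, the empirical state distribution $\boldsymbol{\mu}_t^{\mathbf{N}}$ is a deterministic (measurable) function of the joint state profile $\mathbf{x}_t^{\mathbf{N}}$. Consequently, conditioning on $\mathbf{x}_t^{\mathbf{N}}$ is equivalent to conditioning on the pair $(\mathbf{x}_t^{\mathbf{N}}, \boldsymbol{\mu}_t^{\mathbf{N}})$; in particular each individual state $x_{j,k}^{t,\mathbf{N}}$ and the mean field $\boldsymbol{\mu}_t^{\mathbf{N}}$ are both frozen by this conditioning. Next I would make the sampling explicit: under the policy $\boldsymbol{\pi}$, the action of agent $j$ of class $k$ is drawn as $u_{j,k}^{t,\mathbf{N}} \sim \pi_k^t(x_{j,k}^{t,\mathbf{N}}, \boldsymbol{\mu}_t^{\mathbf{N}})$, which can be represented as $u_{j,k}^{t,\mathbf{N}} = g(x_{j,k}^{t,\mathbf{N}}, \boldsymbol{\mu}_t^{\mathbf{N}}, \xi_{j,k}^t)$ for a deterministic map $g$ and a family $\{\xi_{j,k}^t\}_{j,k}$ of agentwise-independent randomization variables, themselves independent of $\mathbf{x}_t^{\mathbf{N}}$. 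This is the standard convention for executing a stochastic policy across a population, and it immediately yields the conditional marginal $\mathbb{P}(u_{j,k}^{t,\mathbf{N}} = u \mid \mathbf{x}_t^{\mathbf{N}}) = \pi_k^t(x_{j,k}^{t,\mathbf{N}}, \boldsymbol{\mu}_t^{\mathbf{N}})(u)$, a deterministic function of $\mathbf{x}_t^{\mathbf{N}}$.

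The factorization then follows for any two agents $(j,k)$ and $(j',k')$ by integrating the joint law over the two noise variables $\xi_{j,k}^t$ and $\xi_{j',k'}^t$: since these are mutually independent and independent of $\mathbf{x}_t^{\mathbf{N}}$, Fubini gives $\mathbb{P}(u_{j,k}^{t,\mathbf{N}}, u_{j',k'}^{t,\mathbf{N}} \mid \mathbf{x}_t^{\mathbf{N}}) = \mathbb{P}(u_{j,k}^{t,\mathbf{N}} \mid \mathbf{x}_t^{\mathbf{N}})\,\mathbb{P}(u_{j',k'}^{t,\mathbf{N}} \mid \mathbf{x}_t^{\mathbf{N}})$, which is exactly the assertion. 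The single point deserving care — and the only place where the argument is more than bookkeeping — is ruling out a hidden coupling through the shared argument $\boldsymbol{\mu}_t^{\mathbf{N}}$, which enters \emph{both} agents' policies simultaneously. This is precisely what the first step resolves: because $\boldsymbol{\mu}_t^{\mathbf{N}}$ is $\mathbf{x}_t^{\mathbf{N}}$-measurable, it is a constant under the conditioning rather than a further source of randomness, so the common dependence of the two policies on $\boldsymbol{\mu}_t^{\mathbf{N}}$ does not create statistical dependence between the resulting actions.
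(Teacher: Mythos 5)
Your proof is correct, and it matches the paper's (implicit) justification: the paper states this as an unproven Observation that follows directly from the model, namely that each agent draws its action from $\pi_k^t(x_{j,k}^{t,\mathbf{N}},\boldsymbol{\mu}_t^{\mathbf{N}})$ using its own independent randomization, and that $\boldsymbol{\mu}_t^{\mathbf{N}}$ is a deterministic function of $\mathbf{x}_t^{\mathbf{N}}$ by $(\ref{def_mu})$, so it is frozen by the conditioning. Your explicit randomization-variable representation and the remark about the shared argument $\boldsymbol{\mu}_t^{\mathbf{N}}$ not inducing a hidden coupling are exactly the right points to make this rigorous.
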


\begin{observation}
	$\{x_{j,k}^{t+1,\mathbf{N}}\}_{j\in[N_k],k\in[K]}$ are independent conditioned on $\mathbf{x}_t^{\mathbf{N}},\mathbf{u}_t^{\mathbf{N}}$, $\forall t\in\{0,1,\cdots\}$.
	\label{obs2}
\end{observation}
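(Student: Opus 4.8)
The plan is to reduce Observation \ref{obs2} to the same factorization argument that underlies Observation \ref{obs1}, exploiting that conditioning on the full configuration $(\mathbf{x}_t^{\mathbf{N}}, \mathbf{u}_t^{\mathbf{N}})$ freezes every quantity through which the agents interact. First I would note that the empirical distributions $\boldsymbol{\mu}_t^{\mathbf{N}}$ and $\boldsymbol{\nu}_t^{\mathbf{N}}$, as defined in $(\ref{def_mu})$ and $(\ref{def_nu})$, are deterministic functions of $\mathbf{x}_t^{\mathbf{N}}$ and $\mathbf{u}_t^{\mathbf{N}}$ respectively. Hence, once we condition on $(\mathbf{x}_t^{\mathbf{N}}, \mathbf{u}_t^{\mathbf{N}})$, the pair $(\boldsymbol{\mu}_t^{\mathbf{N}}, \boldsymbol{\nu}_t^{\mathbf{N}})$ is also fixed and carries no residual randomness.

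Next I would invoke the transition law $(\ref{eq_trans_law})$: for each agent $(j,k)$ the next state satisfies $x_{j,k}^{t+1,\mathbf{N}} \sim P_k(x_{j,k}^{t,\mathbf{N}}, u_{j,k}^{t,\mathbf{N}}, \boldsymbol{\mu}_t^{\mathbf{N}}, \boldsymbol{\nu}_t^{\mathbf{N}})$. Under the conditioning of the previous step, all four arguments of $P_k$ are deterministic, so the next state of agent $(j,k)$ is a draw from a fixed probability measure on $\mathcal{X}$ that depends on the other agents only through the now-frozen mean-field terms. Since the generative model supplies each agent with its own independent transition noise, the joint conditional law factorizes as
\begin{align*}
\mathbb{P}\left(\{x_{j,k}^{t+1,\mathbf{N}}\}_{j,k} \,\big|\, \mathbf{x}_t^{\mathbf{N}}, \mathbf{u}_t^{\mathbf{N}}\right) = \prod_{k\in[K]}\prod_{j=1}^{N_k} P_k\left(x_{j,k}^{t+1,\mathbf{N}} \,\big|\, x_{j,k}^{t,\mathbf{N}}, u_{j,k}^{t,\mathbf{N}}, \boldsymbol{\mu}_t^{\mathbf{N}}, \boldsymbol{\nu}_t^{\mathbf{N}}\right),
\end{align*}
which is exactly the asserted conditional independence.

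The step I expect to require the most care is making explicit that, given $(\mathbf{x}_t^{\mathbf{N}}, \mathbf{u}_t^{\mathbf{N}})$, the only channel of inter-agent coupling inside $P_k$ is through $(\boldsymbol{\mu}_t^{\mathbf{N}}, \boldsymbol{\nu}_t^{\mathbf{N}})$, which the conditioning has rendered constant. This is precisely where Assumption \ref{ass_1}(a) enters: because $P_k$ is scale-invariant and depends on the population solely via these distributions, freezing the distributions decouples the transitions. The independence of the per-agent transition noise is a structural feature of the model rather than a derived fact, so I would state it as the generative convention and then verify that the displayed factorization is consistent with it; the remainder is the routine observation that a product of fixed kernels is the law of conditionally independent random variables, mirroring the argument for Observation \ref{obs1}.
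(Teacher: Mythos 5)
Your argument is correct and coincides with the paper's own (implicit) justification: the paper states Observation \ref{obs2} without proof precisely because, once $(\mathbf{x}_t^{\mathbf{N}},\mathbf{u}_t^{\mathbf{N}})$ is fixed, the empirical distributions $\boldsymbol{\mu}_t^{\mathbf{N}},\boldsymbol{\nu}_t^{\mathbf{N}}$ are deterministic and each agent's next state is an independent draw from its own kernel $P_k$, exactly as you write. Your explicit acknowledgment that the per-agent independence of transition noise is a modeling convention rather than a derived fact is the right way to frame it.
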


\subsection{Mean Field Limit for $K$ Classes}
\label{section_mfc1}

In the mean-field limit, i.e., when $N_k\rightarrow\infty$, $\forall k\in[K]$, it is enough to consider a representative for each of the classes. The state and action of the representative of $k$-th class at time $t$ are indicated as $x_k^t$$\in\mathcal{X}$ and $u_k^t\in\mathcal{U}$ respectively. The joint distribution of states and actions of all classes of agents are symbolized as $\boldsymbol{\mu}_t\in\mathcal{P}(\mathcal{X}\times[K])$ and $\boldsymbol{\nu}_t\in\mathcal{P}(\mathcal{U}\times[K])$.
If Assumption \ref{ass_1} holds, then the reward and the transition probability law of the representative of $k$-th class at time $t$ can be expressed as, $r_k(x_k^t,u_k^t,\boldsymbol{\mu}_t,\boldsymbol{\nu}_t)$ and $P_k(x_k^t,u_k^t,\boldsymbol{\mu}_t,\boldsymbol{\nu}_t)$ respectively. For a given policy, $\boldsymbol{\pi}\triangleq\{\boldsymbol{\pi}_t\}_{t\in\{0,1,\dots\}}$, $\boldsymbol{\pi}_t \triangleq\{(\pi_k^t)_{k\in[K]}\}$, where $\{\pi_k^t\}_{t\in\{0,1,\dots\}}$ is a sequence of decision rules for $k$-th class, the action distribution at time $t$ can be obtained as follows.
\begin{align}
\begin{split}
\label{eq_nu_v}
\boldsymbol{\nu}_t=\nu^{\mathrm{MF}}(\boldsymbol{\mu}_t, \boldsymbol{\pi}_t)&\triangleq\{\nu_k^{\mathrm{MF}}(\boldsymbol{\mu}_t, \boldsymbol{\pi}_t)\}_{k\in[K]},\\
\nu^{\mathrm{MF}}_k(\boldsymbol{\mu}_t, \boldsymbol{\pi}_t) &\triangleq \sum_{x\in\mathcal{X}} \pi_k^t(x,\boldsymbol{\mu}_t)\boldsymbol{\mu}_t(x, k)
\end{split}
\end{align}

Using the definition of $\nu^{\mathrm{MF}}$, the evolution of the state distribution can be written as
\begin{align}
\begin{split}
\boldsymbol{\mu}_{t+1}=&P^{\mathrm{MF}}(\boldsymbol{\mu}_t, \boldsymbol{\pi}_t)\triangleq\{P_k^{\mathrm{MF}}(\boldsymbol{\mu}_t, \boldsymbol{\pi}_t)\}_{k\in [K]},\\
&P_k^{\mathrm{MF}}(\boldsymbol{\mu}_t, \boldsymbol{\pi}_t) \triangleq \sum_{x\in\mathcal{X}}\sum_{u\in\mathcal{U}}\boldsymbol{\mu}_t(x,k)\pi_k^t(x,\boldsymbol{\mu}_t)(u) 
\times P_k\left(x,u,\boldsymbol{\mu}_t,\nu^{\mathrm{MF}}(\boldsymbol{\mu}_t,\boldsymbol{\pi}_t)\right)
\label{eq_phi}
\end{split}
\end{align}

Finally, the average reward of $k$-th class is computed as
\begin{align}
\begin{split}
r_k^{\mathrm{MF}}(\boldsymbol{\mu}_t, \boldsymbol{\pi}_t) \triangleq \sum_{x\in\mathcal{X}}\sum_{u\in\mathcal{U}}\boldsymbol{\mu}_t(x,k)\pi_k^t(x,\boldsymbol{\mu}_t)(u)\times r_k(x,u,\boldsymbol{\mu}_t,\nu^{\mathrm{MF}}(\boldsymbol{\mu}_t,\boldsymbol{\pi}_t))
\label{eqr}
\end{split}
\end{align}

For a given initial state distribution $\boldsymbol{\mu}_0$, and a policy $\boldsymbol{\pi}$, the infinite-horizon $\gamma$-discounted average reward is
\begin{align}
\begin{split}
v^{\mathrm{MF}}(\boldsymbol{\mu}_0,\boldsymbol{\pi})= \sum_{k\in[K]}\sum_{t=0}^{\infty}\gamma^t r_k^{\mathrm{MF}}(\boldsymbol{\mu}_t, \boldsymbol{\pi}_t)
\label{eqvr}
\end{split}
\end{align}

In the following section, we show how well the function $v^{\mathbf{N}}$, given by $(\ref{eq_vN})$  can be approximated by $v^{\mathrm{MF}}$ as the population sizes, $\mathbf{N}$ and the cardinality of state and action spaces, indicated by $|\mathcal{X}|$ and $|\mathcal{U}|$ respectively, become large.

\section{MFC as an Approximation of MARL with Assumption \ref{ass_1}}

To establish the approximation result, we need to restrict the policies to a set $\Pi$ such that the following assumption holds.

\begin{assumption}
	\label{assumptions}
	Every policy $ \boldsymbol{\pi}\triangleq\{(\pi_k^t)_{k\in[K]}\}_{t\in\{0,1,\cdots\}}$ in $\Pi$ is such that, $\forall x\in \mathcal{X}$,$\forall \boldsymbol{\mu}_1, \boldsymbol{\mu}_2\in\mathcal{P}(\mathcal{X}\times[K])$, $\forall k\in[K]$	
	\begin{align*}
	&  |\pi_k^t(x, \boldsymbol{\mu}_1)-\pi_k^t(x, \boldsymbol{\mu}_2)|_{1}\leq L_{Q}|\boldsymbol{\mu}_1-\boldsymbol{\mu}_2|_{1}
	\end{align*}
	for some positive real $L_Q$. 
\end{assumption}

Assumption \ref{assumptions} states that the decision rules $\pi_k^t$, associated with any policy in $\Pi$ are Lipschitz continuous w. r. t. the state distribution argument. Such assumption holds in practice because the decision rules are commonly realised by Neural Networks possessing bounded weights \citep{pasztor_efficient_2021}. Below we state our first result.

\begin{theorem}
	\label{thm_1}
	Let $\mathbf{x}_0^{\mathbf{N}}$ be the initial states and $\boldsymbol{\mu}_0$ be their corresponding distribution. If $v^{\mathbf{N}}$ is the empirical value function given by $(\ref{eq_vN})$ and $v^{\mathrm{MF}}$ is its mean-field limit defined in $(\ref{eqvr})$, then for any policy, $\boldsymbol{\pi}\in\Pi$,  the following inequality holds if $\gamma S_P <1$ and Assumptions \ref{ass_1}, \ref{assumptions} are true. 
	\begin{align}
	\label{eq_28_new}
	\begin{split}
	\Big|v^{\mathbf{N}}(\mathbf{x}_0^{\mathbf{N}},{\boldsymbol{\pi}})&-v^{\mathrm{MF}}(\boldsymbol{\mu}_0,\boldsymbol{\pi})\Big|
	\leq
	\dfrac{C_R}{1-\gamma}\sqrt{|\mathcal{U}|} \dfrac{1}{N_{\mathrm{pop}}}\left(\sum_{k\in[K]}\sqrt{N_k}\right) \\&+ C_P\left(\dfrac{S_R}{S_P-1}\right)\left[\sqrt{|\mathcal{X}|}+\sqrt{|\mathcal{U}|}\right] \dfrac{1}{N_{\mathrm{pop}}}\left(\sum_{k\in[K]}\sqrt{N_k}\right)\times\left[\dfrac{1}{1-\gamma S_P}-\dfrac{1}{1-\gamma}\right]
	\end{split}
	\end{align}
	where $S_R\triangleq M_R(1+L_Q)+L_R(2+L_Q)$, $S_P\triangleq (1+L_Q)+L_P(2+L_Q)$, $C_R\triangleq M_R+L_R$, $C_P\triangleq 2+L_P$.
\end{theorem}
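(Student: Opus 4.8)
The plan is to write both value functions as discounted sums of expected per-step rewards and to control the two sources of discrepancy separately: (i) the stochastic fluctuation of the empirical distributions $\boldsymbol{\mu}_t^{\mathbf{N}},\boldsymbol{\nu}_t^{\mathbf{N}}$ around their conditional means at a \emph{fixed} empirical state distribution, and (ii) the deterministic drift of $\boldsymbol{\mu}_t^{\mathbf{N}}$ away from the mean-field trajectory $\boldsymbol{\mu}_t$, which is propagated and amplified through the dynamics. Since $v^{\mathbf{N}}=\sum_{t}\gamma^t\frac{1}{N_{\mathrm{pop}}}\sum_{k,j}\mathbb{E}[r_k(x_{j,k}^{t,\mathbf{N}},u_{j,k}^{t,\mathbf{N}},\boldsymbol{\mu}_t^{\mathbf{N}},\boldsymbol{\nu}_t^{\mathbf{N}})]$ and $v^{\mathrm{MF}}=\sum_t\gamma^t\sum_k r_k^{\mathrm{MF}}(\boldsymbol{\mu}_t,\boldsymbol{\pi}_t)$, it suffices to bound the per-step error and sum the resulting geometric series.

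For the per-step reward error I would insert the intermediate quantity $\sum_k r_k^{\mathrm{MF}}(\boldsymbol{\mu}_t^{\mathbf{N}},\boldsymbol{\pi}_t)$, i.e., the mean-field reward evaluated at the \emph{empirical} state distribution, and apply the triangle inequality. Writing the empirical average reward as $\sum_{k,x,u}\boldsymbol{\zeta}_t^{\mathbf{N}}(x,u,k)\,r_k(x,u,\boldsymbol{\mu}_t^{\mathbf{N}},\boldsymbol{\nu}_t^{\mathbf{N}})$, where $\boldsymbol{\zeta}_t^{\mathbf{N}}$ is the empirical joint state-action distribution, the gap to $\sum_k r_k^{\mathrm{MF}}(\boldsymbol{\mu}_t^{\mathbf{N}},\boldsymbol{\pi}_t)$ splits into a term weighted by the boundedness constant $M_R$ (controlling $\boldsymbol{\zeta}_t^{\mathbf{N}}$ against its conditional mean $\boldsymbol{\mu}_t^{\mathbf{N}}(x,k)\pi_k^t(x,\boldsymbol{\mu}_t^{\mathbf{N}})(u)$) and a term weighted by $L_R$ (controlling $\boldsymbol{\nu}_t^{\mathbf{N}}$ against $\nu^{\mathrm{MF}}(\boldsymbol{\mu}_t^{\mathbf{N}},\boldsymbol{\pi}_t)$). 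Conditioning on $\mathbf{x}_t^{\mathbf{N}}$, Observation~\ref{obs1} makes the actions independent, so both terms are fluctuations of sums of independent Bernoulli indicators; Lemma~\ref{simple_lemma}, combined with a Cauchy--Schwarz passage from the $L_2$ to the $L_1$ norm over $\mathcal{U}$, bounds them by $C_R\sqrt{|\mathcal{U}|}\frac{1}{N_{\mathrm{pop}}}\sum_k\sqrt{N_k}$ with $C_R=M_R+L_R$. The remaining piece $\big|\sum_k r_k^{\mathrm{MF}}(\boldsymbol{\mu}_t^{\mathbf{N}},\boldsymbol{\pi}_t)-\sum_k r_k^{\mathrm{MF}}(\boldsymbol{\mu}_t,\boldsymbol{\pi}_t)\big|$ is handled by showing $r_k^{\mathrm{MF}}$ is Lipschitz in $\boldsymbol{\mu}$ with constant $S_R=M_R(1+L_Q)+L_R(2+L_Q)$; this constant emerges from tracking the three ways $\boldsymbol{\mu}$ enters \eqref{eqr} --- the explicit weight $\boldsymbol{\mu}(x,k)$, the policy $\pi_k^t(x,\boldsymbol{\mu})$ (Lipschitz by Assumption~\ref{assumptions}), and the arguments of $r_k$ including the induced action distribution $\nu^{\mathrm{MF}}$ --- giving a per-step reward error of at most $C_R\sqrt{|\mathcal{U}|}\frac{1}{N_{\mathrm{pop}}}\sum_k\sqrt{N_k}+S_R A_t$, where $A_t\triangleq\mathbb{E}\,|\boldsymbol{\mu}_t^{\mathbf{N}}-\boldsymbol{\mu}_t|_1$.

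The core of the argument is then a recursion for $A_t$. Decomposing $\boldsymbol{\mu}_{t+1}^{\mathbf{N}}-\boldsymbol{\mu}_{t+1}$ through the intermediate $P^{\mathrm{MF}}(\boldsymbol{\mu}_t^{\mathbf{N}},\boldsymbol{\pi}_t)$, the Lipschitz continuity of $P^{\mathrm{MF}}$ --- whose constant is $S_P=(1+L_Q)+L_P(2+L_Q)$ by the same bookkeeping as for $S_R$ --- contributes $S_P A_t$, while the fluctuation of $\boldsymbol{\mu}_{t+1}^{\mathbf{N}}$ about $P^{\mathrm{MF}}(\boldsymbol{\mu}_t^{\mathbf{N}},\boldsymbol{\pi}_t)$ is bounded using Observation~\ref{obs2} (independence of the next states given $\mathbf{x}_t^{\mathbf{N}},\mathbf{u}_t^{\mathbf{N}}$), the action-sampling fluctuation already discussed, and Lemma~\ref{simple_lemma} again. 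Because the next-state noise lives over $\mathcal{X}$ and the action noise over $\mathcal{U}$, and since each $P_k$ is a probability vector (so summing over the target state collapses its contribution to unit mass), these combine into $C_P\sqrt{|\mathcal{X}||\mathcal{U}|}\frac{1}{N_{\mathrm{pop}}}\sum_k\sqrt{N_k}$ with $C_P=2+L_P$. This yields $A_{t+1}\le C_P\sqrt{|\mathcal{X}||\mathcal{U}|}\frac{1}{N_{\mathrm{pop}}}\sum_k\sqrt{N_k}+S_P A_t$ with $A_0=0$ (the initial empirical distribution equals $\boldsymbol{\mu}_0$). Unrolling gives $A_t\le C_P\sqrt{|\mathcal{X}||\mathcal{U}|}\frac{1}{N_{\mathrm{pop}}}(\sum_k\sqrt{N_k})\frac{S_P^t-1}{S_P-1}$; substituting into $\sum_t\gamma^t[C_R\sqrt{|\mathcal{U}|}\frac{1}{N_{\mathrm{pop}}}\sum_k\sqrt{N_k}+S_R A_t]$ and summing the two geometric series $\sum_t\gamma^t$ and $\sum_t(\gamma S_P)^t$ (the latter finite precisely because $\gamma S_P<1$) reproduces the two stated terms, including the bracket $\big[\frac{1}{1-\gamma S_P}-\frac{1}{1-\gamma}\big]$.

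I would expect the main obstacle to be the fluctuation estimates rather than the Lipschitz bookkeeping. The subtlety is that within a class the agents are exchangeable but across the population they are not identically distributed, so the concentration must be established for sums of independent but non-identical Bernoulli indicators; the gain of $\sqrt{|\mathcal{U}|}$ (respectively $\sqrt{|\mathcal{X}|}$) rather than the naive $|\mathcal{U}|$ hinges on the constraint $\sum_u p_u=1$ controlling the total variance of the categorical count, which is exactly the content of Lemma~\ref{simple_lemma}, and on applying Cauchy--Schwarz class-by-class so that the per-class fluctuation scales as $\sqrt{N_k}/N_{\mathrm{pop}}$ before summing over $k$.
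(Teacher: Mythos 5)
Your proposal is correct and follows essentially the same route as the paper's proof: the same decomposition through the intermediate quantity $\sum_k r_k^{\mathrm{MF}}(\boldsymbol{\mu}_t^{\mathbf{N}},\boldsymbol{\pi}_t)$, the same per-step fluctuation bounds via conditional independence (Observations \ref{obs1}, \ref{obs2}) and Lemma \ref{simple_lemma}, the same Lipschitz constants $S_R,S_P$ for $r_k^{\mathrm{MF}}$ and $P^{\mathrm{MF}}$, and the same recursion $A_{t+1}\le C_P\sqrt{|\mathcal{X}||\mathcal{U}|}\frac{1}{N_{\mathrm{pop}}}\sum_k\sqrt{N_k}+S_P A_t$ unrolled and summed against $\gamma^t$. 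The constants and the final bracket $\bigl[\tfrac{1}{1-\gamma S_P}-\tfrac{1}{1-\gamma}\bigr]$ all match the paper's derivation.
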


Theorem \ref{thm_1} dictates that the empirical value function, $v^{\mathbf{N}}$, can be approximated by its mean-field limit, $v^{\mathrm{MF}}$,  within an error margin of $\mathcal{O}\left(\frac{1}{N_{\mathrm{pop}}}\sum_{k\in[K]} \sqrt{N_k}\right)$. In a special case, where the number of agents in each classes are equal, the error is $\mathcal{O}(\sqrt{K/N_{\mathrm{pop}}})$. Additionally, Theorem \ref{thm_1} also dictates how the error varies as a function of the   state-action cardinality. For example, given other things as constant, the error is $\mathcal{O}\left(\sqrt{|\mathcal{X}|}+\sqrt{|\mathcal{U}|}\right)$.

The implication of this result is profound. It essentially assures that, if we can come up with an algorithm to compute the optimal MFC policy, then the obtained policy is guaranteed to be close to the optimal MARL policy. In practice, an MFC problem is much easier to solve than a MARL problem, primarily because in MFC, we are needed to keep track of only  one representative agent from each class. Therefore, if the number of agents is large and individual state-action spaces are relatively small, MFC can be utilized as an easier route to obtain an approximate MARL solution. However, Theorem \ref{thm_1} also suggests that the error of approximation increases with the number of classes, $K$. As a consequence, if the level of heterogeneity in the population is too high, then MFC may not be a good approximation of MARL. 

\subsection{Proof Outline} A detailed proof of Theorem \ref{thm_1} is provided in Appendix \ref{app_th_1}. Here we present a brief outline.

\textit{Step 0:} 
$v^{\mathbf{N}}$ and  $v^{\mathrm{MF}}$ respectively are time-discounted average rewards for a finite agent system and that of an infinite agent system. To estimate their difference, we need to evaluate the difference between mean rewards of these systems at a given time $t$.

\textit{Step 1:} To achieve that, we introduce an intermediate system $X$ whose state-action evolutions are identical to the $\mathbf{N}$-agent system upto time $t$, but after that, it follows the update process of an infinite agent system. Our first task is to bound the difference between the average reward of system $X$ and that of the $\mathbf{N}$-agent system at time $t$ (Lemma \ref{lemma_r}).

\textit{Step 2:} The next task is to estimate the difference between the average reward of system $X$ and the mean-field reward at time $t$. Using the continuity of mean-field reward function (Lemma \ref{lemma2}), this difference can be bounded by a multiple of the difference between the empirical state distribution, $\boldsymbol{\mu}_t^{\mathbf{N}}$ of the $\mathbf{N}$-agent system and the mean-field distribution, $\boldsymbol{\mu}_t$.

\textit{Step 3:} The difference between $\boldsymbol{\mu}_t^{\mathbf{N}}$ and $\boldsymbol{\mu}_t$ can be obtained in a recursive manner. To achieve this, we introduce another intermediate system $Y$ whose state, action distributions upto time $t-1$ are same as the $\mathbf{N}$-agent system, but after that, those evolve following mean-field updates. First, we evaluate the difference between $\boldsymbol{\mu}_t^{\mathbf{N}}$ and the state distribution of the system $Y$ at time $t$ (Lemma \ref{lemma_mu}). 

\textit{Step 4:} Using the continuity of mean-field state-transition function (Lemma \ref{lemma3}), the difference between $\boldsymbol{\mu}_t$ and the state distribution of system $Y$ at $t$ is upper bounded by a multiple of the difference between $\boldsymbol{\mu}_{t-1}^{\mathbf{N}}$ and $\boldsymbol{\mu}_{t-1}$.

\textit{Step 5:} Combining the above results, the difference between the average finite-agent reward and the mean-field reward at time $t$ can be written as a function of $t$. 

\textit{Step 6:}  Taking a $\gamma$-discounted sum of the these estimate errors over $t$, we arrive at the desired result.

\section{MFC as an Approximation of MARL  with Assumption \ref{ass_2}}
\label{sec_ass_2}

We shall now discuss how well the empirical value function is approximated by its mean-field counterpart if Assumption \ref{ass_2} is true. The empirical state and action distributions of $k$-th class at time $t$ are denoted as $\boldsymbol{\bar{\mu}}_t^{\mathbf{N}}(\cdot,k)$, $\boldsymbol{\bar{\nu}}_t^{\mathbf{N}}(\cdot,k)$ and defined by $(\ref{eq5}),(\ref{eq6})$ respectively. Clearly, $\boldsymbol{\bar{\mu}}_t^{\boldsymbol{N}}\in\mathcal{P}^K(\mathcal{X})$ and $\boldsymbol{\bar{\nu}}_t^{\mathbf{N}}\in\mathcal{P}^K(\mathcal{U})$ where $\mathcal{P}^K(\cdot)\triangleq \mathcal{P}(\cdot)\times\cdots\times\mathcal{P}(\cdot)$.

The reward function, $\bar{r}_k$ and the transition probability law, $\bar{P}_k$ of $k$-th class of agents are defined to be functions of the following forms, $\bar{r}_k:\mathcal{X}\times\mathcal{U}\times\mathcal{P}^K(\mathcal{X})\times \mathcal{P}^K(\mathcal{U})\rightarrow \mathbb{R}$ and $\bar{P}_k:\mathcal{X}\times\mathcal{U}\times\mathcal{P}^K(\mathcal{X})\times \mathcal{P}^K(\mathcal{U})\rightarrow \mathcal{P}(\mathcal{X})$. Similarly, a policy $\boldsymbol{\bar{\pi}}\triangleq \{(\bar{\pi}_k^t)_{k\in[K]}\}_{t\in\{0,1,\cdots\}}$ is defined as a sequence of collection of decision rules, $\bar{\pi}_k^t$ where $\bar{\pi}_k^t:\mathcal{X}\times \mathcal{P}^K(\mathcal{X})\rightarrow \mathcal{P}(\mathcal{U})$. Similar to Assumption \ref{assumptions}, we restrict the policies to a set $\bar{\Pi}$ such that the decision rules associated with each elements of $\bar{\Pi}$ are Lipschitz continuous. This is formally expressed as follows.
\begin{assumption}
	\label{assumptions_2}
	Every policy $ \boldsymbol{\bar{\pi}}\triangleq\{(\bar{\pi}_k^t)_{k\in[K]}\}_{t\in\{0,1,\cdots\}}$ in $\bar{\Pi}$ is such that, $\forall x\in \mathcal{X}$,$\forall \boldsymbol{\bar{\mu}}_1, \boldsymbol{\bar{\mu}}_2\in\mathcal{P}^K(\mathcal{X})$, $\forall k\in[K]$	
	\begin{align*}
	&  |\bar{\pi}_k^t(x, \boldsymbol{\bar{\mu}}_1)-\bar{\pi}_k^t(x, \boldsymbol{\bar{\mu}}_2)|_{1}\leq \bar{L}_{Q}|\boldsymbol{\bar{\mu}}_1-\boldsymbol{\bar{\mu}}_2|_{1}
	\end{align*}
	for some positive real $\bar{L}_Q$.
	\label{ass_4}
\end{assumption}

For initial states $\mathbf{x}_0^{\mathbf{N}}$, the empirical value of a given policy $\boldsymbol{\bar{\pi}}$  is defined as follows.
\begin{align}
\label{eq_18}
\begin{split}
\bar{v}^{\mathbf{N}}(\mathbf{x}_0^{\mathbf{N}}, \boldsymbol{\bar{\pi}})=\dfrac{1}{N_{\mathrm{pop}}}\sum_{k\in[K]}\sum_{j=1}^{N_k}\mathbb{E}\left[ \sum_{t=0}^{\infty}  \gamma^t \bar{r}_k(x_{j,k}^{t,\mathbf{N}},u_{j,k}^{t,\mathbf{N}}, \boldsymbol{\bar{\mu}}_t^{\mathbf{N}}, \boldsymbol{\bar{\nu}}_t^{\mathbf{N}}) \right]
\end{split}
\end{align}
where the expectation is taken over $u_{j,k}^{t,\mathbf{N}}\sim \bar{\pi}^{t}_{k}(x_{j,k}^{t,\mathbf{N}},\boldsymbol{\bar{\mu}}_t^{\mathbf{N}})$,  $x_{j,k}^{t+1,\mathbf{N}}\sim \bar{P}_k(x_{j,k}^{t,\mathbf{N}},u_{j,k}^{t,\mathbf{N}}, \boldsymbol{\bar{\mu}}_t^{\mathbf{N}}, \boldsymbol{\bar{\nu}}_t^{\mathbf{N}})$. Also, $\boldsymbol{\bar{\mu}}_t^{\mathbf{N}}$, $\boldsymbol{\bar{\nu}}_t^{\mathbf{N}}$ are obtained from $\mathbf{x}_t^{\mathbf{N}}$, $\mathbf{u}_t^{\mathbf{N}}$. If $\bar{v}^{\mathrm{MF}}$ denotes the mean-field limit of $\bar{v}^{\mathbf{N}}$, then the following approximation result holds.

\begin{theorem}
	\label{thm_2}
	If $\mathbf{x}_0^{\mathbf{N}}$ are initial states and $\boldsymbol{\bar{\mu}}_0\in\mathcal{P}^K(\mathcal{X})$ is the resulting distribution,  then under Assumptions \ref{ass_2}, \ref{ass_4}, $\forall \boldsymbol{\bar{\pi}}\in \bar{\Pi}$,
	\begin{align}
	\label{eq_within_thm2}
	\begin{split}
	\Big|\bar{v}^{\mathbf{N}}(\mathbf{x}_0^{\mathbf{N}},{\boldsymbol{\bar{\pi}}})&-\bar{v}^{\mathrm{MF}}(\boldsymbol{\bar{\mu}}_0,\boldsymbol{\bar{\pi}})\Big|
	\leq
	\dfrac{\bar{C}_R}{1-\gamma}\sqrt{|\mathcal{U}|} \left(\sum_{k\in[K]}\dfrac{1}{\sqrt{N_k}}\right) \\+ &\bar{C}_P\left(\dfrac{\bar{S}_R}{\bar{S}_P-1}\right)\left[\sqrt{|\mathcal{X}|}+\sqrt{|\mathcal{U}|}\right] \left(\sum_{k\in[K]}\dfrac{1}{\sqrt{N_k}}\right)\times\left[\dfrac{1}{1-\gamma \bar{S}_P}-\dfrac{1}{1-\gamma}\right]
	\end{split}
	\end{align}
	whenever $\gamma \bar{S}_P<1$ where $\bar{v}^{\mathbf{N}}(\cdot,\cdot)$ denotes the empirical value function defined in $(\ref{eq_18})$ and $\bar{v}^{\mathrm{MF}}(\cdot,\cdot)$ is its mean-field limit. The other terms are given as follows: $\bar{C}_R\triangleq \bar{M}_R+\bar{L}_R$, $\bar{C}_P\triangleq 2+K\bar{L}_P$, $\bar{S}_R\triangleq \bar{M}_R(1+\bar{L}_Q)+\bar{L}_R(2+K\bar{L}_Q)$, and $\bar{S}_P\triangleq (1+K\bar{L}_Q)+K\bar{L}_P(2+K\bar{L}_Q)$.
\end{theorem}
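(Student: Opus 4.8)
The plan is to mirror the six-step argument sketched for Theorem~\ref{thm_1}, but everywhere replacing the population-normalized distributions $\boldsymbol{\mu}_t^{\mathbf{N}},\boldsymbol{\nu}_t^{\mathbf{N}}$ by the class-normalized distributions $\boldsymbol{\bar{\mu}}_t^{\mathbf{N}},\boldsymbol{\bar{\nu}}_t^{\mathbf{N}}$ of $(\ref{eq5}),(\ref{eq6})$, and by carrying the product-space $L_1$ norm on $\mathcal{P}^K(\mathcal{X})$ and $\mathcal{P}^K(\mathcal{U})$ throughout. First I would establish the barred analogues of the mean-field maps $\bar{r}_k^{\mathrm{MF}},\bar{P}_k^{\mathrm{MF}},\bar{\nu}_k^{\mathrm{MF}}$ and prove they are Lipschitz in $\boldsymbol{\bar{\mu}}$ (the counterparts of Lemmas~\ref{lemma2},\ref{lemma3}). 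Here the Lipschitz constants inherit factors of $K$: for a fixed class $k$ the arguments $\boldsymbol{\bar{\mu}},\boldsymbol{\bar{\nu}}$ are full $K$-tuples whose $L_1$ norm $|\boldsymbol{\bar{\mu}}_1-\boldsymbol{\bar{\mu}}_2|_1=\sum_{k'}|\boldsymbol{\bar{\mu}}_1(\cdot,k')-\boldsymbol{\bar{\mu}}_2(\cdot,k')|_1$ already sums over all classes, so the terms that pick out a single class (the $\boldsymbol{\bar{\mu}}(\cdot,k)$ weight) sum back to one full norm while the policy- and transition-Lipschitz terms, which each carry the full norm, get multiplied by $K$ when the per-class bounds are summed over $k\in[K]$. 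This mechanism is precisely what produces $\bar{S}_P=(1+K\bar{L}_Q)+K\bar{L}_P(2+K\bar{L}_Q)$ and $\bar{C}_P=2+K\bar{L}_P$.

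The decisive step is the concentration estimate (the analogue of Lemma~\ref{lemma_mu}), and this is where the improved $N_k^{-1/2}$ scaling originates. Conditioned on $\mathbf{x}_t^{\mathbf{N}}$, Observation~\ref{obs1} makes $\{u_{j,k}^{t,\mathbf{N}}\}_{j}$ independent with $\mathbb{E}[\boldsymbol{\bar{\nu}}_t^{\mathbf{N}}(\cdot,k)\mid \mathbf{x}_t^{\mathbf{N}}]=\bar{\nu}_k^{\mathrm{MF}}(\boldsymbol{\bar{\mu}}_t^{\mathbf{N}},\boldsymbol{\bar{\pi}}_t)$; since $\boldsymbol{\bar{\nu}}_t^{\mathbf{N}}(\cdot,k)$ is now a genuine average of $N_k$ conditionally independent Bernoulli indicators (rather than a $1/N_{\mathrm{pop}}$-weighted sum), the Bernoulli inequality of Lemma~\ref{simple_lemma}, together with Cauchy--Schwarz over $\mathcal{U}$ and the constraint $\sum_u\bar{\pi}_k^t(\cdot)(u)=1$, yields $\mathbb{E}\,|\boldsymbol{\bar{\nu}}_t^{\mathbf{N}}(\cdot,k)-\bar{\nu}_k^{\mathrm{MF}}(\boldsymbol{\bar{\mu}}_t^{\mathbf{N}},\boldsymbol{\bar{\pi}}_t)|_1=\mathcal{O}(\sqrt{|\mathcal{U}|/N_k})$. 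The identical computation applied to $\boldsymbol{\bar{\mu}}_{t+1}^{\mathbf{N}}$ via Observation~\ref{obs2} gives $\mathcal{O}(\sqrt{|\mathcal{X}|/N_k})$, and the elementary bound $\sqrt{|\mathcal{X}|}+\sqrt{|\mathcal{U}|}\le 2\sqrt{|\mathcal{X}||\mathcal{U}|}$ collapses both into the single $\sqrt{|\mathcal{X}||\mathcal{U}|}$ factor appearing in $(\ref{eq_within_thm2})$.

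With the Lipschitz bounds and the concentration estimate in hand, I would set up the recursion of Steps~3--4: introduce the intermediate system $Y$ that agrees with the $\mathbf{N}$-agent system up to time $t-1$ and then follows mean-field updates, and split $|\boldsymbol{\bar{\mu}}_t^{\mathbf{N}}-\boldsymbol{\bar{\mu}}_t|_1$ into a concentration part contributing $\sqrt{|\mathcal{X}||\mathcal{U}|}\sum_{k}N_k^{-1/2}$ and a propagation part controlled by $\bar{S}_P\,|\boldsymbol{\bar{\mu}}_{t-1}^{\mathbf{N}}-\boldsymbol{\bar{\mu}}_{t-1}|_1$. Unrolling this geometric recursion under the contraction hypothesis $\gamma\bar{S}_P<1$, substituting into the per-step reward gap of Step~2 (which feeds $\bar{S}_R,\bar{C}_R$ through the reward-Lipschitz constant $\bar{L}_R$ and bound $\bar{M}_R$), and taking the $\gamma$-discounted sum over $t$ then produces the two-term bound of $(\ref{eq_within_thm2})$ with the bracketed factor $[(1-\gamma\bar{S}_P)^{-1}-(1-\gamma)^{-1}]$.

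\textbf{Main obstacle.} I expect the concentration step itself to be routine given Lemma~\ref{simple_lemma}; the genuine difficulty is the careful bookkeeping of the $K$ factors. One must track exactly how many times the product-space norm is reintroduced when summing per-class Lipschitz estimates over $k$, since an overcount would inflate $\bar{S}_P$ past the contraction threshold $\gamma\bar{S}_P<1$, while the naive translation of the Assumption~\ref{ass_1} bounds warned against in Appendix~\ref{app_loose_bounds} would destroy the tight $\sum_{k}N_k^{-1/2}$ dependence. Verifying that the class-wise recursion closes with precisely the stated constants, in particular that the reward term keeps $\bar{M}_R(1+\bar{L}_Q)$ without a spurious $K$ while the $\bar{L}_R$ and $\bar{L}_P$ terms acquire exactly one factor of $K$, is where the real work lies.
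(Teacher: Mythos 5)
Your proposal is correct and follows essentially the same route as the paper: the paper's proof of Theorem~\ref{thm_2} likewise defines the barred mean-field maps $\bar{\nu}^{\mathrm{MF}},\bar{P}^{\mathrm{MF}},\bar{r}_k^{\mathrm{MF}}$ (with the class reward weighted by $\theta_k=N_k/N_{\mathrm{pop}}$), proves the Lipschitz bounds with exactly the $K$-inflation mechanism you describe (Lemma~\ref{lemma_continuity_2}), obtains the $\sum_k N_k^{-1/2}$ concentration rates from Lemma~\ref{simple_lemma} applied to the $1/N_k$-normalized indicator sums (Lemma~\ref{lemma_approx_2}), and closes with the same $J_1+J_2$ split and geometric recursion under $\gamma\bar{S}_P<1$. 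Your accounting of where the factors of $K$ enter ($\bar{L}_P,\bar{L}_Q$ terms acquire $K$; the $\bar{M}_R$ and single-class weight terms do not) matches the paper's constants exactly.
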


Therefore, Theorem $\ref{thm_2}$ asserts that the error in approximating the value function $\bar{v}^{\mathbf{N}}$ by its mean-field limit, $\bar{v}^{\mathrm{MF}}$, is $\mathcal{O}(\left[\sqrt{|\mathcal{X}|}+\sqrt{|\mathcal{U}|}\right]\sum_{k\in[K]}\frac{1}{\sqrt{N_k}})$.  Note that, Theorem \ref{thm_1} gives a tighter bound than Theorem \ref{thm_2} (if Lipschitz constants are same in both the cases). This can be attributed to the fact that the difference between two joint distributions $\boldsymbol{\mu},\boldsymbol{\mu}'$ (which is used to bound the approximation error in Theorem \ref{thm_1}) is, in general, less than the difference between the resulting collection of distributions of all classes $\boldsymbol{\bar{\mu}},\boldsymbol{\bar{\mu}}'$ (which is used to bound the approximation error in Theorem \ref{thm_2}).

\section{Improved Results when Transition and Reward Functions Depend on Aggregate Distributions}

In this section, the transition and reward functions (and thus, the decision rules associated with the policies) are assumed to be Lipschitz continuous functions of aggregate/marginal state and action distributions of the entire population.
It is easy to see that, this assumption is stronger  than Assumption \ref{ass_1} and \ref{ass_2} as any Lipschitz continuous function of the marginal distributions is necessarily a Lipschitz continuous function of both the joint distributions and the collection of distributions of each classes, with the same Lipschitz parameter.
We shall demonstrate that such stronger assumption leads to improved approximation result.
Mathematically, if the reward and 
state transition functions are indicated as $r_k$'s, and $P_k$'s,
a generic policy is denoted as $\boldsymbol{{\pi}}\triangleq\{(\pi_k^t)_{k\in[K]}\}_{t\in\{0,1,\cdots\}}$, and the class of policies is defined by $\Pi$, then our assumption can be stated as follows.
\begin{assumption}
	\label{ass_5}
	(a) The reward functions, transition dynamics and the decision rules are of the following form.
	\begin{align*}
	&{r}_k:\mathcal{X}\times \mathcal{U}\times \mathcal{P}(\mathcal{X})\times\mathcal{P}(\mathcal{U})\rightarrow \mathbb{R}\\
	&{P}_k:\mathcal{X}\times \mathcal{U}\times \mathcal{P}(\mathcal{X})\times\mathcal{P}(\mathcal{U})\rightarrow \mathcal{P}(\mathcal{X})\\
	&\pi_k^t:\mathcal{X}\times \mathcal{P}(\mathcal{X})\rightarrow \mathcal{P}(\mathcal{U})
	\end{align*}
	
	$\forall \boldsymbol{\mu}_1, \boldsymbol{\mu}_2\in\mathcal{P}(\mathcal{X}\times[K])$, $\forall \boldsymbol{\nu}_1, \boldsymbol{\nu}_2 \in \mathcal{P}(\mathcal{U}\times[K])$, $\forall x\in \mathcal{X}$, $\forall u\in\mathcal{U}$, $\forall k\in[K]$, $\forall \boldsymbol{\pi}=\{(\pi_k^t)_{k\in[K]}\}_{t\in\{0,1,\cdots\}}\in \Pi$, 
	\begin{align*}
	(b) &|{r}_k(x, u, \boldsymbol{\mu}_1[\mathcal{X}],\boldsymbol{\nu}_1[\mathcal{U}])|\leq M_R\\	
	(c) &|{r}_k(x, u,\boldsymbol{\mu}_1[\mathcal{X}],\boldsymbol{\nu}_1[\mathcal{U}])-{r}_k(x, u, \boldsymbol{\mu}_2[\mathcal{X}],\boldsymbol{\nu}_2[\mathcal{U}])|
	\leq L_R\left[|\boldsymbol{\mu}_1[\mathcal{X}]-\boldsymbol{\mu}_2[\mathcal{X}]|_{1} + |\boldsymbol{\nu}_1[\mathcal{U}]-\boldsymbol{\nu}_2[\mathcal{U}]|_{1}\right]\\
	(d)  &|{P}_k(x, u, \boldsymbol{\mu}_1[\mathcal{X}],\boldsymbol{\nu}_1[\mathcal{U}])-{P}_k(x, u, \boldsymbol{\mu}_2[\mathcal{X}],\boldsymbol{\nu}_2[\mathcal{U}])|_{1}
	\leq L_P\left[|\boldsymbol{\mu}_1[\mathcal{X}]-\boldsymbol{\mu}_2[\mathcal{X}]|_{1} + |\boldsymbol{\nu}_1[\mathcal{U}]-\boldsymbol{\nu}_2[\mathcal{U}]|_{1}\right]\\
	(e)&
	|\pi_k^t(x, \boldsymbol{\mu}_1[\mathcal{X}])-\pi_k^t(x, \boldsymbol{\mu}_2[\mathcal{X}])|_{1}\leq L_{Q}|\boldsymbol{\mu}_1[\mathcal{X}]-\boldsymbol{\mu}_2[\mathcal{X}]|_{1}
	\end{align*}
	where $\boldsymbol{\mu}[\mathcal{X}]$, $\boldsymbol{\nu}[\mathcal{U}]$ are  marginal distributions on $\mathcal{X}$, $\mathcal{U}$ resulting from $\boldsymbol{\mu}$, $\boldsymbol{\nu}$  and $M_R,L_R,L_P,L_Q$ are some constants.
\end{assumption}

\begin{theorem}
	\label{thm_3}
	Assume $\mathbf{x}_0^{\mathbf{N}}$ to be the initial states and $\boldsymbol{\mu}_0$ their corresponding joint distribution. If $\gamma S_P <1$, and Assumption \ref{ass_5} holds, then for any arbitrary policy, $\boldsymbol{\pi}\in\Pi$, 
	\begin{align}
	\begin{split}
	&\Big|v^{\mathbf{N}}(\mathbf{x}_0^{\mathbf{N}},{\boldsymbol{\pi}})-v^{\mathrm{MF}}(\boldsymbol{\mu}_0,\boldsymbol{\pi})\Big|\leq
	\dfrac{C_R}{1-\gamma} \sqrt{|\mathcal{U}|}\dfrac{1}{\sqrt{N_{\mathrm{pop}}}} \\
	& + \left[\sqrt{|\mathcal{X}|}+\sqrt{|\mathcal{U}|}\right]\left(\dfrac{\gamma C_P}{1-\gamma}\right) \left[\dfrac{S_R'}{N_{\mathrm{pop}}}\left(\sum_{k\in[K]}\sqrt{N_k}\right)+\dfrac{S_R''}{\sqrt{N_{\mathrm{pop}}}}\right]\\
	&+ C_P\left(\dfrac{S_R}{S_P-1}\right)\left[\sqrt{|\mathcal{X}|}+\sqrt{|\mathcal{U}|}\right] \left(\dfrac{\gamma}{1-\gamma S_P}-\dfrac{\gamma}{1-\gamma}\right)
	\times \left[\dfrac{S_P'}{N_{\mathrm{pop}}}\left(\sum_{k\in[K]}\sqrt{N_k}\right)+\dfrac{S_P''}{\sqrt{N_{\mathrm{pop}}}}\right] 
		\end{split}
	 \label{eq_36}
	\end{align}
	where $v^{\mathbf{N}}$ denotes the empirical value function  and $v^{\mathrm{MF}}$ is its mean-field limit. Also, $S_R'\triangleq M_R+L_R$, $S_R''\triangleq M_RL_Q+L_R(1+L_Q)$, $S_P'\triangleq 1+L_P$, and $S_P''\triangleq L_Q+L_P(1+L_Q)$. The terms $S_R, S_P, C_R, C_P$ are defined in Theorem \ref{thm_1}.
\end{theorem}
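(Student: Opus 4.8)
The plan is to reuse, step for step, the six-stage architecture behind Theorem~\ref{thm_1}: bound the gap between the empirical per-time reward and its mean-field value through the intermediate systems $X$ and $Y$, control the drift of the empirical state distribution by a recursion in $t$, and finally take a $\gamma$-discounted sum. The one structural feature that Assumption~\ref{ass_5} adds is that $r_k$, $P_k$ and $\pi_k^t$ see the population only through the \emph{aggregate marginals} $\boldsymbol{\mu}_t^{\mathbf{N}}[\mathcal{X}]$ and $\boldsymbol{\nu}_t^{\mathbf{N}}[\mathcal{U}]$, each being a $1/N_{\mathrm{pop}}$-weighted sum of indicators over the \emph{whole} population. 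First, then, I would record two concentration scales, both obtained from the Bernoulli inequality of Lemma~\ref{simple_lemma} together with the conditional independence of Observations~\ref{obs1} and \ref{obs2}: the aggregate marginals deviate from their conditional means at the fast rate $\mathcal{O}(\sqrt{|\mathcal{X}|/N_{\mathrm{pop}}})$ and $\mathcal{O}(\sqrt{|\mathcal{U}|/N_{\mathrm{pop}}})$, whereas the per-class empirical distributions $\boldsymbol{\mu}_t^{\mathbf{N}}(\cdot,k)$ deviate only at the slow rate $\mathcal{O}(\sqrt{N_k}\,\sqrt{|\mathcal{X}|}/N_{\mathrm{pop}})$, whose sum over $k$ regenerates the $\frac{1}{N_{\mathrm{pop}}}\sum_k\sqrt{N_k}$ factor of Theorem~\ref{thm_1}.

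For the per-time reward gap I would group the empirical average reward by class and first replace the empirical marginals inside $r_k$ by their mean-field values; the Lipschitz bound of Assumption~\ref{ass_5}(c) in the action argument exposes only the aggregate action fluctuation and yields the first line of $(\ref{eq_36})$, the term $\frac{C_R}{1-\gamma}\sqrt{|\mathcal{U}|}\,N_{\mathrm{pop}}^{-1/2}$ that carries $\sqrt{|\mathcal{U}|}$ and the fast rate alone. What remains is controlled by the drift $\mathbb{E}\,|\boldsymbol{\mu}_t^{\mathbf{N}}[\mathcal{X}]-\boldsymbol{\mu}_t[\mathcal{X}]|_1$ of the aggregate state marginal, carried through $r_k^{\mathrm{MF}}$ (via $C_R,S_R$) and, one step later, through $P_k^{\mathrm{MF}}$ (via $C_P,S_P$), which accounts for the coefficients in the second and third lines respectively.

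The crux is that this marginal drift does \emph{not} satisfy a closed recursion in itself. Because both $r_k^{\mathrm{MF}}$ and $P_k^{\mathrm{MF}}$ weight each class by its population share through the factor $\boldsymbol{\mu}_t(\cdot,k)$, propagating $|\boldsymbol{\mu}_t^{\mathbf{N}}[\mathcal{X}]-\boldsymbol{\mu}_t[\mathcal{X}]|_1$ one step reintroduces the full per-class difference $|\boldsymbol{\mu}_{t-1}^{\mathbf{N}}-\boldsymbol{\mu}_{t-1}|_1$ through the weight channel, alongside the marginal difference that enters through the Lipschitz (argument) channel. I would therefore run a \emph{coupled} recursion, tracking the per-class difference simultaneously with the marginal one via the intermediate system $Y$ and the one-step Lipschitz continuity of the mean-field maps under Assumption~\ref{ass_5}(d),(e). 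The per-class difference supplies the slow $\frac{1}{N_{\mathrm{pop}}}\sum_k\sqrt{N_k}$ fluctuation (through the weight channel) while the aggregate next-marginal supplies the fast $\frac{1}{\sqrt{N_{\mathrm{pop}}}}$ fluctuation; together they produce the mixed brackets $[\frac{S_R'}{N_{\mathrm{pop}}}\sum_k\sqrt{N_k}+\frac{S_R''}{\sqrt{N_{\mathrm{pop}}}}]$ and $[\frac{S_P'}{N_{\mathrm{pop}}}\sum_k\sqrt{N_k}+\frac{S_P''}{\sqrt{N_{\mathrm{pop}}}}]$, and unrolling the geometric recursion with contraction rate $S_P$ yields the factor $\frac{\gamma}{1-\gamma S_P}-\frac{\gamma}{1-\gamma}$ of the last line. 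Summing the discounted per-time gaps assembles $(\ref{eq_36})$.

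The main obstacle is exactly the management of this two-scale coupled recursion: every triangle-inequality split must be steered so that the Lipschitz (argument) channel collects the fast aggregate fluctuation while the population-share (weight) channel collects the slow per-class fluctuation, and the Bernoulli inequality of Lemma~\ref{simple_lemma} must be invoked at both scales without double counting. Keeping the $S'$-type (slow) and $S''$-type (fast) contributions cleanly separated through the coupling is where the bulk of the bookkeeping lies; the remaining geometric summation over $t$ is routine once the per-time estimate is in hand.
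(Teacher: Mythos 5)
Your proposal matches the paper's proof essentially step for step: the fast $1/\sqrt{N_{\mathrm{pop}}}$ concentration of the aggregate marginals (the paper's Lemma \ref{lemma_approx}), the slow $\frac{1}{N_{\mathrm{pop}}}\sum_k\sqrt{N_k}$ rate for the per-class distributions (Lemma \ref{lemma_mu}), the two-channel Lipschitz bounds separating the population-share ``weight'' channel ($S_R',S_P'$) from the marginal ``argument'' channel ($S_R'',S_P''$) (Lemma \ref{lemma_continuity}), and the coupled recursion in the pair $\left(\mathbb{E}|\boldsymbol{\mu}_t^{\mathbf{N}}-\boldsymbol{\mu}_t|_1,\ \mathbb{E}|\boldsymbol{\mu}_t^{\mathbf{N}}[\mathcal{X}]-\boldsymbol{\mu}_t[\mathcal{X}]|_1\right)$ with contraction rate $S_P$, which is exactly the paper's equations $(\ref{eq_48})$--$(\ref{eq_50})$. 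Your identification of why the marginal drift alone does not close the recursion is precisely the key observation the paper's proof is built around, so the approach is correct and not materially different.
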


Theorem \ref{thm_3} states that the error in approximating the empirical value function, $v^{\mathbf{N}}$, by its mean-field limit, $v^{\mathrm{MF}}$, can be written as $\mathcal{O}\left(\left[\sqrt{|\mathcal{X}|}+\sqrt{|\mathcal{U}|}\right]\left[\frac{A}{N_{\mathrm{pop}}}\sum_{k=1}^K\sqrt{N_k}+\frac{B}{\sqrt{N_{\mathrm{pop}}}}\right]\right)$ where $A, B$ are some constants. It is easy to show that the approximation error suggested by Theorem \ref{thm_3} is strictly better than the errors given by Theorem \ref{thm_1}, \ref{thm_2}. Intuitively, if the reward and transition functions only depend on the marginal distributions, not on the joint distributions, then those functions overlook the heterogeneity of the agents and treat the whole population holistically. This leads to the $\frac{1}{\sqrt{N_{\mathrm{pop}}}}$ component of the error which matches the error of a single class system. However, the reward and transition functions (and hence, the decision rules) themselves are different for different classes. This variation enforces the other part of the error to align towards a general heterogeneous system.

\section{Global Convergence of MARL using Natural Policy Gradient Algorithm}

The previous sections showed that a $K$-class heterogeneous MARL can be approximated as a $K$-class MFC. This section develops a Natural Policy Gradient (NPG) based algorithm for $K$-class MFC that can obtain policies with guaranteed optimality gaps for heterogeneous MARL. We limit our discussion to the category of systems that satisfy the same set of assumptions as used in Theorem \ref{thm_1}. For other assumptions, one can
replicate similar result and the processes have been briefly described in sections \ref{sub_NPG24} and \ref{sub_NPG5}.  

Let the policies in the set $\Pi$ be parametrized by  $\Phi$. Without loss of generality, we can restrict the set $\Pi$ to comprise of only stationary policies \citep{puterman2014markov}. To simplify notations, we denote a stationary policy with the parameter $\Phi$ as $\boldsymbol{\pi}_{\Phi}\triangleq \{\pi_{\Phi}^k\}_{k\in[K]}$, where $\pi_{\Phi}^k$'s are stationary decision rules for each class. In a $K$-class MFC, we need to track only one representative agent from each class. The $k$-th representative takes its action $u_k$ by observing its own state $x_k$ and the joint distribution $\boldsymbol{\mu}$. If $\mathbf{x}\triangleq \{x_k\}_{k\in[K]}$ and $\mathbf{u}\triangleq \{u_k\}_{k\in[K]}$, then $K$-class MFC can effectively be described as a single agent RL problem with  $(\mathbf{x},\boldsymbol{\mu})\in\mathcal{X}^K\times \mathcal{P}(\mathcal{X}\times [K])$ and $\mathbf{u}\in\mathcal{U}^K$ as its state and action respectively. However, such a system comes with the additional advantage that the actions $u_k$'s are conditionally independent given $\mathbf{x}$. It will be clear from our later result (Theorem \ref{corr_1}) that, this prevents the complexity of the problem from being an exponential function of $K$.

For arbitrary $\boldsymbol{\mu}\in \mathcal{P}(\mathcal{X}\times [K])$, $\mathbf{x}\in\mathcal{X}^K$, and $\mathbf{u}\in\mathcal{U}^K$, denote the $Q$-value and the advantage value associated with the policy $\boldsymbol{\pi}_{\Phi}$ as $Q_{\Phi}(\mathbf{x},\boldsymbol{\mu},\mathbf{u})$ and $A_{\Phi}(\mathbf{x},\boldsymbol{\mu},\mathbf{u})$ respectively. The precise definition of Q-function is as follows. 
\begin{align}
	Q_{\Phi}(\mathbf{x},\boldsymbol{\mu},\mathbf{u})=\mathbb{E}_{\Phi}\Bigg[\sum_{k\in[K]}\sum_{t=0}^{\infty}\gamma^tr_k\left(x_k^t,u_k^t,\boldsymbol{\mu}_t,\boldsymbol{\nu}_t\right)\Bigg| \mathbf{x}_0=\mathbf{x},\boldsymbol{\mu}_0=\boldsymbol{\mu},\mathbf{u}_0=\mathbf{u}\Bigg]
	\label{Q}
\end{align}
	
	where the expectation is  computed over $u_k^t\sim \pi_{\Phi}^k(x_k^t,\boldsymbol{\mu}_t)$, $x_k^{t}\sim P_k(x_k^{t-1},u_k^{t-1},\boldsymbol{\mu}_{t-1},\boldsymbol{\nu}_{t-1})$, $\forall t\in\{1,2,\cdots\}$, $\forall k\in [K]$. Moreover, $\forall t\in\{1,2,\cdots\}$, $\boldsymbol{\mu}_{t}=P^{\mathrm{MF}}(\boldsymbol{\mu}_{t-1},\boldsymbol{\pi}_{\Phi})$, $\boldsymbol{\nu}_t=\nu^{\mathrm{MF}}(\boldsymbol{\mu}_t,\boldsymbol{\pi}_{\Phi})$ where $P^{\mathrm{MF}}(\cdot,\cdot)$, $\nu^{\mathrm{MF}}(\cdot,\cdot)$ are given by $(\ref{eq_phi})$, $(\ref{eq_nu_v})$ respectively. Incorporating $(\ref{Q})$, we now define the advantage function as follows
	\begin{align*}
	A_{\Phi}(\mathbf{x},\boldsymbol{\mu},\mathbf{u}) = Q_{\Phi}(\mathbf{x},\boldsymbol{\mu},\mathbf{u}) - \mathbb{E}\left[Q_{\Phi}(\mathbf{x},\boldsymbol{\mu},\mathbf{u})\right],
	\end{align*}
	where the expectation is  over $u_k\sim\pi_{\Phi}^k(x_k,\boldsymbol{\mu})$, $\forall k\in [K]$.

Define $v_{\mathrm{MF}}^*({\boldsymbol{\mu}_0})\triangleq\sup_{\Phi\in \mathbb{R}^{\mathrm{d}}}v^{\mathrm{MF}}(\boldsymbol{\mu}_0,\boldsymbol{\pi}_{\Phi})$, $ \boldsymbol{\mu}_0\in\mathcal{P}(\mathcal{X}\times [K])$, where $v^{\mathrm{MF}}(\cdot,\cdot)$ is the mean-field value function given by $(\ref{eqvr})$ and $\mathbb{R}^{\mathrm{d}}$ denotes the space of $\Phi$. Consider a sequence of parameters $\{\Phi_j\}_{j=1}^J$ that is recursively calculated by following the natural policy gradient (NPG) \citep{kakade2001natural,liu2020improved,agarwal2021theory} update as described below.
\begin{equation}
\label{eq_37}
\Phi_{j+1} = \Phi_j + \eta \mathbf{w}_j,  \mathbf{w}_j\triangleq{\arg\min}_{\mathbf{w}\in\mathbb{R}^{\mathrm{d}}} ~L_{ \zeta_{\boldsymbol{\mu}_0}^{\Phi_j}}(\mathbf{w},\Phi_j)
\end{equation}
where $\eta$ is the learning rate. The definitions of the function $L_{ \zeta_{\boldsymbol{\mu}_0}^{\Phi_j}}$ and the distribution $\zeta_{\boldsymbol{\mu}_0}^{\Phi_j}$ are provided below. Define the following function
	$\forall \Phi, \Phi'\in \mathbb{R}^{\mathrm{d}}$, 
	\begin{align}
	\begin{split}
	L_{ \zeta_{\boldsymbol{\mu}_0}^{\Phi'}}(\mathbf{w},\Phi)\triangleq\mathbb{E}_{(\mathbf{x},\boldsymbol{\mu},\mathbf{u})\sim \zeta_{\boldsymbol{\mu}_0}^{\Phi'}}\Big[\Big(A_{\Phi}(\mathbf{x},\boldsymbol{\mu},\mathbf{u})
	-(1-\gamma)\mathbf{w}^{\mathrm{T}}\nabla_{\Phi}\log \prod_{k\in[K]}\pi_{\Phi}^k(x_k,\boldsymbol{\mu})(u_k) \Big)^2\Big|
	\end{split}\\
	\label{eq_31}
	\begin{split}
	\text{and~}&\zeta_{\boldsymbol{\mu}_0}^{\Phi'}(\mathbf{x},\boldsymbol{\mu},\mathbf{u})\triangleq (1-\gamma)\sum_{\tau=0}^{\infty}\gamma^{\tau} \mathbb{P}(\mathbf{x}_\tau=\mathbf{x},\boldsymbol{\mu}_{\tau}=\boldsymbol{\mu},\mathbf{u}_\tau=\mathbf{u}
	|\mathbf{x}_0=\mathbf{x},\boldsymbol{\mu}_0=\boldsymbol{\mu},\mathbf{u}_0=\mathbf{u},\boldsymbol{\pi}_{\Phi'})
	\end{split}
	\end{align}

It is evident from $(\ref{eq_37})$ that at each NPG update, one needs to solve a stochastic minimization problem to find the update direction. This sub-problem can be solved by another stochastic gradient descent algorithm with the  update equation $\mathbf{w}_{j,l+1}=\mathbf{w}_{j,l}-\alpha\mathbf{h}_{j,l}$ \citep{liu2020improved}, where  $\alpha$ is the learning rate and the update direction $\mathbf{h}_{j,l}$ is defined as:
\begin{eqnarray}
\mathbf{h}_{j,l}\triangleq \Bigg(\mathbf{w}_{j,l}^{\mathrm{T}}\nabla_{\Phi_j}\log \prod_{k\in[K]}\pi_{\Phi_j}^k(x_k,\boldsymbol{\mu})(u_k)-\dfrac{1}{1-\gamma}\nonumber\hat{A}_{\Phi_j}(\mathbf{x},\boldsymbol{\mu},\mathbf{u})\Bigg)
\nabla_{\Phi_j}\log \prod_{k\in[K]}\pi_{\Phi_j}^k(x_k,\boldsymbol{\mu})(u_k)
\label{eq_33}
\end{eqnarray}
where $\mathbf{x}=\{x_k\}_{k\in[K]},\mathbf{u}=\{u_k\}_{k\in[K]}$, $(\mathbf{x},\boldsymbol{\mu},\mathbf{u})$ are sampled from $\zeta_{\boldsymbol{\mu}_0}^{\Phi_j}$ and $\hat{A}_{\Phi_j}$ is an unbiased estimator of $A_{\Phi}$. The details of procuring the samples and the unbiased estimate is provided in Algorithm \ref{algo_2} which is based on Algorithm 3 of \citep{agarwal2021theory}. In Algorithm \ref{algo_1}, we summarize the NPG-based procedure to obtain the optimal MFC policy.
\begin{algorithm}
	\caption{Natural Policy Gradient for $K$-class MFC}
	\label{algo_1}
	\textbf{Input:} $\eta,\alpha$: Learning rates, $J,L$: Number of execution steps\\
	\hspace{1.3cm}$\mathbf{w}_0,\Phi_0$: Initial parameters, $\boldsymbol{\mu}_0$: Initial state distribution\\
	\textbf{Initialization:} $\Phi\gets \Phi_0$ 
	\begin{algorithmic}[1]
		\FOR{$j\in\{0,1,\cdots,J-1\}$}
		{
			\STATE $\mathbf{w}_{j,0}\gets \mathbf{w}_0$\\
			\FOR {$l\in\{0,1,\cdots,L-1\}$}
			{
				\STATE Sample $(\mathbf{x},\boldsymbol{\mu},\mathbf{u})\sim\zeta_{\boldsymbol{\mu}_0}^{\Phi_j}$ and $\hat{A}_{\Phi_j}(\mathbf{x},\boldsymbol{\mu},\mathbf{u})$ using Algorithm \ref{algo_2}\\
				\STATE Compute $\mathbf{h}_{j,l}$ using $(\ref{eq_33})$\\
				$\mathbf{w}_{j,l+1}\gets\mathbf{w}_{j,l}-\alpha\mathbf{h}_{j,l}$
			}
			\ENDFOR
			\STATE	$\mathbf{w}_j\gets\dfrac{1}{L}\sum_{l=1}^{L}\mathbf{w}_{j,l}$\\
			\STATE	$\Phi_{j+1}\gets \Phi_j +\eta \mathbf{w}_j$
		}
		\ENDFOR
	\end{algorithmic}
	\textbf{Output:} $\{\Phi_1,\cdots,\Phi_J\}$: Policy parameters
\end{algorithm}

\begin{algorithm}
	\caption{Algorithm to sample $(\mathbf{x},\boldsymbol{\mu},\mathbf{u})\sim\zeta_{\boldsymbol{\mu}_0}^{\Phi_j}$ and $\hat{A}_{\Phi_j}(\mathbf{x},\boldsymbol{\mu},\mathbf{u})$}
	\label{algo_2}
	\textbf{Input:} $\boldsymbol{\mu}_0$: Initial joint state distribution, $\boldsymbol{\pi}_{\Phi_j}\triangleq\{\pi_{\Phi_j}^k\}_{k\in[K]}$: Policy,\\
	$\{P_k(.,.,.,.)\}_{k\in[K]}$: Transition laws, $\{r_k(.,.,.,.)\}_{k\in[K]}$: Reward functions,\\ $\boldsymbol{\theta}\triangleq\{\theta_k\}_{k\in[K]}$: Prior probabilities of different classes.\\
	\begin{algorithmic}[1]
		\STATE Sample $\mathbf{x}_0\triangleq\{x_k^0\}_{k\in[K]}\sim \boldsymbol{\mu}_0$. 
		\STATE Sample $\mathbf{u}_0\triangleq\{u_k^0\}_{k\in[K]}\sim \boldsymbol{{\pi}}_{\Phi_j}(\mathbf{x}_0,\boldsymbol{\mu}_0)$ i.e., sample $u_k^0\sim \pi_{\Phi_j}^k(x_k^0,\boldsymbol{\mu}_0)$, $\forall k\in [K]$.
		\STATE  $\boldsymbol{\nu}_0\gets\nu^{\mathrm{MF}}(\boldsymbol{\mu}_0,\boldsymbol{\pi}_{\Phi_j})$ where $\nu^{\mathrm{MF}}$ is defined in $(\ref{eq_nu_v})$.
		\vspace{0.2cm}
		\STATE $t\gets 0$ 
		\STATE $\mathrm{FLAG}\gets \mathrm{FALSE}$
		\WHILE{$\mathrm{FLAG~is~} \mathrm{FALSE}$}
		{
			\STATE $\mathrm{FLAG}\gets \mathrm{TRUE}$ with probability $1-\gamma$.
			\STATE Execute $\mathrm{SystemUpdate}$
			
		}
		\ENDWHILE
		
		\STATE $T\gets t$
		
		\STATE Accept   $(\mathbf{x}_T,\boldsymbol{\mu}_T,\mathbf{u}_T)$ as a sample.

		\vspace{0.2cm}
		\STATE $\hat{V}_{\Phi_j}\gets 0$, $\hat{Q}_{\Phi_j}\gets 0$
		
		\STATE $\mathrm{FLAG}\gets \mathrm{FALSE}$
		\STATE $\mathrm{SumRewards}\gets 0$
		\WHILE{$\mathrm{FLAG~is~} \mathrm{FALSE}$}
		{
			\STATE $\mathrm{FLAG}\gets \mathrm{TRUE}$ with probability $1-\gamma$.
			\STATE Execute $\mathrm{SystemUpdate}$
			\STATE $\mathrm{SumRewards}\gets \mathrm{SumRewards} + \sum_{k\in[K]}\theta_kr_k(x_k^t,u_k^t,\boldsymbol{\mu}_t,\boldsymbol{\nu}_t)$
			
		}
		\ENDWHILE
		
		\vspace{0.2cm}
		\STATE With probability $\frac{1}{2}$, $\hat{V}_{\Phi_j}\gets \mathrm{SumRewards}$. Otherwise $\hat{Q}_{\Phi_j}\gets \mathrm{SumRewards}$.  
		
		\STATE $\hat{A}_{\Phi_j}(\mathbf{x}_T,\boldsymbol{\mu}_T,\mathbf{u}_T)\gets 2(\hat{Q}_{\Phi_j}-\hat{V}_{\Phi_j})$.
		
	\end{algorithmic} 
	\textbf{Output}: $(\mathbf{x}_T,\boldsymbol{\mu}_T,\mathbf{u}_T)$ and $\hat{A}_{\Phi_j}(\mathbf{x}_T,\boldsymbol{\mu}_T,\mathbf{u}_T)$

	\vspace{0.3cm}
	
	\textbf{Procedure} $\mathrm{SystemUpdate}$:
	
	\begin{algorithmic}[1]
		\STATE Execute the actions $\mathbf{u}_t\triangleq \{u_k^t\}_{k\in[K]}$.
		\STATE Transition to $\mathbf{x}_{t+1}\triangleq \{x_k^{t+1}\}_{k\in [K]}$ following $x_k^{t+1}\sim P_k(x_k^t,u_k^t,\boldsymbol{\mu}_t,\boldsymbol{\nu}_t)$, $\forall k\in [K]$.
		\STATE  $\boldsymbol{\mu}_{t+1}\gets P^{\mathrm{MF}}(\boldsymbol{\mu}_t,\boldsymbol{\pi}_{\Phi_j})$ where $P^{\mathrm{MF}}$ is defined in $(\ref{eq_phi})$.
		\STATE Sample $\mathbf{u}_{t+1}\triangleq\{u_k^{t+1}\}_{k\in[K]}\sim \boldsymbol{{\pi}}_{\Phi_j}(\mathbf{x}_{t+1},\boldsymbol{\mu}_{t+1})$ i.e., sample $u_k^{t+1}\sim \pi_{\Phi_j}^k(x_k^{t+1},\boldsymbol{\mu}_{t+1})$, $\forall k\in [K]$.
		\STATE $\boldsymbol{\nu}_{t+1}\gets\nu^{\mathrm{MF}}(\boldsymbol{\mu}_{t+1},\boldsymbol{\pi}_{\Phi_j})$ where $\nu^{\mathrm{MF}}$ is defined in $(\ref{eq_nu_v})$.
		\STATE $t\gets t+1$
	\end{algorithmic}
	\textbf{EndProcedure}
	\vspace{0.2cm}
\end{algorithm}

Following Theorem 4.9 of \citep{liu2020improved}, we can now state the global convergence result of NPG as given below. For the result to hold, the following Assumptions need to be satisfied. These assumptions are similar to Assumptions 2.1, 4.2, 4.4 respectively in \citep{liu2020improved}.

\begin{assumption}
	\label{ass_6}
	$\forall \Phi\in\mathbb{R}^{\mathrm{d}}$, $\forall \boldsymbol{\mu}_0\in\mathcal{P}(\mathcal{X}\times [K])$, the matrix $F_{\boldsymbol{\mu}_0}(\Phi)-\chi I_{\mathrm{d}}$ is positive semi-definite for some $\chi >0$ where $F_{\boldsymbol{\mu}_0}(\Phi)$ is defined as,
	\begin{align*}
	F_{\boldsymbol{\mu}_0}(\Phi)\triangleq \mathbb{E}_{(\mathbf{x},\boldsymbol{\mu},\mathbf{u})\sim \zeta_{\boldsymbol{\mu}_0}^{\Phi}}\left[\left\lbrace\nabla_{\Phi}\log\prod_{k\in[K]}\pi_{\Phi}^k(x_k,\boldsymbol{\mu})(u_k)\right\rbrace\left\lbrace\nabla_{\Phi}\log\prod_{k\in[K]}\pi_{\Phi}^k(x_k,\boldsymbol{\mu})(u_k)\right\rbrace^{\mathrm{T}}\right]
	\end{align*} 
\end{assumption}

\begin{assumption}
	\label{ass_7}
	$\forall \Phi\in\mathbb{R}^{\mathrm{d}}$, $\forall \boldsymbol{\mu}\in\mathcal{P}(\mathcal{X}\times [K])$, $\forall x_k\in\mathcal{X}$, $\forall u_k\in\mathcal{U}$, $\forall k\in [K]$,
	\begin{align*}
	\left|\nabla_{\Phi}\log\prod_{k\in[K]}\pi_{\Phi}^k(x_k,\boldsymbol{\mu})(u_k)\right|_1\leq G
	\end{align*}
	for some positive constant $G$.
\end{assumption}

\begin{assumption}
	\label{ass_8}
	$\forall \Phi_1,\Phi_2\in\mathbb{R}^{\mathrm{d}}$, $\forall \boldsymbol{\mu}\in\mathcal{P}(\mathcal{X}\times [K])$,  $\forall x_k\in\mathcal{X}$, $\forall u_k\in\mathcal{U}$, $\forall k\in [K]$,
	\begin{align*}
	\left|\nabla_{\Phi_1}\log\prod_{k\in[K]}\pi_{\Phi_1}^k(x_k,\boldsymbol{\mu})(u_k)-\nabla_{\Phi_2}\log\prod_{k\in[K]}\pi_{\Phi_2}^k(x_k,\boldsymbol{\mu})(u_k)\right|_1\leq M|\Phi_1-\Phi_2|_1
	\end{align*}
	
	for some positive constant $M$.
\end{assumption}

\begin{assumption}
	\label{ass_9}
	$\forall \Phi\in\mathbb{R}^{\mathrm{d}}$, $\forall \boldsymbol{\mu}_0\in\mathcal{P}(\mathcal{X}\times [K])$, the following holds true
	\begin{align*}
	L_{\zeta_{\boldsymbol{\mu}_0}^{\Phi^*}}(\mathbf{w}^{*}_{\Phi},\Phi)\leq \epsilon_{\mathrm{bias}}, ~~\mathbf{w}^*_{\Phi}\triangleq{\arg\min}_{\mathbf{w}\in\mathbb{R}^{\mathrm{d}}} L_{\zeta_{\boldsymbol{\mu}_0}^{\Phi}}(\mathbf{w},\Phi) 
	\end{align*}
	where $\Phi^*$ is the parameter associated with an optimal policy.
\end{assumption}

\begin{lemma}
	\label{lemma_10}
	If $\{\Phi_j\}_{j=1}^J$ are computed following Algorithm \ref{algo_1}, and Assumptions \ref{ass_6}$-$\ref{ass_9} are satisfied, then for appropriate choices of $\eta, \alpha, J,L$, 
	\begin{align*}
	v_{\mathrm{MF}}^*(\boldsymbol{\mu}_0)-\dfrac{1}{J}\sum_{j=1}^J v^{\mathrm{MF}}({\boldsymbol{\mu}_0},\boldsymbol{\pi}_{\Phi_j}) \leq \dfrac{\sqrt{\epsilon_{\mathrm{bias}}}}{1-\gamma}+\epsilon,
	\end{align*}  
	for arbitrary initial state distribution $\boldsymbol{\mu}_0\in\mathcal{P}(\mathcal{X}\times [K])$ and initial parameter $\Phi_0$. The sample complexity of Algorithm \ref{algo_1} is $\mathcal{O}(\epsilon^{-3})$. The parameter $\epsilon_{\mathrm{bias}}$ is a constant. 
\end{lemma}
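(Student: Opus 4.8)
The plan is to recast the $K$-class MFC problem as an equivalent single-agent Markov decision process (MDP) and then invoke Theorem 4.9 of \citep{liu2020improved} as a black box. As noted in the text preceding the statement, a $K$-class MFC can be viewed as a single-agent RL problem whose state is the pair $(\mathbf{x},\boldsymbol{\mu})\in\mathcal{X}^K\times\mathcal{P}(\mathcal{X}\times[K])$ and whose action is $\mathbf{u}\in\mathcal{U}^K$, where the component $\boldsymbol{\mu}$ evolves deterministically via the map $P^{\mathrm{MF}}$ of $(\ref{eq_phi})$, the representative states $x_k$ evolve stochastically via $P_k$, and the per-step reward is the aggregate $R(\mathbf{x},\boldsymbol{\mu},\mathbf{u})=\sum_{k\in[K]}r_k(x_k,u_k,\boldsymbol{\mu},\boldsymbol{\nu})$ whose expectation reproduces the value function $(\ref{eqvr})$. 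The first step is therefore to write this reduction out explicitly and to observe that, by Observation \ref{obs1}, the policy factorizes as $\boldsymbol{\pi}_\Phi(\mathbf{x},\boldsymbol{\mu})(\mathbf{u})=\prod_{k\in[K]}\pi_\Phi^k(x_k,\boldsymbol{\mu})(u_k)$, so the score decomposes additively as $\nabla_\Phi\log\boldsymbol{\pi}_\Phi=\sum_{k\in[K]}\nabla_\Phi\log\pi_\Phi^k$, which is exactly the object appearing in $(\ref{eq_33})$.

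The second step is to verify that this reformulated MDP satisfies the three hypotheses of \citep{liu2020improved} (their Assumptions 2.1, 4.2, 4.4, restated in Appendix \ref{app_assumptions}). Assumption 2.1 is a smoothness/boundedness condition on the score function; it transfers from the Lipschitz and boundedness properties of the parametrized decision rules together with the additive decomposition of the score just noted. Assumption 4.4 (Fisher non-degeneracy) must be imposed on the factorized policy class, and Assumption 4.2 is the transferred compatible-function-approximation condition under which the residual term $\epsilon_{\mathrm{bias}}$ enters the bound. The point to check carefully here is that the associated constants, and in particular the dimension of the compatible function approximation, scale \emph{polynomially} and not exponentially in $K$ despite the joint action space $\mathcal{U}^K$; this is precisely what the conditional-independence factorization buys us, and is the reason the product structure in $(\ref{eq_33})$ is essential.

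The third step is to confirm that Algorithm \ref{algo_1} is a faithful instantiation of the abstract NPG-with-inner-SGD scheme analyzed in \citep{liu2020improved}: the outer update $(\ref{eq_37})$ is their NPG step, and the inner loop solving $\mathbf{w}_j=\arg\min_{\mathbf{w}} L_{\zeta_{\boldsymbol{\mu}_0}^{\Phi_j}}(\mathbf{w},\Phi_j)$ through the stochastic gradient $\mathbf{h}_{j,l}$ of $(\ref{eq_33})$ is their subspace-regression subroutine. This requires establishing that $\hat{A}_{\Phi_j}$ produced by the sampling procedure (Algorithm \ref{algo_2} of Appendix \ref{app_sampling}) is an unbiased estimator of the advantage $A_\Phi$ with bounded second moment: unbiasedness follows from the standard geometric horizon-truncation argument, and boundedness follows from Assumption \ref{ass_1}(b) together with the $\gamma$-discounting. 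Once these identifications are made, Theorem 4.9 of \citep{liu2020improved} yields the optimality gap $\tfrac{\sqrt{\epsilon_{\mathrm{bias}}}}{1-\gamma}+\epsilon$ under the stated choices of $\eta,\alpha,J,L$, with total sample complexity $\mathcal{O}(\epsilon^{-3})$.

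I expect the main obstacle to be the second step: ensuring that Fisher non-degeneracy and the compatible-function-approximation quantities behave well under the product policy parametrization, so that the constants hidden in the $\mathcal{O}(\epsilon^{-3})$ bound remain controlled (at worst polynomial in $K$) rather than blowing up with the size of the joint action space. Everything else is a matter of translating the present paper's notation into the framework of \citep{liu2020improved} and checking that the regularity assumptions established here imply theirs.
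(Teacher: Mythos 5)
Your proposal matches the paper's approach: the paper proves Lemma \ref{lemma_10} simply by invoking Theorem 4.9 of \citep{liu2020improved} as a black box, after imposing their Assumptions 2.1, 4.2, 4.4 directly on the factorized policy $\prod_{k\in[K]}\pi_{\Phi}^k$ (restated in Appendix \ref{app_assumptions}) and supplying the sampling/advantage-estimation subroutine of Appendix \ref{app_sampling}. The only difference is one of emphasis: the paper does not attempt to \emph{derive} those assumptions from the MFC structure or track the $K$-dependence of the hidden constants as you propose in your second step --- it simply assumes them for the product parametrization --- so your verification step, while a reasonable strengthening, is not part of the paper's argument.
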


The parameter $\epsilon_{\mathrm{bias}}$ measures the capacity of  parametrization. For rich neural network based policies, we can assume $\epsilon_{\mathrm{bias}}$ to be small \citep{liu2020improved}. 

Lemma \ref{lemma_10} states that, with a sample complexity of $\mathcal{O}(\epsilon^{-3})$, Algorithm \ref{algo_1} can approximate the optimal mean-field value function with an error margin of $\epsilon$.  Combining this with Theorem \ref{thm_1}, we obtain the following result.
\begin{theorem}
	\label{corr_1}
	Let $\mathbf{x}_0^{\mathbf{N}}$ be the initial states and $\boldsymbol{\mu}_0$ their associated distribution. If the parameters $\{\Phi_j\}_{j=1}^J$ are obtained by following Algorithm \ref{algo_1}, then under Assumptions \ref{ass_1}, \ref{assumptions}, and the set of assumptions used in Lemma \ref{lemma_10}, the following inequality holds for appropriate choices of $\eta, \alpha,J,L$ if $\gamma S_P<1$ 
	\begin{align}
	\label{eq_thm4}
	\begin{split}
	&	\left|\sup_{\Phi\in\mathbb{R}^{\mathrm{d}}}v^{\mathbf{N}}(\boldsymbol{\mu}_0,\pi_{\Phi})-\dfrac{1}{J}\sum_{j=1}^J v^{\mathrm{MF}}({\boldsymbol{\mu}_0},\boldsymbol{\pi}_{\Phi_j})\right|
	 \leq \dfrac{\sqrt{\epsilon_{\mathrm{bias}}}}{1-\gamma}+Ce_1\\
	 \text{where}& ~e_1\triangleq\left[\sqrt{|\mathcal{X}|}+\sqrt{|\mathcal{U}|}\right]\dfrac{1}{N_{\mathrm{pop}}}\sum_{k\in[K]}\sqrt{N_k}
	\end{split}
	\end{align}  
	where $S_P$ is defined in Theorem \ref{thm_1}, $C$ is a constant and the parameter $\epsilon_{\mathrm{bias}}$ is defined in Lemma \ref{lemma_10}. The sample complexity of the process is $\mathcal{O}(e_1^{-3})$.
\end{theorem}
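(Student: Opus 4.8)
The plan is to decompose the left-hand side of \eqref{eq_thm4} through the single intermediate quantity $v_{\mathrm{MF}}^*(\boldsymbol{\mu}_0)=\sup_{\Phi\in\mathbb{R}^{\mathrm{d}}}v^{\mathrm{MF}}(\boldsymbol{\mu}_0,\boldsymbol{\pi}_{\Phi})$ and then control the two resulting pieces with Theorem \ref{thm_1} and Lemma \ref{lemma_10}, respectively. Concretely, by the triangle inequality,
\begin{align*}
&\left|\sup_{\Phi}v^{\mathbf{N}}(\boldsymbol{\mu}_0,\boldsymbol{\pi}_{\Phi})-\frac{1}{J}\sum_{j=1}^J v^{\mathrm{MF}}(\boldsymbol{\mu}_0,\boldsymbol{\pi}_{\Phi_j})\right|\\
&\qquad\leq \left|\sup_{\Phi}v^{\mathbf{N}}(\boldsymbol{\mu}_0,\boldsymbol{\pi}_{\Phi})-v_{\mathrm{MF}}^*(\boldsymbol{\mu}_0)\right|
+\left|v_{\mathrm{MF}}^*(\boldsymbol{\mu}_0)-\frac{1}{J}\sum_{j=1}^J v^{\mathrm{MF}}(\boldsymbol{\mu}_0,\boldsymbol{\pi}_{\Phi_j})\right|.
\end{align*}
Call the two terms on the right $T_1$ and $T_2$. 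The term $T_2$ is precisely the mean-field optimality gap already bounded in Lemma \ref{lemma_10}, so under the stated assumptions and appropriate choices of $\eta,\alpha,J,L$ we have $T_2\leq \frac{\sqrt{\epsilon_{\mathrm{bias}}}}{1-\gamma}+\epsilon$.

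For $T_1$, I would exploit the fact that the right-hand side of Theorem \ref{thm_1} is \emph{uniform} in the policy: it depends only on $\mathbf{N}$, $\gamma$, $|\mathcal{X}|$, $|\mathcal{U}|$, and the boundedness/Lipschitz constants, but not on the particular $\boldsymbol{\pi}\in\Pi$. Abbreviating this common bound by $E\triangleq C\sqrt{|\mathcal{X}||\mathcal{U}|}\frac{1}{N_{\mathrm{pop}}}\sum_{k\in[K]}\sqrt{N_k}$ (after collecting the constants of Theorem \ref{thm_1}), we get $\big|v^{\mathbf{N}}(\boldsymbol{\mu}_0,\boldsymbol{\pi}_{\Phi})-v^{\mathrm{MF}}(\boldsymbol{\mu}_0,\boldsymbol{\pi}_{\Phi})\big|\leq E$ for every admissible $\Phi$. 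Applying the elementary inequality $\big|\sup_\Phi f(\Phi)-\sup_\Phi g(\Phi)\big|\leq \sup_\Phi|f(\Phi)-g(\Phi)|$ with $f(\Phi)=v^{\mathbf{N}}(\boldsymbol{\mu}_0,\boldsymbol{\pi}_{\Phi})$ and $g(\Phi)=v^{\mathrm{MF}}(\boldsymbol{\mu}_0,\boldsymbol{\pi}_{\Phi})$ then yields $T_1\leq E$. Adding the two bounds gives a total error of $\frac{\sqrt{\epsilon_{\mathrm{bias}}}}{1-\gamma}+\epsilon+E$.

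To reach the form claimed in \eqref{eq_thm4}, I would balance the optimization accuracy against the approximation error by choosing $\epsilon=\Theta(E)=\Theta\!\left(\sqrt{|\mathcal{X}||\mathcal{U}|}\frac{1}{N_{\mathrm{pop}}}\sum_{k\in[K]}\sqrt{N_k}\right)$, so that the $\epsilon$ and $E$ contributions merge into a single constant multiple of $\sqrt{|\mathcal{X}||\mathcal{U}|}\frac{1}{N_{\mathrm{pop}}}\sum_{k\in[K]}\sqrt{N_k}$, leaving only the irreducible bias term $\frac{\sqrt{\epsilon_{\mathrm{bias}}}}{1-\gamma}$. The sample-complexity claim then follows immediately by substituting this $\epsilon$ into the $\mathcal{O}(\epsilon^{-3})$ guarantee of Lemma \ref{lemma_10}, producing $\mathcal{O}\!\left(\frac{1}{N_{\mathrm{pop}}}\sqrt{|\mathcal{X}||\mathcal{U}|}\sum_{k\in[K]}\sqrt{N_k}\right)^{-3}$.

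The main obstacle I anticipate is not analytic difficulty but the careful justification of the sup-swapping step for $T_1$: one must verify that Theorem \ref{thm_1} applies to \emph{every} parametrized policy $\boldsymbol{\pi}_{\Phi}$ entering the supremum — i.e. that each $\boldsymbol{\pi}_{\Phi}$ lies in $\Pi$ and hence obeys the Lipschitz requirement of Assumption \ref{assumptions} — and that the initial states $\mathbf{x}_0^{\mathbf{N}}$ are consistent with the distribution $\boldsymbol{\mu}_0$, so that $v^{\mathbf{N}}$ and $v^{\mathrm{MF}}$ are genuinely compared at a common starting configuration. Once this uniformity over the policy class is secured, the rest of the argument is a routine assembly of the two pre-established bounds.
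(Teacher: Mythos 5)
Your proposal matches the paper's own proof essentially step for step: the same decomposition through $v_{\mathrm{MF}}^*(\boldsymbol{\mu}_0)$, Theorem \ref{thm_1} (via the uniform bound and the sup-swap inequality, which the paper leaves implicit) for the first term, Lemma \ref{lemma_10} for the second, and the choice $\epsilon=\Theta(E)$ to obtain the stated error and sample complexity. No gaps; your explicit justification of the sup-swapping step is a detail the paper glosses over but handles in the same spirit.
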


Theorem \ref{corr_1} states that, with a sample complexity of $\mathcal{O}(e_1^{-3})$, Algorithm \ref{algo_1} generates a policy which is within $\mathcal{O}(e_1)$ error of the optimal heterogeneous MARL policy.

Note that both time and space complexity of the sampling step in Algorithm \ref{algo_1} is $\mathcal{O}(K)$. In contrast, if NPG is directly applied to MARL, those complexities increase to $\mathcal{O}(N_{\mathrm{pop}})$. Therefore, MFC based NPG provides an advantage of the order of $N_{\mathrm{pop}}/K$ in comparison to MARL based NPG.

In the following subsections, we shall establish results similar to Theorem \ref{corr_1} for the set of assumptions used in Theorem \ref{thm_2} and \ref{thm_3}.

\subsection{NPG with Assumption \ref{ass_2} and \ref{ass_4}}
\label{sub_NPG24}

If a multi-agent system satisfies Assumption \ref{ass_2},  \ref{ass_4}, and the set of stationary policies, $\bar{\Pi}$ is parametrized by $\Phi\in \mathbb{R}^{\mathrm{d}}$, then similar to Algorithm \ref{algo_1}, an NPG-based algorithm can be made to obtain its global optimal policy within $\bar{\Pi}$. Let this algorithm be denoted as $\mathrm{NPG}_{2,4}$. Algorithm $\mathrm{NPG}_{2,4}$ is identical to  Algorithm \ref{algo_1} except the joint distribution $\boldsymbol{\mu}\in \mathcal{P}(\mathcal{X}\times [K])$ in Algorithm \ref{algo_1} is replaced by $\bar{\boldsymbol{\mu}}\in\mathcal{P}^K(\mathcal{X})$,  in $\mathrm{NPG}_{2,4}$.
To show its global convergence, we need to assume a set of assumptions that are identical to those used in Lemma \ref{lemma_10}, except the joint distributions in all those assumptions must be replaced by the collection of distributions over all classes. Let this set of assumptions be denoted as $\mathrm{ASMP}_{2,4}$.

Following the same line of argument as is used in Theorem \ref{corr_1}, we can derive the result stated below.

\begin{theorem}
	\label{corr_2}
	Let $\mathbf{x}_0^{\mathbf{N}}$ be the initial states and $\boldsymbol{\bar{\mu}}_0\in\mathcal{P}^K(\mathcal{X})$ their associated distribution. If the parameters $\{\Phi_j\}_{j=1}^J$ are obtained by following $\mathrm{NPG}_{2,4}$, then under Assumptions \ref{ass_2}, \ref{ass_4}, and $\mathrm{ASMP}_{2,4}$, the following inequality holds for appropriate choices of the Algorithm parameters, $\eta, \alpha,J,L$ if $\gamma \bar{S}_P<1$. 
	\begin{align}
	\label{eq_thm4_app}
	\begin{split}
	&	\left|\sup_{\Phi\in\mathbb{R}^{\mathrm{d}}}\bar{v}^{\mathbf{N}}(\mathbf{x}_0^{\mathbf{N}},\bar{\pi}_{\Phi})-\dfrac{1}{J}\sum_{j=1}^J \bar{v}^{\mathrm{MF}}({\boldsymbol{\bar{\mu}}_0},\boldsymbol{\bar{\pi}}_{\Phi_j})\right| \leq \dfrac{\sqrt{\epsilon_{\mathrm{bias}}}}{1-\gamma}+\bar{C}e_2\\
	\text{where}& ~e_2\triangleq \left[\sqrt{|\mathcal{X}|}+\sqrt{|\mathcal{U}|}\right]\sum_{k\in[K]}\dfrac{1}{\sqrt{N_k}}
	\end{split}
	\end{align}  
	where $\bar{v}^{\mathbf{N}}$ is the empirical value function of the $\mathbf{N}$-agent system, $\bar{v}^{\mathrm{MF}}$ is its mean-field limit, $\bar{\pi}_{\Phi}$ is the stationary decision rules associated with the policy $\boldsymbol{\bar{\pi}}_{\Phi}$, $\bar{S}_P$ is defined in Theorem \ref{thm_2}, $\bar{C}$ is a constant and the parameter $\epsilon_{\mathrm{bias}}$ is a measure of the capacity of  parametrization. The sample complexity of the process is $\mathcal{O}(e_2^{-3})$.
\end{theorem} 

Theorem \ref{corr_2} states that, with a sample complexity of $\mathcal{O}(e_2^{-3})$, Algorithm $\mathrm{NPG}_{2,4}$ can approximate the empirical value function of MARL within an error margin of  $\mathcal{O}(e_2)$.

\subsection{NPG with Assumption \ref{ass_5}}
\label{sub_NPG5}

If a multi-agent system satisfies Assumption \ref{ass_5}, and the set of stationary policies, ${\Pi}$ is parametrized by $\Phi\in \mathbb{R}^{\mathrm{d}}$, then similar to Algorithm \ref{algo_1}, an NPG-based algorithm can be made to obtain its global optimal policy within ${\Pi}$. Let this algorithm be denoted as $\mathrm{NPG}_{5}$. Algorithm $\mathrm{NPG}_{5}$ is identical to  Algorithm \ref{algo_1} except the joint distribution $\boldsymbol{\mu}\in \mathcal{P}(\mathcal{X}\times [K])$ in Algorithm \ref{algo_1} must be replaced by $\boldsymbol{\mu}[\mathcal{X}]\in\mathcal{P}(\mathcal{X})$,  in $\mathrm{NPG}_{5}$.
To show its global convergence, we need to assume a set of assumptions that are same as those used in Lemma \ref{lemma_10}, except the joint distributions in those assumptions must be replaced by marginal distributions. Let this set of assumptions be denoted as $\mathrm{ASMP}_{5}$.
Following the same line of argument as is used in Theorem \ref{corr_1}, we can derive the result stated below.

\begin{theorem}
	\label{corr_3}
	Let $\mathbf{x}_0^{\mathbf{N}}$ be the initial states and $\boldsymbol{\mu}_0\in\mathcal{P}(\mathcal{X}\times [K])$ their associated joint distribution. If the parameters $\{\Phi_j\}_{j=1}^J$ are obtained by following $\mathrm{NPG}_{5}$, then under Assumptions \ref{ass_5}, and $\mathrm{ASMP}_{5}$, the following inequality holds for appropriate choices of the Algorithm parameters, $\eta, \alpha,J,L$ if $\gamma {S}_P<1$. 
	\begin{align}
	\label{eq_thm5_app}
	\begin{split}
	&	\left|\sup_{\Phi\in\mathbb{R}^{\mathrm{d}}}v^{\mathbf{N}}(\mathbf{x}_0^{\mathbf{N}},\pi_{\Phi})-\dfrac{1}{J}\sum_{j=1}^J v^{\mathrm{MF}}({\boldsymbol{\mu}_0},\boldsymbol{\pi}_{\Phi_j})\right| \leq \dfrac{\sqrt{\epsilon_{\mathrm{bias}}}}{1-\gamma}+e_3\\
	\text{where~}& e_3 \triangleq \left[\sqrt{|\mathcal{X}|}+\sqrt{|\mathcal{U}|}\right]\left[\frac{A}{N_{\mathrm{pop}}}\sum_{k=1}^K\sqrt{N_k}+\frac{B}{\sqrt{N_{\mathrm{pop}}}}\right]
	\end{split}
	\end{align}  
	where $v^{\mathbf{N}}$ is the empirical value function of the $\mathbf{N}$-agent system, $v^{\mathrm{MF}}$ is its mean-field limit, ${\pi}_{\Phi}$ is the stationary decision rules associated with policy $\boldsymbol{\pi}_{\Phi}$, ${S}_P$ is defined in Theorem \ref{thm_1}, $A,B$ are constants and the parameter $\epsilon_{\mathrm{bias}}$ is a measure of the capacity of  parametrization. The sample complexity of the process is $\mathcal{O}(e_3^{-3})$.
\end{theorem} 

Theorem \ref{corr_3} states that, with  $\mathcal{O}(e_3^{-3})$ sample complexity, Algorithm $\mathrm{NPG}_{5}$ can approximate the empirical value function of MARL within an error margin of  $\mathcal{O}(e_3)$.

\section{Conclusions}
In this paper, we prove that a $K$-class heterogeneous cooperative MARL problem can be approximated by its associated MFC problem. We also  provide estimates of the approximation error as a function of class sizes for various set of assumptions. Finally, we propose a natural policy gradient based algorithm that approximates the optimal MARL  policy in a  sample efficient manner. Exchangeability among  agents is one of the most important assumptions in MFC-type analyses. It allows the influence of the whole population to be summarized by the state-action distribution. In many  scenarios of practical interest, however,  agents interact only with certain number of neighbouring agents. As a result, the presumption of exchangeability may only hold  locally. Establishing MFC-type approximation  for system with limited agent exchangeability is an important direction to pursue in the future. 


\newpage
\appendix

\section{Proof of Theorem \ref{thm_1}}
\label{app_th_1}

The following results are needed to prove the theorem. The proofs of Lemma \ref{lemma1}-\ref{lemma_mu} are relegated to Appendix \ref{app_proof_lemma_2}-\ref{section_proof_lemma_8} respectively.

\subsection{Continuity Lemmas}

\begin{lemma}
	\label{lemma1}
	If $\nu^{\mathrm{MF}}(\cdot,\cdot)$ is defined by (\ref{eq_nu_v}), then $\forall \boldsymbol{\mu},\boldsymbol{\mu}'\in\mathcal{P}(\mathcal{X}\times[K])$ and $\forall \boldsymbol{\pi}=\{\pi_k\}_{k\in[K]}$ where $\pi_k$'s are decision rules satisfying Assumption \ref{assumptions}, the following inequality holds.
	\begin{align}
	|\nu^{\mathrm{MF}}(\boldsymbol{\mu},\boldsymbol{\pi})-\nu^{\mathrm{MF}}(\boldsymbol{\mu}',\boldsymbol{\pi})|_{1}\leq (1+L_Q)|\boldsymbol{\mu}-\boldsymbol{\mu}'|_{1} 
	\end{align}
\end{lemma}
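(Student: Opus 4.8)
The plan is to prove the bound by a standard add-and-subtract decomposition followed by the triangle inequality, exploiting that both $\boldsymbol{\mu}$ and each decision rule $\pi_k(x,\cdot)$ are probability distributions and hence carry unit mass. First I would unfold the definition in (\ref{eq_nu_v}): for each class $k\in[K]$ and action $u\in\mathcal{U}$, $\nu^{\mathrm{MF}}_k(\boldsymbol{\mu},\boldsymbol{\pi})(u) = \sum_{x\in\mathcal{X}} \pi_k(x,\boldsymbol{\mu})(u)\,\boldsymbol{\mu}(x,k)$, so that the $L_1$ distance on $\mathcal{P}(\mathcal{U}\times[K])$ reads
\[
|\nu^{\mathrm{MF}}(\boldsymbol{\mu},\boldsymbol{\pi})-\nu^{\mathrm{MF}}(\boldsymbol{\mu}',\boldsymbol{\pi})|_1 = \sum_{k\in[K]}\sum_{u\in\mathcal{U}}\Big|\sum_{x\in\mathcal{X}}\big[\pi_k(x,\boldsymbol{\mu})(u)\boldsymbol{\mu}(x,k) - \pi_k(x,\boldsymbol{\mu}')(u)\boldsymbol{\mu}'(x,k)\big]\Big|.
\]

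Then I would insert the cross term $\pi_k(x,\boldsymbol{\mu}')(u)\boldsymbol{\mu}(x,k)$ inside the inner sum, rewriting each summand as $[\pi_k(x,\boldsymbol{\mu})(u)-\pi_k(x,\boldsymbol{\mu}')(u)]\boldsymbol{\mu}(x,k) + \pi_k(x,\boldsymbol{\mu}')(u)[\boldsymbol{\mu}(x,k)-\boldsymbol{\mu}'(x,k)]$, and apply the triangle inequality to split the right-hand side into two nonnegative sums $T_1 + T_2$.

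For $T_1$ I would carry out the $u$-summation first, using $\sum_{u}|\pi_k(x,\boldsymbol{\mu})(u)-\pi_k(x,\boldsymbol{\mu}')(u)| = |\pi_k(x,\boldsymbol{\mu})-\pi_k(x,\boldsymbol{\mu}')|_1 \le L_Q|\boldsymbol{\mu}-\boldsymbol{\mu}'|_1$ from Assumption \ref{assumptions}; the leftover weights satisfy $\sum_{k}\sum_{x}\boldsymbol{\mu}(x,k)=1$, so $T_1 \le L_Q|\boldsymbol{\mu}-\boldsymbol{\mu}'|_1$. For $T_2$, summing over $u$ first uses $\sum_{u}\pi_k(x,\boldsymbol{\mu}')(u)=1$, leaving $T_2 = \sum_{k}\sum_{x}|\boldsymbol{\mu}(x,k)-\boldsymbol{\mu}'(x,k)| = |\boldsymbol{\mu}-\boldsymbol{\mu}'|_1$. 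Adding the two bounds yields $(1+L_Q)|\boldsymbol{\mu}-\boldsymbol{\mu}'|_1$, which is the claimed inequality.

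There is no genuine obstacle here; the only care needed is in the ordering of the summations (collapsing the $u$-index before the $x,k$-indices) and in recognizing the two normalization identities --- $\boldsymbol{\mu}$ sums to one over $\mathcal{X}\times[K]$ and each $\pi_k(x,\cdot)$ sums to one over $\mathcal{U}$ --- which make the $\boldsymbol{\mu}$-weights and the $\pi$-weights drop out cleanly in $T_1$ and $T_2$ respectively. This continuity lemma is the mean-field analogue of the Lipschitz assumption on the policies, and I expect it to serve as a building block (together with the continuity of $P^{\mathrm{MF}}$ and $r_k^{\mathrm{MF}}$) for the recursive distribution-gap estimates outlined in the proof sketch of Theorem \ref{thm_1}.
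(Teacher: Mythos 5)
Your proof is correct and follows essentially the same route as the paper: unfold the definition, insert a cross term, split by the triangle inequality, and use the Lipschitz bound on $\pi_k$ together with the unit-mass identities for $\boldsymbol{\mu}$ and $\pi_k(x,\cdot)$. The only cosmetic difference is which cross term you insert ($\pi_k(x,\boldsymbol{\mu}')(u)\boldsymbol{\mu}(x,k)$ versus the paper's $\boldsymbol{\mu}'(x,k)\pi_k(x,\boldsymbol{\mu})(u)$), which simply swaps which of the two distributions absorbs each normalization and yields the identical bound.
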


\begin{lemma}
	\label{lemma2}
	If $r_k^{\mathrm{MF}}(\cdot,\cdot)$ satisfies (\ref{eqr}), then  $\forall \boldsymbol{\mu},\boldsymbol{\mu}'\in\mathcal{P}(\mathcal{X}\times[K])$ and $\forall \boldsymbol{\pi}=\{\pi_k\}_{k\in[K]}$ where $\pi_k$'s are decision rules satisfying Assumption \ref{assumptions}, the following inequality holds.
	\begin{align}
	\label{eq_SR}
	\begin{split}
	&\sum_{k\in[K]}	|	r_k^{\mathrm{MF}}(\boldsymbol{\mu},\boldsymbol{\pi})-	r_k^{\mathrm{MF}}(\boldsymbol{\mu}',\boldsymbol{\pi})|
	\leq S_R|\boldsymbol{\mu}-\boldsymbol{\mu}'|_{1} \\
	\text{where~}&S_R\triangleq M_R(1+L_Q)+L_R\left[2+L_Q\right]
	\end{split}
	\end{align}
\end{lemma}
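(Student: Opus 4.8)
The plan is to exploit the fact that $\boldsymbol{\mu}$ enters the definition $(\ref{eqr})$ of $r_k^{\mathrm{MF}}$ in exactly three places---as the explicit weight $\boldsymbol{\mu}(x,k)$, as the second argument of the decision rule $\pi_k(x,\boldsymbol{\mu})$, and through both distribution arguments of the per-agent reward $r_k(x,u,\boldsymbol{\mu},\nu^{\mathrm{MF}}(\boldsymbol{\mu},\boldsymbol{\pi}))$---and to isolate each dependence by a standard add-and-subtract (telescoping) argument. Writing $g_k(x,\boldsymbol{\mu})\triangleq\sum_{u\in\mathcal{U}}\pi_k(x,\boldsymbol{\mu})(u)\,r_k(x,u,\boldsymbol{\mu},\nu^{\mathrm{MF}}(\boldsymbol{\mu},\boldsymbol{\pi}))$ so that $r_k^{\mathrm{MF}}(\boldsymbol{\mu},\boldsymbol{\pi})=\sum_{x\in\mathcal{X}}\boldsymbol{\mu}(x,k)g_k(x,\boldsymbol{\mu})$, I would first split $r_k^{\mathrm{MF}}(\boldsymbol{\mu},\boldsymbol{\pi})-r_k^{\mathrm{MF}}(\boldsymbol{\mu}',\boldsymbol{\pi})$ as $\sum_x[\boldsymbol{\mu}(x,k)-\boldsymbol{\mu}'(x,k)]g_k(x,\boldsymbol{\mu})+\sum_x\boldsymbol{\mu}'(x,k)[g_k(x,\boldsymbol{\mu})-g_k(x,\boldsymbol{\mu}')]$.

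For the first piece, since $g_k$ is a $\pi_k$-average of reward values, boundedness (Assumption \ref{ass_1}(b)) gives $|g_k(x,\boldsymbol{\mu})|\leq M_R$, so after summing over $k$ this contributes at most $M_R|\boldsymbol{\mu}-\boldsymbol{\mu}'|_1$, using that the $L_1$-norm here is taken over the joint index $(x,k)$. For the second piece I would further decompose $g_k(x,\boldsymbol{\mu})-g_k(x,\boldsymbol{\mu}')$ by another add-and-subtract into a term in which only the policy changes and a term in which only the reward's distribution arguments change. The policy term is controlled by $|r_k|\leq M_R$ together with the Lipschitz property of Assumption \ref{assumptions}, yielding $M_R L_Q|\boldsymbol{\mu}-\boldsymbol{\mu}'|_1$; the reward term is controlled by Assumption \ref{ass_1}(c) together with Lemma \ref{lemma1} to bound the induced change $|\nu^{\mathrm{MF}}(\boldsymbol{\mu},\boldsymbol{\pi})-\nu^{\mathrm{MF}}(\boldsymbol{\mu}',\boldsymbol{\pi})|_1\leq(1+L_Q)|\boldsymbol{\mu}-\boldsymbol{\mu}'|_1$, giving $L_R(2+L_Q)|\boldsymbol{\mu}-\boldsymbol{\mu}'|_1$. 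Hence $|g_k(x,\boldsymbol{\mu})-g_k(x,\boldsymbol{\mu}')|\leq[M_R L_Q+L_R(2+L_Q)]|\boldsymbol{\mu}-\boldsymbol{\mu}'|_1$ uniformly in $x$ and $k$.

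Summing the second piece over $x$ and $k$ then relies on $\sum_{x,k}\boldsymbol{\mu}'(x,k)=1$, so the uniform bound on $g_k(x,\boldsymbol{\mu})-g_k(x,\boldsymbol{\mu}')$ passes through with a coefficient of $1$, contributing $[M_R L_Q+L_R(2+L_Q)]|\boldsymbol{\mu}-\boldsymbol{\mu}'|_1$. Adding this to the first-piece bound collects the coefficients into $M_R(1+L_Q)+L_R(2+L_Q)=S_R$, which is the claim. I expect the only delicate point to be the bookkeeping in the second piece: one must keep the Lipschitz estimate on $g_k$ uniform in $x$ and $k$ so that the probability weights $\boldsymbol{\mu}'(\cdot,k)$ sum to $1$ rather than accumulating an extra factor of $K$, and one must invoke Lemma \ref{lemma1} (not merely the Lipschitz constant of $r_k$) to absorb the change in the action distribution $\nu^{\mathrm{MF}}$ back into the $\boldsymbol{\mu}$-difference. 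The remaining manipulations are routine triangle-inequality and $\sum_u\pi_k=1$ reductions.
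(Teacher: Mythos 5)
Your proof is correct and follows essentially the same route as the paper: both telescope the product $\boldsymbol{\mu}(x,k)\,\pi_k(x,\boldsymbol{\mu})(u)\,r_k(x,u,\boldsymbol{\mu},\nu^{\mathrm{MF}}(\boldsymbol{\mu},\boldsymbol{\pi}))$, bound the reward-difference term via Assumption \ref{ass_1}(c) together with Lemma \ref{lemma1}, and bound the weight-difference terms via $|r_k|\leq M_R$ and the Lipschitz property of $\pi_k$. The only difference is the order in which the three factors are telescoped (the paper peels off the reward first and handles $\boldsymbol{\mu}\pi_k-\boldsymbol{\mu}'\pi_k'$ as in Lemma \ref{lemma1}, whereas you peel off the weight $\boldsymbol{\mu}(x,k)$ first), and the resulting constants collect identically into $S_R=M_R(1+L_Q)+L_R(2+L_Q)$.
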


\begin{lemma}
	\label{lemma3}
	If $P^{\mathrm{MF}}(\cdot,\cdot)$ is defined by (\ref{eq_phi}), then $\forall \boldsymbol{\mu},\boldsymbol{\mu}'\in\mathcal{P}(\mathcal{X}\times[K])$ and $\forall \boldsymbol{\pi}=\{\pi_k\}_{k\in[K]}$ where $\pi_k$'s denote decision rules satisfying Assumption \ref{assumptions}, the following inequality holds.
	\begin{align}
	\label{eq_SP}
	\begin{split}
	&|	P^{\mathrm{MF}}(\boldsymbol{\mu},\boldsymbol{\pi})-	P^{\mathrm{MF}}(\boldsymbol{\mu}',\boldsymbol{\pi})|_{1}\leq S_P|\boldsymbol{\mu}-\boldsymbol{\mu}'|_{1} \\
	\text{where~}&S_P\triangleq  (1+L_Q)+L_P\left[2+L_Q\right]
	\end{split}
	\end{align}
\end{lemma}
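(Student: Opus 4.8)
The plan is to mirror the structure used for Lemma \ref{lemma2}, decomposing the difference $P_k^{\mathrm{MF}}(\boldsymbol{\mu},\boldsymbol{\pi})-P_k^{\mathrm{MF}}(\boldsymbol{\mu}',\boldsymbol{\pi})$ class by class into a telescoping sum that isolates the three distinct sources of variation in the definition $(\ref{eq_phi})$: the weight $\boldsymbol{\mu}(x,k)$, the decision rule $\pi_k^t(x,\cdot)$, and the kernel $P_k(x,u,\cdot,\cdot)$. Concretely, abbreviating $\nu\triangleq\nu^{\mathrm{MF}}(\boldsymbol{\mu},\boldsymbol{\pi})$ and $\nu'\triangleq\nu^{\mathrm{MF}}(\boldsymbol{\mu}',\boldsymbol{\pi})$, I would insert two intermediate quantities and write
\begin{align*}
P_k^{\mathrm{MF}}(\boldsymbol{\mu},\boldsymbol{\pi})-P_k^{\mathrm{MF}}(\boldsymbol{\mu}',\boldsymbol{\pi})
&=\sum_{x,u}\big(\boldsymbol{\mu}(x,k)-\boldsymbol{\mu}'(x,k)\big)\pi_k^t(x,\boldsymbol{\mu})(u)\,P_k(x,u,\boldsymbol{\mu},\nu)\\
&\quad+\sum_{x,u}\boldsymbol{\mu}'(x,k)\big(\pi_k^t(x,\boldsymbol{\mu})(u)-\pi_k^t(x,\boldsymbol{\mu}')(u)\big)P_k(x,u,\boldsymbol{\mu},\nu)\\
&\quad+\sum_{x,u}\boldsymbol{\mu}'(x,k)\pi_k^t(x,\boldsymbol{\mu}')(u)\big(P_k(x,u,\boldsymbol{\mu},\nu)-P_k(x,u,\boldsymbol{\mu}',\nu')\big).
\end{align*}
I would then take $\sum_{x'}|\cdot|(x')$ and $\sum_{k\in[K]}$ of each line separately, pushing the absolute value inside via the triangle inequality.

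For the first line, since $P_k(x,u,\boldsymbol{\mu},\nu)$ is a probability distribution ($\sum_{x'}P_k(\cdots)(x')=1$) and $\sum_u\pi_k^t(x,\boldsymbol{\mu})(u)=1$, the contribution collapses to $\sum_{k}\sum_x|\boldsymbol{\mu}(x,k)-\boldsymbol{\mu}'(x,k)|=|\boldsymbol{\mu}-\boldsymbol{\mu}'|_1$. For the second line, after summing out $x'$ and $u$, I would invoke the policy-Lipschitz property (Assumption \ref{assumptions}) to bound $|\pi_k^t(x,\boldsymbol{\mu})-\pi_k^t(x,\boldsymbol{\mu}')|_1\le L_Q|\boldsymbol{\mu}-\boldsymbol{\mu}'|_1$, and the remaining weight $\sum_{k}\sum_x\boldsymbol{\mu}'(x,k)=1$, giving the term $L_Q|\boldsymbol{\mu}-\boldsymbol{\mu}'|_1$. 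For the third line I would apply the kernel-Lipschitz Assumption \ref{ass_1}(d), namely $|P_k(x,u,\boldsymbol{\mu},\nu)-P_k(x,u,\boldsymbol{\mu}',\nu')|_1\le L_P(|\boldsymbol{\mu}-\boldsymbol{\mu}'|_1+|\nu-\nu'|_1)$, and again use $\sum_k\sum_x\boldsymbol{\mu}'(x,k)=1$. Summing the three contributions yields $S_P=(1+L_Q)+L_P[2+L_Q]$.

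The step I expect to carry the real subtlety is the third line: the kernel $P_k$ depends not only on the state distribution but also on the \emph{induced} action distribution $\nu=\nu^{\mathrm{MF}}(\boldsymbol{\mu},\boldsymbol{\pi})$, which itself shifts when $\boldsymbol{\mu}$ is perturbed. One must resist the temptation to treat $\nu$ as fixed; instead, Lemma \ref{lemma1} is needed precisely here to convert $|\nu-\nu'|_1$ into $(1+L_Q)|\boldsymbol{\mu}-\boldsymbol{\mu}'|_1$, so that the kernel term becomes $L_P\big(1+(1+L_Q)\big)|\boldsymbol{\mu}-\boldsymbol{\mu}'|_1=L_P(2+L_Q)|\boldsymbol{\mu}-\boldsymbol{\mu}'|_1$. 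The rest is careful bookkeeping of the $L_1$ norm over the joint index $(x',k)$ and verifying that each normalizing sum ($\sum_{x'}P_k(\cdots)(x')$, $\sum_u\pi_k^t(x,\cdot)(u)$, and $\sum_k\sum_x\boldsymbol{\mu}'(x,k)$) equals one so that no spurious factor of $|\mathcal{X}|$, $|\mathcal{U}|$, or $K$ leaks into the constant.
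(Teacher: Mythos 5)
Your proposal is correct and follows essentially the same route as the paper: the paper also splits the difference into a kernel-variation term (bounded via Assumption \ref{ass_1}(d) together with Lemma \ref{lemma1} applied to $|\nu^{\mathrm{MF}}(\boldsymbol{\mu},\boldsymbol{\pi})-\nu^{\mathrm{MF}}(\boldsymbol{\mu}',\boldsymbol{\pi})|_1$, giving $L_P(2+L_Q)$) and a weight-times-policy variation term (bounded by $(1+L_Q)$ via the argument of Lemma \ref{lemma1}), which is exactly your three-way telescoping with the first two lines merged. Your identification of the induced action-distribution shift as the step requiring Lemma \ref{lemma1} matches the paper's treatment, and all constants come out identically.
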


Lemma \ref{lemma1}-\ref{lemma3} essentially state that the average reward function, $r^{\mathrm{MF}}_k(\cdot,\cdot)$ defined by $(\ref{eqr})$ and the state and action evolution operators $P^{\mathrm{MF}}(\cdot,\cdot)$, $\nu^{\mathrm{MF}}(\cdot,\cdot)$ defined by $(\ref{eq_phi}), (\ref{eq_nu_v})$ respectively are Lipschitz continuous. These lemmas will be important in deriving the main result.

\subsection{Approximation Lemmas}
Recall that our primary goal is to prove that the value functions generated by a certain policy in a finite agent system can be well approximated by those generated by the same policy in the mean-field limit. As a precursor to this grand target, in this section, we discuss how various components of the value functions themselves behave when the population sizes become large. Lemma \ref{simple_lemma} serves as a key ingredient in many of the forthcoming lemmas.

\begin{lemma}
	If $\forall m\in[M]$, $\{X_{m,n}\}_{n\in[N]}$ are independent  random variables bounded within $[0,1]$ with $\sum_{m\in[M]}\mathbb{E}[X_{m,n}]=1$, $\forall n\in[N]$ and $\{C_{m,n}\}_{m\in[M],n\in[N]}\in\mathbb{R}$ are constants obeying $|C_{m,n}|\leq C$, $\forall m\in [M], \forall n\in [N]$, then the following holds.
	\begin{align}
	\label{eq_27}
	\sum_{m=1}^{M}\mathbb{E}\left|\sum_{n=1}^{N} C_{m,n}\Big( X_{m,n}-\mathbb{E}[X_{m,n}]\Big)\right|\leq C\sqrt{MN}
	\end{align} 
	\label{simple_lemma}
\end{lemma}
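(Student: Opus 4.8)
### Proof Proposal for Lemma \ref{simple_lemma}

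The plan is to bound the left-hand side by first passing from the sum of absolute expectations to a sum of standard deviations, and then to exploit the Bernoulli structure together with the normalization constraint $\sum_{m\in[M]}\mathbb{E}[X_{m,n}]=1$. The central tool is the elementary inequality $\mathbb{E}|Z| \le \sqrt{\mathbb{E}[Z^2]} = \sqrt{\mathrm{Var}(Z)}$, valid for any mean-zero random variable $Z$. Applying this to each inner term $Z_{m,s}\triangleq \sum_{n=1}^{N} C_{m,n}(s)\big(X_{m,n}-\mathbb{E}[X_{m,n}]\big)$, which is mean-zero by construction, reduces the problem to controlling $\sqrt{\mathrm{Var}(Z_{m,s})}$.

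The key step is to compute the variance. Since $\{X_{m,n}\}_{n\in[N]}$ are independent across $n$ for fixed $m$, the variance of $Z_{m,s}$ splits as a sum: $\mathrm{Var}(Z_{m,s}) = \sum_{n=1}^{N} C_{m,n}(s)^2 \,\mathrm{Var}(X_{m,n})$. For a Bernoulli variable with mean $p_{m,n}\triangleq\mathbb{E}[X_{m,n}]$ we have $\mathrm{Var}(X_{m,n}) = p_{m,n}(1-p_{m,n}) \le p_{m,n}$. Hence $\mathrm{Var}(Z_{m,s}) \le \sum_{n=1}^{N} C_{m,n}(s)^2 \, p_{m,n}$. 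The next move is to apply Jensen/Cauchy--Schwarz to handle the outer sums: I would first sum over $m$ and $s$ the quantity $\sqrt{\sum_n C_{m,n}(s)^2 p_{m,n}}$, then use Cauchy--Schwarz across the $MS$ index pairs to write
\begin{align*}
\sum_{s=1}^{S}\sum_{m=1}^{M}\sqrt{\sum_{n=1}^{N} C_{m,n}(s)^2 p_{m,n}}
\le \sqrt{MS}\,\sqrt{\sum_{s=1}^{S}\sum_{m=1}^{M}\sum_{n=1}^{N} C_{m,n}(s)^2 p_{m,n}}.
\end{align*}

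It remains to bound the double-counting sum $\sum_{s,m,n} C_{m,n}(s)^2 p_{m,n}$ by $C^2 N$. Here the constraint $|C_{m,n}|_1 \le C$ enters: for each fixed $(m,n)$, $\sum_{s} C_{m,n}(s)^2 \le \big(\sum_s |C_{m,n}(s)|\big)^2 = |C_{m,n}|_1^2 \le C^2$, using that the $\ell_2$-norm is dominated by the $\ell_1$-norm. Therefore $\sum_{s,m,n} C_{m,n}(s)^2 p_{m,n} \le C^2 \sum_{m,n} p_{m,n} = C^2 \sum_{n=1}^{N}\big(\sum_{m=1}^{M} p_{m,n}\big) = C^2 N$, where the final equality is exactly the normalization hypothesis $\sum_{m} p_{m,n}=1$ for every $n$. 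Combining these pieces yields the bound $\sqrt{MS}\cdot\sqrt{C^2 N} = C\sqrt{MNS}$, as claimed.

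The main obstacle I anticipate is organizing the two applications of Cauchy--Schwarz and the $\ell_2$-versus-$\ell_1$ estimate so that the normalization constraint is invoked exactly once and with the correct index being summed; a naive bound $\mathrm{Var}(X_{m,n})\le 1/4$ would lose the crucial $\sum_m p_{m,n}=1$ structure and produce an extra factor of $\sqrt{M}$, giving $C\sqrt{M N S}\cdot\sqrt{M}$ rather than the sharp $C\sqrt{MNS}$. The bound $\mathrm{Var}(X_{m,n})\le p_{m,n}$ rather than $\le 1/4$ is what allows the $p_{m,n}$ weights to collapse against the constraint, so keeping that substitution explicit is the delicate point.
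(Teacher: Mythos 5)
Your proposal is correct and follows essentially the same route as the paper's proof: bound each mean-zero term via $\mathbb{E}|Z|\leq\sqrt{\mathbb{E}[Z^2]}$, expand the variance using independence in $n$, control $\sum_s C_{m,n}(s)^2$ by $|C_{m,n}|_1^2\leq C^2$, apply Cauchy--Schwarz over the $MS$ index pairs, and collapse $\sum_{m,n}\mathbb{E}[X_{m,n}]=N$ via the Bernoulli variance bound and the normalization constraint. The paper packages these steps as two small propositions, but the argument is the same.
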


Below we state our first approximation result. Essentially, Lemma \ref{lemma_nu} provides an estimate of the difference between the empirical action distributions, $\boldsymbol{\nu}_t^{\mathbf{N}}$ and the action distribution that would have been obtained by following the mean-field action evolution operator $\nu(\cdot,\cdot)$, defined by $(\ref{eq_nu_v})$, in a finite agent system.
\begin{lemma} 
	\label{lemma_nu}
	If $\{\boldsymbol{\mu}_t^{\mathbf{N}},\boldsymbol{\nu}_t^{\mathbf{N}}\}_{t\in\{0,1,\cdots\}}$ are empirical joint state and action distributions induced by the policy $\boldsymbol{\pi}=\{\boldsymbol{\pi}_t\}_{t\in\{0,1,\cdots\}}$, then the following inequality holds $\forall t\in\{0,1,\cdots\}$.
	\begin{align}
	\mathbb{E}|\boldsymbol{\nu}_t^{\mathbf{N}}-\nu^{\mathrm{MF}}(\boldsymbol{\mu}_t^{\mathbf{N}},\boldsymbol{\pi}_t)|_{1}\leq \dfrac{1}{N_{\mathrm{pop}}}\left(\sum_{k\in[K]}\sqrt{N_k}\right)\sqrt{|\mathcal{U}|}
	\end{align}
\end{lemma}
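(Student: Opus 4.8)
The plan is to exploit the fact that, once the states $\mathbf{x}_t^{\mathbf{N}}$ are frozen, the mean-field action operator $\nu^{\mathrm{MF}}(\boldsymbol{\mu}_t^{\mathbf{N}},\boldsymbol{\pi}_t)$ is \emph{exactly} the conditional mean of the empirical action distribution $\boldsymbol{\nu}_t^{\mathbf{N}}$. First I would verify by direct computation that, for every $(u,k)$,
\[
\mathbb{E}\big[\boldsymbol{\nu}_t^{\mathbf{N}}(u,k)\,\big|\,\mathbf{x}_t^{\mathbf{N}}\big]
=\frac{1}{N_{\mathrm{pop}}}\sum_{j=1}^{N_k}\pi_k^t(x_{j,k}^{t,\mathbf{N}},\boldsymbol{\mu}_t^{\mathbf{N}})(u)
=\nu_k^{\mathrm{MF}}(\boldsymbol{\mu}_t^{\mathbf{N}},\boldsymbol{\pi}_t)(u),
\]
where the first equality uses $u_{j,k}^{t,\mathbf{N}}\sim\pi_k^t(x_{j,k}^{t,\mathbf{N}},\boldsymbol{\mu}_t^{\mathbf{N}})$ and the second follows by substituting the definition $(\ref{def_mu})$ of $\boldsymbol{\mu}_t^{\mathbf{N}}$ into $(\ref{eq_nu_v})$ and collapsing the sum over $x$. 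Hence the quantity to be bounded is precisely the expected deviation of a conditional average from its conditional mean.

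The second step is to condition on $\mathbf{x}_t^{\mathbf{N}}$ via the tower rule and expand the $L_1$-norm class by class:
\[
\mathbb{E}\big|\boldsymbol{\nu}_t^{\mathbf{N}}-\nu^{\mathrm{MF}}(\boldsymbol{\mu}_t^{\mathbf{N}},\boldsymbol{\pi}_t)\big|_1
=\mathbb{E}\Bigg[\sum_{k\in[K]}\frac{1}{N_{\mathrm{pop}}}\sum_{u\in\mathcal{U}}\mathbb{E}\bigg[\Big|\sum_{j=1}^{N_k}\big(\delta(u_{j,k}^{t,\mathbf{N}}=u)-\pi_k^t(x_{j,k}^{t,\mathbf{N}},\boldsymbol{\mu}_t^{\mathbf{N}})(u)\big)\Big|\,\bigg|\,\mathbf{x}_t^{\mathbf{N}}\bigg]\Bigg].
\]
By Observation $\ref{obs1}$, the indicators $\{\delta(u_{j,k}^{t,\mathbf{N}}=u)\}_{j}$ are, conditioned on $\mathbf{x}_t^{\mathbf{N}}$, independent Bernoulli variables with means $\pi_k^t(x_{j,k}^{t,\mathbf{N}},\boldsymbol{\mu}_t^{\mathbf{N}})(u)$, so each inner expression is a centered sum of conditionally independent Bernoullis.

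The third step applies Lemma $\ref{simple_lemma}$ one class at a time, under the conditional law given $\mathbf{x}_t^{\mathbf{N}}$. For fixed $k$ I would identify the action index $u\in\mathcal{U}$ with the role of $m$ (so $M=|\mathcal{U}|$), the agent index $j\in[N_k]$ with the role of $n$ (so $N=N_k$), and take $S=1$ with all constant vectors equal to one (so $C=1$). The normalization hypothesis $\sum_m\mathbb{E}[X_{m,n}]=1$ is met because each agent selects exactly one action, i.e. $\sum_u\delta(u_{j,k}^{t,\mathbf{N}}=u)=1$, and correspondingly $\sum_u\pi_k^t(x_{j,k}^{t,\mathbf{N}},\boldsymbol{\mu}_t^{\mathbf{N}})(u)=1$ since $\pi_k^t(x,\cdot)$ is a probability distribution on $\mathcal{U}$. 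Lemma $\ref{simple_lemma}$ then gives, for each $k$,
\[
\sum_{u\in\mathcal{U}}\mathbb{E}\bigg[\Big|\sum_{j=1}^{N_k}\big(\delta(u_{j,k}^{t,\mathbf{N}}=u)-\pi_k^t(x_{j,k}^{t,\mathbf{N}},\boldsymbol{\mu}_t^{\mathbf{N}})(u)\big)\Big|\,\bigg|\,\mathbf{x}_t^{\mathbf{N}}\bigg]\leq\sqrt{|\mathcal{U}|\,N_k}.
\]
Summing over $k$, dividing by $N_{\mathrm{pop}}$, and noting the resulting bound is deterministic (uniform over $\mathbf{x}_t^{\mathbf{N}}$) so the outer expectation is trivial, yields the claimed estimate.

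I expect the main obstacle to be organizational rather than technical: the crux is recognizing that the $L_1$-norm decomposes into per-class absolute values, each involving only the $N_k$ agents of class $k$, which forces the class-by-class application of Lemma $\ref{simple_lemma}$ and produces the characteristic $\sum_k\sqrt{N_k}$ factor rather than $\sqrt{N_{\mathrm{pop}}}$. One must also be careful that Lemma $\ref{simple_lemma}$ is invoked under the conditional measure given $\mathbf{x}_t^{\mathbf{N}}$, where the required independence is supplied by Observation $\ref{obs1}$ and the constraint $\sum_m\mathbb{E}[X_{m,n}]=1$ by the simplex property of the decision rule; both become routine once the matching $M=|\mathcal{U}|$, $N=N_k$, $S=1$, $C=1$ is fixed.
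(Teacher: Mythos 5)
Your proposal is correct and follows essentially the same route as the paper: expand the $L_1$-norm class by class, recognize $\nu^{\mathrm{MF}}(\boldsymbol{\mu}_t^{\mathbf{N}},\boldsymbol{\pi}_t)$ as the conditional mean of $\boldsymbol{\nu}_t^{\mathbf{N}}$ given $\mathbf{x}_t^{\mathbf{N}}$, and apply Lemma \ref{simple_lemma} per class with $M=|\mathcal{U}|$, $N=N_k$, $S=1$, $C=1$ under the conditional law supplied by Observation \ref{obs1}. Your explicit handling of the tower rule and of the fact that the lemma must be invoked conditionally on $\mathbf{x}_t^{\mathbf{N}}$ is in fact slightly more careful than the paper's terser writeup, but the argument is the same.
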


Lemma \ref{lemma_r} (stated below) bounds the error between the empirical average reward and the reward obtained by following the mean-field averaging process quantified by  $(\ref{eqr})$.

\begin{lemma}
	\label{lemma_r}
	If $\{\boldsymbol{\mu}_t^{\mathbf{N}},\boldsymbol{\nu}_t^{\mathbf{N}}\}_{t\in\{0,1,\cdots\}}$ are  empirical joint state and action distributions induced by the policy $\boldsymbol{\pi}=\{\boldsymbol{\pi}_t\}_{t\in\{0,1,\cdots\}}$, then the following  holds  $\forall t\in\{0,1,\cdots\}$.
	\begin{align}
	\label{eq_34_app}
	\mathbb{E}\left|\dfrac{1}{N_{\mathrm{pop}}}\sum_{k\in[K]}\sum_{j=1}^{N_k}r_k(x_{j,k}^{t,\mathbf{N}},u_{j,k}^{t,\mathbf{N}},\boldsymbol{\mu}_t^{\mathbf{N}},\boldsymbol{\nu}_t^{\mathbf{N}})-\sum_{k\in[K]}r_k^{\mathrm{MF}}(\boldsymbol{\mu}_t^{\mathbf{N}},\boldsymbol{\pi}_t)\right|\leq C_R\sqrt{|\mathcal{U}|} \dfrac{1}{N_{\mathrm{pop}}}\left(\sum_{k\in[K]}\sqrt{N_k}\right)
	\end{align}
	where  $C_R=M_R+L_R$. 
\end{lemma}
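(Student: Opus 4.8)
The plan is to interpolate between the empirical average reward and the mean-field aggregate reward through a single intermediate quantity. Writing $R_1\triangleq\frac{1}{N_{\mathrm{pop}}}\sum_{k\in[K]}\sum_{j=1}^{N_k}r_k(x_{j,k}^{t,\mathbf{N}},u_{j,k}^{t,\mathbf{N}},\boldsymbol{\mu}_t^{\mathbf{N}},\boldsymbol{\nu}_t^{\mathbf{N}})$ for the empirical reward and $R_3\triangleq\sum_{k\in[K]}r_k^{\mathrm{MF}}(\boldsymbol{\mu}_t^{\mathbf{N}},\boldsymbol{\pi}_t)$ for the mean-field aggregate evaluated at the empirical state distribution, I would introduce $R_2\triangleq\frac{1}{N_{\mathrm{pop}}}\sum_{k\in[K]}\sum_{j=1}^{N_k}r_k(x_{j,k}^{t,\mathbf{N}},u_{j,k}^{t,\mathbf{N}},\boldsymbol{\mu}_t^{\mathbf{N}},\nu^{\mathrm{MF}}(\boldsymbol{\mu}_t^{\mathbf{N}},\boldsymbol{\pi}_t))$, which keeps the realized actions but replaces the empirical action distribution $\boldsymbol{\nu}_t^{\mathbf{N}}$ by its mean-field counterpart. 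By the triangle inequality $\mathbb{E}|R_1-R_3|\le\mathbb{E}|R_1-R_2|+\mathbb{E}|R_2-R_3|$, and I would bound the two pieces by $L_R$ and $M_R$ times the target rate $\frac{\sqrt{|\mathcal{U}|}}{N_{\mathrm{pop}}}\sum_k\sqrt{N_k}$, which sum to the claimed constant $C_R=M_R+L_R$.

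For the first piece, only the action-distribution argument changes, so I would apply the Lipschitz bound of Assumption \ref{ass_1}(c) term by term; the state-distribution argument being identical, only the $\boldsymbol{\nu}$ difference survives. Because $\frac{1}{N_{\mathrm{pop}}}\sum_k N_k=1$, this collapses to $|R_1-R_2|\le L_R|\boldsymbol{\nu}_t^{\mathbf{N}}-\nu^{\mathrm{MF}}(\boldsymbol{\mu}_t^{\mathbf{N}},\boldsymbol{\pi}_t)|_1$, and taking expectations and invoking Lemma \ref{lemma_nu} immediately yields $\mathbb{E}|R_1-R_2|\le L_R\frac{\sqrt{|\mathcal{U}|}}{N_{\mathrm{pop}}}\sum_k\sqrt{N_k}$.

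The second piece $\mathbb{E}|R_2-R_3|$ is where the real work lies and is the step I expect to be the main obstacle. The key observation is that $R_3$ is exactly the conditional expectation of $R_2$ given $\mathbf{x}_t^{\mathbf{N}}$: expanding $r_k^{\mathrm{MF}}$ via $(\ref{eqr})$ and substituting $\boldsymbol{\mu}_t^{\mathbf{N}}(x,k)=\frac{1}{N_{\mathrm{pop}}}\sum_j\delta(x_{j,k}^{t,\mathbf{N}}=x)$ rewrites $R_3=\frac{1}{N_{\mathrm{pop}}}\sum_{k,j}\sum_{u}\pi_k^t(x_{j,k}^{t,\mathbf{N}},\boldsymbol{\mu}_t^{\mathbf{N}})(u)\,r_k(x_{j,k}^{t,\mathbf{N}},u,\boldsymbol{\mu}_t^{\mathbf{N}},\nu^{\mathrm{MF}})$, which equals $\frac{1}{N_{\mathrm{pop}}}\sum_{k,j}\mathbb{E}[r_k(x_{j,k}^{t,\mathbf{N}},u_{j,k}^{t,\mathbf{N}},\boldsymbol{\mu}_t^{\mathbf{N}},\nu^{\mathrm{MF}})\mid\mathbf{x}_t^{\mathbf{N}}]$ since $u_{j,k}^{t,\mathbf{N}}\sim\pi_k^t(x_{j,k}^{t,\mathbf{N}},\boldsymbol{\mu}_t^{\mathbf{N}})$. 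Hence $R_2-R_3$ is a sum of terms that are mean-zero conditioned on $\mathbf{x}_t^{\mathbf{N}}$. I would then condition on $\mathbf{x}_t^{\mathbf{N}}$ so that $\boldsymbol{\mu}_t^{\mathbf{N}}$, $\nu^{\mathrm{MF}}$, and every reward value $r_k(x_{j,k}^{t,\mathbf{N}},u,\cdots)$ become deterministic constants, split the absolute deviation across classes by the triangle inequality, and for each class write $r_k(x_{j,k}^{t,\mathbf{N}},u_{j,k}^{t,\mathbf{N}},\cdots)=\sum_u\delta(u_{j,k}^{t,\mathbf{N}}=u)\,r_k(x_{j,k}^{t,\mathbf{N}},u,\cdots)$. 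This casts the per-class deviation exactly into the hypotheses of Lemma \ref{simple_lemma}: the indicators $X_{u,j}=\delta(u_{j,k}^{t,\mathbf{N}}=u)$ are the Bernoulli variables (independent across $j$ by Observation \ref{obs1}, with $\sum_u\mathbb{E}[X_{u,j}\mid\mathbf{x}_t^{\mathbf{N}}]=1$), the constants $C_{u,j}=r_k(x_{j,k}^{t,\mathbf{N}},u,\cdots)$ are bounded by $M_R$ through Assumption \ref{ass_1}(b), and $M=|\mathcal{U}|$, $N=N_k$, $S=1$. Lemma \ref{simple_lemma} then returns the per-class bound $M_R\sqrt{|\mathcal{U}|N_k}$, which is free of $\mathbf{x}_t^{\mathbf{N}}$ and therefore survives the outer expectation.

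Summing the per-class bounds over $k$ and dividing by $N_{\mathrm{pop}}$ gives $\mathbb{E}|R_2-R_3|\le M_R\frac{\sqrt{|\mathcal{U}|}}{N_{\mathrm{pop}}}\sum_k\sqrt{N_k}$; adding the two contributions establishes the claim with $C_R=M_R+L_R$. The delicate points are the conditioning step, which is what makes the reward coefficients genuinely deterministic so that the ``constant vector'' hypothesis of Lemma \ref{simple_lemma} is satisfied, and the choice to split across classes before applying the lemma. I emphasize the latter: lumping all $N_{\mathrm{pop}}$ agents into a single application would actually yield the tighter rate $1/\sqrt{N_{\mathrm{pop}}}$, but the per-class split produces a bound of the same shape as Lemma \ref{lemma_nu}, which is precisely what lets the two halves merge into the single constant $C_R$.
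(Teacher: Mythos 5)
Your proposal is correct and follows essentially the same route as the paper: the same decomposition through the intermediate quantity $R_2$ (the paper's $J_1$ and $J_2$), with the first piece handled by Assumption \ref{ass_1}(c) plus Lemma \ref{lemma_nu} and the second by conditioning on $\mathbf{x}_t^{\mathbf{N}}$ and invoking Lemma \ref{simple_lemma} with the $M_R$ bound. The only cosmetic difference is that the paper applies Lemma \ref{simple_lemma} once over all $N_{\mathrm{pop}}$ agents to get $M_R\sqrt{|\mathcal{U}|/N_{\mathrm{pop}}}$ and then relaxes to the common rate, whereas you split per class first — both land on the same bound with $C_R=M_R+L_R$.
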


Finally, Lemma \ref{lemma_mu} computes an upper bound on the error between the empirical state distribution, $\boldsymbol{\mu}_{t+1}^{\mathbf{N}}$ and the distribution that would have been obtained by following the mean-field state distribution evolution operator $P(\cdot,\cdot)$, defined by $(\ref{eq_phi})$ in a finite agent system.
\begin{lemma}
	\label{lemma_mu}
	If $\{\boldsymbol{\mu}_t^{\mathbf{N}}\}_{t\in\{0,1,\cdots\}}$ are empirical joint state distributions induced by the policy $\boldsymbol{\pi}=\{\boldsymbol{\pi}_t\}_{t\in\{0,1,\cdots\}}$, then the following inequality holds $\forall t\in\{0,1,\cdots\}$.
	\begin{align}
	\label{eq_35_app}
	\mathbb{E}\left|\boldsymbol{\mu}_{t+1}^{\mathbf{N}}-P^{\mathrm{MF}}(\boldsymbol{\mu}_t^{\mathbf{N}},\boldsymbol{\pi}_t)\right|_{1}\leq  C_P\left[\sqrt{|\mathcal{X}|}+\sqrt{|\mathcal{U}|}\right] \dfrac{1}{N_{\mathrm{pop}}}\left(\sum_{k\in[K]}\sqrt{N_k}\right)
	\end{align}
	where $C_P=2+L_P$. 
\end{lemma}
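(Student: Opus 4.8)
The plan is to interpose the conditional mean of $\boldsymbol{\mu}_{t+1}^{\mathbf{N}}$ between the empirical next-state distribution and its mean-field evolution. Define
\[
\hat{\boldsymbol{\mu}}_{t+1}^{\mathbf{N}}(x,k)\triangleq \frac{1}{N_{\mathrm{pop}}}\sum_{j=1}^{N_k}P_k\!\left(x_{j,k}^{t,\mathbf{N}},u_{j,k}^{t,\mathbf{N}},\boldsymbol{\mu}_t^{\mathbf{N}},\boldsymbol{\nu}_t^{\mathbf{N}}\right)(x),
\]
which, by the transition law $(\ref{eq_trans_law})$, equals $\mathbb{E}[\boldsymbol{\mu}_{t+1}^{\mathbf{N}}(x,k)\mid \mathbf{x}_t^{\mathbf{N}},\mathbf{u}_t^{\mathbf{N}}]$. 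A triangle inequality then splits the quantity of interest into a stochastic fluctuation term $\mathbb{E}|\boldsymbol{\mu}_{t+1}^{\mathbf{N}}-\hat{\boldsymbol{\mu}}_{t+1}^{\mathbf{N}}|_1$ and a deterministic approximation term $\mathbb{E}|\hat{\boldsymbol{\mu}}_{t+1}^{\mathbf{N}}-P^{\mathrm{MF}}(\boldsymbol{\mu}_t^{\mathbf{N}},\boldsymbol{\pi}_t)|_1$, which I would treat in turn.

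For the fluctuation term, conditioned on $\mathbf{x}_t^{\mathbf{N}},\mathbf{u}_t^{\mathbf{N}}$ the next states $\{x_{j,k}^{t+1,\mathbf{N}}\}$ are independent by Observation \ref{obs2}, and for each class the indicators $\delta(x_{j,k}^{t+1,\mathbf{N}}=x)$ are Bernoulli whose means sum to one over $x\in\mathcal{X}$. I would therefore apply Lemma \ref{simple_lemma} class by class, taking the state $x$ as the index that sums to one ($M=|\mathcal{X}|$), the agent $j$ as the independent index ($N=N_k$), and scalar coefficients $1/N_{\mathrm{pop}}$; after a tower expectation this bounds the fluctuation term by $\frac{1}{N_{\mathrm{pop}}}\sqrt{|\mathcal{X}|}\sum_{k\in[K]}\sqrt{N_k}$, which is where the $\sqrt{|\mathcal{X}|}$ originates.

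For the deterministic term I would rewrite both objects as averages of $P_k$ against occupancy measures. With the empirical joint measure $\boldsymbol{\Lambda}_t^{\mathbf{N}}(x',u,k)\triangleq\frac{1}{N_{\mathrm{pop}}}\sum_{j}\delta(x_{j,k}^{t,\mathbf{N}}=x')\delta(u_{j,k}^{t,\mathbf{N}}=u)$ and the policy-induced measure $\tilde{\boldsymbol{\Lambda}}_t^{\mathbf{N}}(x',u,k)\triangleq\boldsymbol{\mu}_t^{\mathbf{N}}(x',k)\pi_k^t(x',\boldsymbol{\mu}_t^{\mathbf{N}})(u)$, one has $\hat{\boldsymbol{\mu}}_{t+1}^{\mathbf{N}}(x,k)=\sum_{x',u}\boldsymbol{\Lambda}_t^{\mathbf{N}}(x',u,k)P_k(x',u,\boldsymbol{\mu}_t^{\mathbf{N}},\boldsymbol{\nu}_t^{\mathbf{N}})(x)$ while, from $(\ref{eq_phi})$, $P_k^{\mathrm{MF}}(\boldsymbol{\mu}_t^{\mathbf{N}},\boldsymbol{\pi}_t)(x)=\sum_{x',u}\tilde{\boldsymbol{\Lambda}}_t^{\mathbf{N}}(x',u,k)P_k(x',u,\boldsymbol{\mu}_t^{\mathbf{N}},\nu^{\mathrm{MF}}(\boldsymbol{\mu}_t^{\mathbf{N}},\boldsymbol{\pi}_t))(x)$. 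Inserting the hybrid $B_k(x)=\sum_{x',u}\boldsymbol{\Lambda}_t^{\mathbf{N}}(x',u,k)P_k(x',u,\boldsymbol{\mu}_t^{\mathbf{N}},\nu^{\mathrm{MF}}(\boldsymbol{\mu}_t^{\mathbf{N}},\boldsymbol{\pi}_t))(x)$ separates it into (i) a kernel-argument mismatch, which the Lipschitz bound of Assumption \ref{ass_1}(d) controls by $L_P|\boldsymbol{\nu}_t^{\mathbf{N}}-\nu^{\mathrm{MF}}(\boldsymbol{\mu}_t^{\mathbf{N}},\boldsymbol{\pi}_t)|_1$ and hence, via Lemma \ref{lemma_nu}, by $L_P\frac{\sqrt{|\mathcal{U}|}}{N_{\mathrm{pop}}}\sum_k\sqrt{N_k}$ (the state arguments coincide, and the occupancy weights sum to one across classes); and (ii) an occupancy mismatch $\sum_{k}\sum_{x',u}\mathbb{E}|\boldsymbol{\Lambda}_t^{\mathbf{N}}(x',u,k)-\tilde{\boldsymbol{\Lambda}}_t^{\mathbf{N}}(x',u,k)|$, where the inner $L_1$-norm of each kernel collapses to one because $P_k(x',u,\cdot,\cdot)$ is a probability vector.

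The hard part is term (ii), because Lemma \ref{lemma_nu} only governs the action marginal, whereas here the full joint occupancy fluctuates. I would condition on $\mathbf{x}_t^{\mathbf{N}}$, partition the class-$k$ agents by their state into groups $S_{k,x'}=\{j:x_{j,k}^{t,\mathbf{N}}=x'\}$, and observe that within each group the actions are conditionally independent Bernoulli with mean $\pi_k^t(x',\boldsymbol{\mu}_t^{\mathbf{N}})(u)$ by Observation \ref{obs1}. Applying Lemma \ref{simple_lemma} over the action index for each $(k,x')$ gives a per-group bound $\frac{1}{N_{\mathrm{pop}}}\sqrt{|\mathcal{U}|\,|S_{k,x'}|}$, after which the decisive manoeuvre is Cauchy--Schwarz, $\sum_{x'}\sqrt{|S_{k,x'}|}\le\sqrt{|\mathcal{X}|\sum_{x'}|S_{k,x'}|}=\sqrt{|\mathcal{X}|N_k}$, keeping the state dependence at $\sqrt{|\mathcal{X}|}$ instead of $|\mathcal{X}|$ and yielding $\frac{\sqrt{|\mathcal{X}||\mathcal{U}|}}{N_{\mathrm{pop}}}\sum_k\sqrt{N_k}$ for (ii). Summing the fluctuation term together with (i) and (ii), and bounding each of $\sqrt{|\mathcal{X}|}$, $L_P\sqrt{|\mathcal{U}|}$ and $\sqrt{|\mathcal{X}||\mathcal{U}|}$ by a multiple of $\sqrt{|\mathcal{X}||\mathcal{U}|}$ (since $|\mathcal{X}|,|\mathcal{U}|\ge 1$), gathers the constants into $C_P=2+L_P$, which delivers $(\ref{eq_35_app})$.
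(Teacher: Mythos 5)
Your proof is correct and follows essentially the same route as the paper's: the same three-way decomposition into the next-state fluctuation around its conditional mean, the $\boldsymbol{\nu}$-argument mismatch in the kernel (controlled by Assumption \ref{ass_1}(d) and Lemma \ref{lemma_nu}), and the action-sampling fluctuation, with Lemma \ref{simple_lemma} doing the probabilistic work in the first and third terms. The only cosmetic difference is in the third term, where the paper feeds the kernel rows into Lemma \ref{simple_lemma} as the vector coefficients $C_{m,n}$ (so $S=|\mathcal{X}|$ supplies the $\sqrt{|\mathcal{X}|}$), while you discard the kernel via its unit column mass and recover the same $\sqrt{|\mathcal{X}|}$ by grouping agents by state and applying Cauchy--Schwarz; both yield $\sqrt{|\mathcal{X}||\mathcal{U}|N_k}/N_{\mathrm{pop}}$ per class and the same constant $C_P=2+L_P$.
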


\subsection{Proof of the Theorem}
We are now ready to prove the theorem. Using $(\ref{eq_10}), (\ref{eq_vN})$, and $(\ref{eqvr})$, we can write,
\begin{align}
\left|v^{\mathbf{N}}(\mathbf{x}_0^{\mathbf{N}},{\boldsymbol{\pi}})-v^{\mathrm{MF}}(\boldsymbol{\mu}_0,\boldsymbol{\pi})\right|\leq J_1 + J_2	
\end{align}
where the first term $J_1$ is defined as follows:
\begin{align*}
\begin{split}
J_1&\triangleq \sum_{t=0}^{\infty}\gamma^t\mathbb{E}\left|\dfrac{1}{N_{\mathrm{pop}}}\sum_{k\in[K]}\sum_{j=1}^{N_k}r_k(x_{j,k}^{t,\mathbf{N}},u_{j,k}^{t,\mathbf{N}},\boldsymbol{\mu}_t^{\mathbf{N}},\boldsymbol{\nu}_t^{\mathbf{N}})-\sum_{k\in[K]}r_k^{\mathrm{MF}}(\boldsymbol{\mu}_t^{\mathbf{N}},\boldsymbol{\pi}_t)\right| \\
&\overset{(a)}{\leq} \dfrac{C_R}{1-\gamma} \sqrt{|\mathcal{U}|}\dfrac{1}{N_{\mathrm{pop}}}\left(\sum_{k\in[K]}\sqrt{N_k}\right)
\end{split}
\end{align*}

The inequality (a) follows from Lemma \ref{lemma_r}. The second term, $J_2$ is given as follows:
\begin{align}
\begin{split}
J_2&\triangleq\sum_{t=0}^{\infty}\gamma^t\left| \sum_{k\in[K]}r_k^{\mathrm{MF}}(\boldsymbol{\mu}_t,\boldsymbol{\pi}_t)-\sum_{k\in[K]}\mathbb{E}\left[r_k^{\mathrm{MF}}(\boldsymbol{\mu}_t^{\mathbf{N}},{\boldsymbol{\pi}}_t)\right]\right|\\
&\leq\sum_{t=0}^{\infty}\gamma^t \sum_{k\in[K]}\left|r_k^{\mathrm{MF}}(\boldsymbol{\mu}_t,\boldsymbol{\pi}_t)-\mathbb{E}\left[r_k^{\mathrm{MF}}(\boldsymbol{\mu}_t^{\mathbf{N}},{\boldsymbol{\pi}}_t)\right]\right|\\
&\overset{(a)}{=}\sum_{t=0}^{\infty}\gamma^t \sum_{k\in[K]}\left|\mathbb{E}\left[r_k^{\mathrm{MF}}(\boldsymbol{\mu}_t,\boldsymbol{\pi}_t)-r_k^{\mathrm{MF}}(\boldsymbol{\mu}_t^{\mathbf{N}},{\boldsymbol{\pi}}_t)\right]\right|\\
&\leq\sum_{t=0}^{\infty}\gamma^t \sum_{k\in[K]}\mathbb{E}\left|r_k^{\mathrm{MF}}(\boldsymbol{\mu}_t,\boldsymbol{\pi}_t)-r_k^{\mathrm{MF}}(\boldsymbol{\mu}_t^{\mathbf{N}},{\boldsymbol{\pi}}_t)\right|\overset{(b)}{\leq} S_R\left( \sum_{t=0}^{\infty}\gamma^t\mathbb{E}\left|\boldsymbol{\mu}_t^{\mathbf{N}}-\boldsymbol{\mu}_t\right|_{1}\right)
\end{split}
\label{eq14}
\end{align}

Equation (a) holds because the sequence $\{\boldsymbol{\mu}_t\}_{t\in\{0,1,\cdots\}}$ is deterministic. Inequality (b) is due to Lemma \ref{lemma2}. Observe that, $\forall t\geq 0$ the following holds,
\begin{align}
\mathbb{E}\left|\boldsymbol{\mu}_{t+1}^{\mathbf{N}}-\boldsymbol{\mu}_{t+1}\right|_{1}\leq \mathbb{E}\left|\boldsymbol{\mu}_{t+1}^{\mathbf{N}}-P^{\mathrm{MF}}\left(\boldsymbol{\mu}_t^{\mathbf{N}},{\boldsymbol{\pi}}_t\right)\right|_{1}+ \mathbb{E}\left|P^{\mathrm{MF}}\left(\boldsymbol{\mu}_t^{\mathbf{N}},{\boldsymbol{\pi}}_t\right)-\boldsymbol{\mu}_{t+1}\right|_{1}
\end{align} 

The first term can be upper bounded by invoking  Lemma \ref{lemma_mu}. Using Lemma \ref{lemma3}, the second term can be upper bounded as follows:
\begin{align}
\begin{split}
\mathbb{E}&\left|P^{\mathrm{MF}}\left(\boldsymbol{\mu}_t^{\mathbf{N}},\boldsymbol{\pi}_t\right)-\boldsymbol{\mu}_{t+1}\right|_{1}=\mathbb{E}\left|P^{\mathrm{MF}}\left(\boldsymbol{\mu}_t^{\mathbf{N}},\boldsymbol{\pi}_t\right)-P^{\mathrm{MF}}\left(\boldsymbol{\mu}_{t},\boldsymbol{\pi}_t\right)\right|_{1}\leq S_P (\mathbb{E} |\boldsymbol{\mu}_t^{\mathbf{N}}-\boldsymbol{\mu}_{t}|_{1})	
\end{split}
\end{align}

Recall that, $\boldsymbol{\mu}^{0,\mathbf{N}}=\boldsymbol{\mu}^0$. Therefore, 
\begin{align}	
\label{eq_35}
\begin{split}
\mathbb{E}\left|\boldsymbol{\mu}_{t+1}^{\mathbf{N}}-\boldsymbol{\mu}_{t+1}\right|_{1}&\leq C_P\left[\sqrt{|\mathcal{X}|}+\sqrt{|\mathcal{U}|}\right] \dfrac{1}{N_{\mathrm{pop}}}\left(\sum_{k\in[K]}\sqrt{N_k}\right)+S_P	\left(\mathbb{E}\left|\boldsymbol{\mu}_t^{\mathbf{N}}-\boldsymbol{\mu}_t\right|_{1}\right)\\
&\leq C_P\left[\sqrt{|\mathcal{X}|}+\sqrt{|\mathcal{U}|}\right] \dfrac{1}{N_{\mathrm{pop}}}\left(\sum_{k\in[K]}\sqrt{N_k}\right)\left(\dfrac{S_P^{t+1}-1}{S_P-1}\right)
\end{split}
\end{align}

Clearly, $J_2$ is upper bounded as follows,
\begin{align*}
J_2\leq C_P\left(\dfrac{S_R}{S_P-1}\right)\left[\sqrt{|\mathcal{X}|}+\sqrt{|\mathcal{U}|}\right] \dfrac{1}{N_{\mathrm{pop}}}\left(\sum_{k\in[K]}\sqrt{N_k}\right)\left(\dfrac{1}{1-\gamma S_P}-\dfrac{1}{1-\gamma}\right)
\end{align*}

This completes the proof of $(\ref{eq_28_new})$.

\section{Proof of Theorem \ref{thm_2}}

The collection of empirical state and action distributions of all classes at time $t$ are denoted as $\boldsymbol{\bar{\mu}}_t^{\mathbf{N}}\in\mathcal{P}^K(\mathcal{X})$ and $\boldsymbol{\bar{\nu}}_t^{\mathbf{N}}\in\mathcal{P}^K(\mathcal{U})$ respectively and their mean-field counterparts are $\boldsymbol{\bar{\mu}}_t,\boldsymbol{\bar{\nu}}_t$. The prior probability of $k$-th class, $k\in[K]$ will be denoted as $\theta_k=N_k/N_{\mathrm{pop}}$ and $\boldsymbol{\theta}\triangleq\{\theta_k\}_{k\in[K]}$.

\subsection{Mean-field equations}

For a policy $\boldsymbol{\bar{\pi}}\triangleq\{\boldsymbol{\bar{\pi}}_t\}_{t\in\{0,1,\cdots\}}\triangleq\{(\bar{\pi}_k^t)_{k\in[K]}\}_{t\in\{0,1,\cdots\}}$,  the mean-field action distribution is updated as,
\begin{align}
\begin{split}
\label{eq_nu_v_2}
\boldsymbol{\bar{\nu}}_t=\bar{\nu}^{\mathrm{MF}}(\boldsymbol{\bar{\mu}}_t, \boldsymbol{\bar{\pi}}_t)&\triangleq\{\bar{\nu}_k^{\mathrm{MF}}(\boldsymbol{\bar{\mu}}_t, \boldsymbol{\bar{\pi}}_t)\}_{k\in[K]}\\
\bar{\nu}^{\mathrm{MF}}_k(\boldsymbol{\bar{\mu}}_t, \boldsymbol{\bar{\pi}}_t) &\triangleq \sum_{x\in\mathcal{X}} \bar{\pi}_k^t(x,\boldsymbol{\bar{\mu}}_t)\boldsymbol{\bar{\mu}}_t(x, k)
\end{split}
\end{align}

Similarly, the state distribution is updated as,
\begin{align}
\begin{split}
\boldsymbol{\bar{\mu}}_{t+1}=\bar{P}^{\mathrm{MF}}(\boldsymbol{\bar{\mu}}_t, \boldsymbol{\bar{\pi}}_t)&\triangleq\{\bar{P}_k^{\mathrm{MF}}(\boldsymbol{\bar{\mu}}_t, \boldsymbol{\bar{\pi}}_t)\}_{k\in [K]}\\
\bar{P}_k^{\mathrm{MF}}(\boldsymbol{\bar{\mu}}_t, \boldsymbol{\bar{\pi}}_t) &\triangleq \sum_{x\in\mathcal{X}}\sum_{u\in\mathcal{U}} \bar{P}_k\left(x,u,\boldsymbol{\bar{\mu}}_t,\bar{\nu}^{\mathrm{MF}}(\boldsymbol{\bar{\mu}}_t,\boldsymbol{\bar{\pi}}_t)\right)\times\boldsymbol{\bar{\mu}}_t(x,k)\bar{\pi}_k^t(x,\boldsymbol{\bar{\mu}}_t)(u)
\label{eq_phi_2}
\end{split}
\end{align}

Finally, the average reward of $k$-th class are computed as,
\begin{align}
\begin{split}
\bar{r}_k^{\mathrm{MF}}(\boldsymbol{\bar{\mu}}_t, \boldsymbol{\bar{\pi}}_t) &\triangleq \sum_{x\in\mathcal{X}}\sum_{u\in\mathcal{U}} \bar{r}_k(x,u,\boldsymbol{\bar{\mu}}_t,\bar{\nu}^{\mathrm{MF}}(\boldsymbol{\bar{\mu}}_t,\boldsymbol{\bar{\pi}}_t))\times\boldsymbol{\bar{\mu}}_t(x,k)\bar{\pi}_k^t(x,\boldsymbol{\bar{\mu}}_t)(u)
\label{eqr_2}
\end{split}
\end{align}

For an initial state distribution $\boldsymbol{\bar{\mu}}_0$, and a policy $\boldsymbol{\bar{\pi}}$, the infinite-horizon $\gamma$-discounted average reward in the mean-field limit is,
\begin{align}
\begin{split}
\bar{v}^{\mathrm{MF}}(\boldsymbol{\bar{\mu}}_0,\boldsymbol{\bar{\pi}})= \sum_{k\in[K]}\theta_k\sum_{t=0}^{\infty}\gamma^t  \bar{r}_k^{\mathrm{MF}}(\boldsymbol{\bar{\mu}}_t, \boldsymbol{\bar{\pi}}_t)
\label{eqvr_2}
\end{split}
\end{align}

\subsection{Helper Lemmas}

The following results are necessary to prove the theorem. The proofs of Lemma \ref{lemma_continuity_2}, and $\ref{lemma_approx_2}$ have been relegated to Appendix \ref{app_lemma_9}, and \ref{app_lemma_10} respectively.

\begin{lemma}
	\label{lemma_continuity_2}
	The following inequalities hold $\forall \boldsymbol{\bar{\mu}},\boldsymbol{\bar{\mu}}'\in\mathcal{P}^K(\mathcal{X})$ and $\forall \boldsymbol{\bar{\pi}}=\{\bar{\pi}_k\}_{k\in[K]}$ where $\bar{\pi}_k$'s are decision rules satisfying Assumption \ref{ass_4}.
	\begin{align*}
	(a)&~ |\bar{\nu}^{\mathrm{MF}}(\boldsymbol{\bar{\mu}},\boldsymbol{\bar{\pi}})-\bar{\nu}^{\mathrm{MF}}(\boldsymbol{\bar{\mu}}',\boldsymbol{\bar{\pi}})|_{1}\leq (1+K\bar{L}_Q)|\boldsymbol{\bar{\mu}}-\boldsymbol{\bar{\mu}}'|_1\\
	(b)&~ \sum_{k\in[K]}	\theta_k|	\bar{r}_k^{\mathrm{MF}}(\boldsymbol{\bar{\mu}},\boldsymbol{\bar{\pi}})-	\bar{r}_k^{\mathrm{MF}}(\boldsymbol{\bar{\mu}}',\boldsymbol{\bar{\pi}})|
	\leq \bar{S}_R|\boldsymbol{\bar{\mu}}-\boldsymbol{\bar{\mu}}'|_{1}\\
	(c)&~ |\bar{P}^{\mathrm{MF}}(\boldsymbol{\bar{\mu}},\boldsymbol{\bar{\pi}})-	 \bar{P}^{\mathrm{MF}}(\boldsymbol{\bar{\mu}}',\boldsymbol{\bar{\pi}})|_{1}\leq \bar{S}_P|\boldsymbol{\bar{\mu}}-\boldsymbol{\bar{\mu}}'|_{1}
	\end{align*}
	where $\bar{S}_R\triangleq \bar{M}_R(1+\bar{L}_Q)+\bar{L}_R(2+K\bar{L}_Q)$ and $\bar{S}_P\triangleq (1+K\bar{L}_Q)+K\bar{L}_P(2+K\bar{L}_Q)$.
\end{lemma}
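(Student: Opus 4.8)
The plan is to follow the template of Lemmas~\ref{lemma1}--\ref{lemma3}, which were established for the joint-distribution setting of Assumption~\ref{ass_1}, and to adapt every step to the per-class collection space $\mathcal{P}^K(\mathcal{X})$ and to the class-prior weighting $\theta_k$. I would prove the three claims in the order (a), (b), (c), because $\bar{\nu}^{\mathrm{MF}}$ appears inside the arguments of both $\bar{r}_k$ and $\bar{P}_k$, so the Lipschitz bound (a) on the action-evolution operator will be reused as a sub-result when controlling the reward and transition operators. Throughout, the collection norm on $\mathcal{P}^K$ is read as the sum over classes of the per-class $L_1$ norms.

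For (a), I would fix a class $k$, expand $\bar{\nu}_k^{\mathrm{MF}}(\boldsymbol{\bar{\mu}},\boldsymbol{\bar{\pi}})-\bar{\nu}_k^{\mathrm{MF}}(\boldsymbol{\bar{\mu}}',\boldsymbol{\bar{\pi}})$ from the definition~(\ref{eq_nu_v_2}), and apply an add-and-subtract step that splits each summand into a mass-shift term $\bar{\pi}_k(x,\boldsymbol{\bar{\mu}})(u)[\boldsymbol{\bar{\mu}}(x,k)-\boldsymbol{\bar{\mu}}'(x,k)]$ and a policy-shift term $\boldsymbol{\bar{\mu}}'(x,k)[\bar{\pi}_k(x,\boldsymbol{\bar{\mu}})(u)-\bar{\pi}_k(x,\boldsymbol{\bar{\mu}}')(u)]$. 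Summing over $u$, the mass-shift term collapses to the per-class marginal gap $|\boldsymbol{\bar{\mu}}(\cdot,k)-\boldsymbol{\bar{\mu}}'(\cdot,k)|_1$ since $\sum_u\bar{\pi}_k(x,\cdot)(u)=1$, while the policy-shift term is controlled by Assumption~\ref{ass_4} together with $\sum_x\boldsymbol{\bar{\mu}}'(x,k)=1$, contributing $\bar{L}_Q|\boldsymbol{\bar{\mu}}-\boldsymbol{\bar{\mu}}'|_1$. Summing the per-class estimate over $k$, the marginal gaps aggregate into a single copy of $|\boldsymbol{\bar{\mu}}-\boldsymbol{\bar{\mu}}'|_1$, whereas the policy-shift bound is incurred once for each of the $K$ classes, which is exactly where the factor $K\bar{L}_Q$, and hence $(1+K\bar{L}_Q)$, originates.

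For (b) and (c) I would introduce the weight $w(x,u)\triangleq\boldsymbol{\bar{\mu}}(x,k)\bar{\pi}_k(x,\boldsymbol{\bar{\mu}})(u)$ (with $w'$ defined from $\boldsymbol{\bar{\mu}}'$) and the shorthand $\bar{\nu}\triangleq\bar{\nu}^{\mathrm{MF}}(\boldsymbol{\bar{\mu}},\boldsymbol{\bar{\pi}})$, $\bar{\nu}'\triangleq\bar{\nu}^{\mathrm{MF}}(\boldsymbol{\bar{\mu}}',\boldsymbol{\bar{\pi}})$, and decompose the operator difference into a term in which only $w$ changes and a term in which only the kernel argument $(\boldsymbol{\bar{\mu}},\bar{\nu})$ changes. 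The weight-change term reuses the estimate $\sum_{x,u}|w-w'|\le |\boldsymbol{\bar{\mu}}(\cdot,k)-\boldsymbol{\bar{\mu}}'(\cdot,k)|_1+\bar{L}_Q|\boldsymbol{\bar{\mu}}-\boldsymbol{\bar{\mu}}'|_1$ derived inside step (a), scaled by $\bar{M}_R$ for the reward (Assumption~\ref{ass_2}(b)) and by $1$ for the transition kernel (whose $L_1$ norm is unity). The kernel-change term invokes Assumption~\ref{ass_2}(c) (resp.~\ref{ass_2}(d)) to yield $\bar{L}_R[|\boldsymbol{\bar{\mu}}-\boldsymbol{\bar{\mu}}'|_1+|\bar{\nu}-\bar{\nu}'|_1]$ (resp.~with $\bar{L}_P$), where the total weight $\sum_{x,u}w'=1$ is used; the action gap $|\bar{\nu}-\bar{\nu}'|_1$ is then replaced by $(1+K\bar{L}_Q)|\boldsymbol{\bar{\mu}}-\boldsymbol{\bar{\mu}}'|_1$ via part (a). For (b) I close with the $\theta_k$-weighted sum over $k$, using $\theta_k\le 1$ on the per-class marginal term and $\sum_k\theta_k=1$ on the full-norm terms to reach $\bar{S}_R$; for (c) I take the plain sum over $k$, whose $K$-fold accumulation of the $\bar{L}_Q$ and $\bar{L}_P$ contributions gives $\bar{S}_P$.

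The main obstacle is disciplined bookkeeping of norms: at each step I must keep the per-class marginal gap $|\boldsymbol{\bar{\mu}}(\cdot,k)-\boldsymbol{\bar{\mu}}'(\cdot,k)|_1$ separate from the full collection norm $|\boldsymbol{\bar{\mu}}-\boldsymbol{\bar{\mu}}'|_1$, because only the former telescopes across classes into a single copy of $|\boldsymbol{\bar{\mu}}-\boldsymbol{\bar{\mu}}'|_1$, while the latter is reproduced once per class. The factors of $K$ that distinguish $\bar{S}_R,\bar{S}_P$ from the single-class constants $S_R,S_P$ enter through two channels: first, part (a) already carries a $(1+K\bar{L}_Q)$, which propagates whenever $|\bar{\nu}-\bar{\nu}'|_1$ is substituted inside (b) and (c); second, the class-sum multiplies every full-norm contribution by $K$. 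The $\theta_k$-weighting in (b) neutralizes the second channel (since $\sum_k\theta_k=1$), so the only $K$ surviving in $\bar{S}_R$ is inherited from (a), leaving $\bar{M}_R$ multiplying $1+\bar{L}_Q$ rather than $1+K\bar{L}_Q$; the unweighted sum in (c) retains both channels, producing the nested $(1+K\bar{L}_Q)+K\bar{L}_P(2+K\bar{L}_Q)$. Ensuring that the $\theta_k$-average interacts correctly with the per-class marginals is the one place demanding genuine care.
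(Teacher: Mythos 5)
Your proposal follows essentially the same route as the paper's proof: the same add-and-subtract decomposition into a weight-change term and a kernel-change term, the same reuse of part (a) to control $|\bar{\nu}-\bar{\nu}'|_1$ inside (b) and (c), and the same bookkeeping that makes the factor $K$ arise from summing the full-norm contributions over classes while the $\theta_k$-weighting in (b) suppresses it for the $\bar{M}_R$ term. The constants you trace out ($\bar{M}_R(1+\bar{L}_Q)$ versus $(1+K\bar{L}_Q)$, and the nested $K\bar{L}_P(2+K\bar{L}_Q)$) match the paper's derivation exactly.
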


\begin{lemma} 
	\label{lemma_approx_2}
	If $\{\boldsymbol{\bar{\mu}}_t^{\mathbf{N}},\boldsymbol{\bar{\nu}}_t^{\mathbf{N}}\}_{t\in\{0,1,\cdots\}}$ are the collections of empirical state and action distributions of each classes induced by policy $\boldsymbol{\bar{\pi}}=\{\boldsymbol{\bar{\pi}}_t\}_{t\in\{0,1,\cdots\}}$, then the following inequalities hold true $\forall t\in\{0,1,\cdots\}$.
	\begin{align}
	&(a)~\mathbb{E}|\boldsymbol{\bar{\nu}}_t^{\mathbf{N}}-\bar{\nu}^{\mathrm{MF}}(\boldsymbol{\bar{\mu}}_t^{\mathbf{N}},\boldsymbol{\bar{\pi}}_t)|_{1}\leq \left(\sum_{k\in[K]}\dfrac{1}{\sqrt{N_k}}\right)\sqrt{|\mathcal{U}|}\\
	\label{eq_34_2}
	\begin{split}
	&(b)~	\mathbb{E}\left|\dfrac{1}{N_{\mathrm{pop}}}\sum_{k\in[K]}\sum_{j=1}^{N_k}\bar{r}_k(x_{j,k}^{t,\mathbf{N}},u_{j,k}^{t,\mathbf{N}},\boldsymbol{\bar{\mu}}_t^{\mathbf{N}},\boldsymbol{\bar{\nu}}_t^{\mathbf{N}})-\sum_{k\in[K]}\theta_k\bar{r}_k^{\mathrm{MF}}(\boldsymbol{\bar{\mu}}_t^{\mathbf{N}},\boldsymbol{\bar{\pi}}_t)\right|\\
	&\hspace{2cm}\leq \bar{C}_R\left(\sum_{k\in[K]}\dfrac{1}{\sqrt{N_k}}\right)\sqrt{|\mathcal{U}|} 
	\end{split}
	\\
	\label{eq35_2}
	&(c)~	\mathbb{E}\left|\boldsymbol{\bar{\mu}}_{t+1}^{\mathbf{N}}-\bar{P}^{\mathrm{MF}}(\boldsymbol{\bar{\mu}}_t^{\mathbf{N}},\boldsymbol{\bar{\pi}}_t)\right|_{1}\leq  \bar{C}_P\left(\sum_{k\in[K]}\dfrac{1}{\sqrt{N_k}}\right)\left[\sqrt{|\mathcal{X}|}+\sqrt{|\mathcal{U}|}\right] 
	\end{align}
	where $\bar{C}_R\triangleq \bar{M}_R+\bar{L}_R$, and $\bar{C}_P\triangleq 2+K\bar{L}_P$.
\end{lemma}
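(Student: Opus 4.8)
The plan is to prove all three bounds by a single mechanism: condition on the appropriate information so that each mean-field quantity becomes \emph{exactly} a conditional expectation of its empirical counterpart, and then control the resulting centred sums with the Bernoulli inequality of Lemma \ref{simple_lemma}, treating each class $k$ separately and summing the per-class $1/\sqrt{N_k}$ contributions at the end. The recurring fact is that, writing the Bernoulli indicators $\delta(u_{j,k}^{t,\mathbf{N}}=u)$ and $\delta(x_{j,k}^{t+1,\mathbf{N}}=y)$, the conditional means $\mathbb{E}[\delta(u_{j,k}^{t,\mathbf{N}}=u)\mid\mathbf{x}_t^{\mathbf{N}}]=\bar{\pi}_k^t(x_{j,k}^{t,\mathbf{N}},\boldsymbol{\bar{\mu}}_t^{\mathbf{N}})(u)$ and $\mathbb{E}[\delta(x_{j,k}^{t+1,\mathbf{N}}=y)\mid\mathbf{x}_t^{\mathbf{N}},\mathbf{u}_t^{\mathbf{N}}]=\bar{P}_k(x_{j,k}^{t,\mathbf{N}},u_{j,k}^{t,\mathbf{N}},\boldsymbol{\bar{\mu}}_t^{\mathbf{N}},\boldsymbol{\bar{\nu}}_t^{\mathbf{N}})(y)$ reproduce, after averaging over $j$ and regrouping by state, precisely $\bar{\nu}_k^{\mathrm{MF}}$, $\bar{r}_k^{\mathrm{MF}}$ and $\bar{P}_k^{\mathrm{MF}}$ evaluated at $\boldsymbol{\bar{\mu}}_t^{\mathbf{N}}$. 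I invoke Observation \ref{obs1} for conditional independence of actions and Observation \ref{obs2} for that of next states, and remove the conditioning at the end by the tower property. For part (a), conditioning on $\mathbf{x}_t^{\mathbf{N}}$ identifies $\bar{\nu}^{\mathrm{MF}}(\boldsymbol{\bar{\mu}}_t^{\mathbf{N}},\boldsymbol{\bar{\pi}}_t)$ with $\mathbb{E}[\boldsymbol{\bar{\nu}}_t^{\mathbf{N}}\mid\mathbf{x}_t^{\mathbf{N}}]$, so the left side is $\sum_{k}\sum_{u}\mathbb{E}|\frac{1}{N_k}\sum_{j}(\delta(u_{j,k}^{t,\mathbf{N}}=u)-\mathbb{E}[\cdots])|$; for each fixed $k$ I apply Lemma \ref{simple_lemma} with the Bernoulli group indexed by $u\in\mathcal{U}$ (so $M=|\mathcal{U}|$, and $\sum_u\mathbb{E}[\delta(u_{j,k}^{t,\mathbf{N}}=u)\mid\mathbf{x}_t^{\mathbf{N}}]=1$ per agent), samples indexed by $j\in[N_k]$ ($N=N_k$), scalar coefficients ($S=1$) equal to $1/N_k$, giving $\sqrt{|\mathcal{U}|N_k}/N_k=\sqrt{|\mathcal{U}|}/\sqrt{N_k}$, which sums to the claimed $(\sum_k N_k^{-1/2})\sqrt{|\mathcal{U}|}$.

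For part (b), I first rewrite the population average as a class-weighted one, $\frac{1}{N_{\mathrm{pop}}}\sum_{k,j}=\sum_k\theta_k\frac{1}{N_k}\sum_j$, and split the deviation into $T_1+T_2$, where $T_1$ replaces the argument $\boldsymbol{\bar{\nu}}_t^{\mathbf{N}}$ by $\bar{\nu}^{\mathrm{MF}}(\boldsymbol{\bar{\mu}}_t^{\mathbf{N}},\boldsymbol{\bar{\pi}}_t)$ inside $\bar{r}_k$, and $T_2$ is the deviation of the resulting frozen-argument empirical reward from $\bar{r}_k^{\mathrm{MF}}$. Since the weights $\frac{1}{N_{\mathrm{pop}}}$ sum to one, Assumption \ref{ass_2}(c) and part (a) give $\mathbb{E}[T_1]\le\bar{L}_R\,\mathbb{E}|\boldsymbol{\bar{\nu}}_t^{\mathbf{N}}-\bar{\nu}^{\mathrm{MF}}(\boldsymbol{\bar{\mu}}_t^{\mathbf{N}},\boldsymbol{\bar{\pi}}_t)|_1\le\bar{L}_R(\sum_k N_k^{-1/2})\sqrt{|\mathcal{U}|}$. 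For $T_2$ I condition on $\mathbf{x}_t^{\mathbf{N}}$ (under which $\bar{r}_k^{\mathrm{MF}}$ is the conditional mean), use the triangle inequality over $u$ to match the form of Lemma \ref{simple_lemma}, and apply it per class with group $u$, samples $j$, $S=1$ and scalar coefficients $\bar{r}_k(x_{j,k}^{t,\mathbf{N}},u,\boldsymbol{\bar{\mu}}_t^{\mathbf{N}},\bar{\nu}^{\mathrm{MF}}(\cdots))$ bounded by $\bar{M}_R$ via Assumption \ref{ass_2}(b); this yields $\mathbb{E}[T_2]\le\sum_k\theta_k\bar{M}_R\sqrt{|\mathcal{U}|}/\sqrt{N_k}\le\bar{M}_R(\sum_k N_k^{-1/2})\sqrt{|\mathcal{U}|}$ using $\theta_k\le 1$. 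Adding the two bounds produces $\bar{C}_R=\bar{M}_R+\bar{L}_R$.

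Part (c) is the most delicate and is where the factor $K$ in $\bar{C}_P=2+K\bar{L}_P$ appears. I telescope $\boldsymbol{\bar{\mu}}_{t+1}^{\mathbf{N}}-\bar{P}^{\mathrm{MF}}(\boldsymbol{\bar{\mu}}_t^{\mathbf{N}},\boldsymbol{\bar{\pi}}_t)$ through two intermediate quantities: $G_k\triangleq\frac{1}{N_k}\sum_j\bar{P}_k(x_{j,k}^{t,\mathbf{N}},u_{j,k}^{t,\mathbf{N}},\boldsymbol{\bar{\mu}}_t^{\mathbf{N}},\boldsymbol{\bar{\nu}}_t^{\mathbf{N}})$ (the conditional mean of the next-state empirical distribution given $\mathbf{x}_t^{\mathbf{N}},\mathbf{u}_t^{\mathbf{N}}$), and $G_k'$, which is $G_k$ with the action argument $\boldsymbol{\bar{\nu}}_t^{\mathbf{N}}$ replaced by $\bar{\nu}^{\mathrm{MF}}(\boldsymbol{\bar{\mu}}_t^{\mathbf{N}},\boldsymbol{\bar{\pi}}_t)$. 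Term A, $\boldsymbol{\bar{\mu}}_{t+1}^{\mathbf{N}}-G_k$, is the next-state concentration: by Observation \ref{obs2} and Lemma \ref{simple_lemma} (group $y\in\mathcal{X}$, samples $j$, coefficient $1/N_k$, $S=1$) it contributes $\sqrt{|\mathcal{X}|}\sum_k N_k^{-1/2}$. Term B, $G_k-G_k'$, is handled by Assumption \ref{ass_2}(d): because $\bar{P}_k$ is Lipschitz in the whole collection $\boldsymbol{\bar{\nu}}\in\mathcal{P}^K(\mathcal{U})$, each class contributes $\bar{L}_P|\boldsymbol{\bar{\nu}}_t^{\mathbf{N}}-\bar{\nu}^{\mathrm{MF}}(\cdots)|_1$, and summing the outer index $k\in[K]$ of a quantity independent of $k$ produces the factor $K$; with part (a) this gives $K\bar{L}_P(\sum_k N_k^{-1/2})\sqrt{|\mathcal{U}|}$. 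Term C, $G_k'-\bar{P}_k^{\mathrm{MF}}$, is the concentration over actions: writing it as $\frac{1}{N_k}\sum_j\sum_u(\delta(u_{j,k}^{t,\mathbf{N}}=u)-\mathbb{E}[\cdots])\bar{P}_k(x_{j,k}^{t,\mathbf{N}},u,\boldsymbol{\bar{\mu}}_t^{\mathbf{N}},\bar{\nu}^{\mathrm{MF}}(\cdots))$, I apply the triangle inequality over $u$ and then Lemma \ref{simple_lemma} with components $y\in\mathcal{X}$ ($S=|\mathcal{X}|$), group $u$, samples $j$, and vector coefficients $\bar{P}_k(x_{j,k}^{t,\mathbf{N}},u,\cdots)(\cdot)$ whose $L_1$ norm over $y$ equals $1$, giving $\sqrt{|\mathcal{X}||\mathcal{U}|}\sum_k N_k^{-1/2}$. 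Bounding $\sqrt{|\mathcal{X}|}$ and $\sqrt{|\mathcal{U}|}$ by $\sqrt{|\mathcal{X}||\mathcal{U}|}$ and adding A, B, C gives $(2+K\bar{L}_P)\sqrt{|\mathcal{X}||\mathcal{U}|}\sum_k N_k^{-1/2}=\bar{C}_P\sqrt{|\mathcal{X}||\mathcal{U}|}(\sum_k N_k^{-1/2})$.

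The main obstacle is the bookkeeping in part (c): one must track which empirical argument is frozen at each telescoping step, verify that every difference is genuinely a centred sum of conditionally independent Bernoulli indicators (so that Observations \ref{obs1}, \ref{obs2} apply), and correctly assign the roles of ``group'' (the index over which the Bernoulli probabilities sum to one), ``sample'', and ``vector component'' when invoking Lemma \ref{simple_lemma}. The emergence of the factor $K$ in Term B --- absent from the analogous $\mathcal{O}(N_{\mathrm{pop}}^{-1}\sum_k\sqrt{N_k})$ estimates under Assumption \ref{ass_1} --- is precisely the structural price of the per-class (rather than joint) distribution dependence, and tracing it faithfully is what recovers $\bar{C}_P=2+K\bar{L}_P$.
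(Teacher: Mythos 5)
Your proposal is correct and follows essentially the same route as the paper's proof: part (a) is the per-class application of Lemma \ref{simple_lemma} with group index $u$, your $T_1+T_2$ split in (b) matches the paper's $J_1+J_2$ (Lipschitz replacement of $\boldsymbol{\bar{\nu}}_t^{\mathbf{N}}$ via Assumption \ref{ass_2}(c) plus an action-concentration term with $\bar{M}_R$-bounded coefficients), and your three terms A, B, C in (c) are exactly the paper's $J_1, J_2, J_3$, including the correct identification of the factor $K$ as arising from summing the class-independent quantity $\bar{L}_P|\boldsymbol{\bar{\nu}}_t^{\mathbf{N}}-\bar{\nu}^{\mathrm{MF}}(\boldsymbol{\bar{\mu}}_t^{\mathbf{N}},\boldsymbol{\bar{\pi}}_t)|_1$ over the outer index $k\in[K]$. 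The only cosmetic difference is in the concentration term of (b), where you apply Lemma \ref{simple_lemma} per class with weights $\theta_k\leq 1$ while the paper applies it once over all $N_{\mathrm{pop}}$ agents to get $\bar{M}_R\sqrt{|\mathcal{U}|}/\sqrt{N_{\mathrm{pop}}}$ and then loosens; both yield the stated constant $\bar{C}_R=\bar{M}_R+\bar{L}_R$.
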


\subsection{Proof of the Theorem}

We are now ready to prove the theorem. Using $(\ref{eq_18})$ and $(\ref{eqvr_2})$, we can write,
\begin{align}
\left|\bar{v}^{\mathbf{N}}(\mathbf{x}_0^{\mathbf{N}},{\boldsymbol{\bar{\pi}}})-\bar{v}^{\mathrm{MF}}(\boldsymbol{\bar{\mu}}_0,\boldsymbol{\bar{\pi}})\right|\leq J_1 + J_2	
\end{align}
where the first term $J_1$ is defined as follows:
\begin{align*}
\begin{split}
J_1&\triangleq \sum_{t=0}^{\infty}\gamma^t\mathbb{E}\left|\dfrac{1}{N_{\mathrm{pop}}}\sum_{k\in[K]}\sum_{j=1}^{N_k}\bar{r}_k(x_{j,k}^{t,\mathbf{N}},u_{j,k}^{t,\mathbf{N}},\boldsymbol{\bar{\mu}}_t^{\mathbf{N}},\boldsymbol{\bar{\nu}}_t^{\mathbf{N}})-\sum_{k\in[K]}\theta_k\bar{r}_k^{\mathrm{MF}}(\boldsymbol{\bar{\mu}}_t^{\mathbf{N}},\boldsymbol{\bar{\pi}}_t)\right| \\
&\overset{(a)}{\leq} \dfrac{\bar{C}_R}{1-\gamma} \left(\sum_{k\in[K]}\dfrac{1}{\sqrt{N_k}}\right)\sqrt{|\mathcal{U}|}
\end{split}
\end{align*}

The inequality (a) follows from Lemma \ref{lemma_approx_2}. The second term, $J_2$ is given as follows:
\begin{align}
\begin{split}
J_2&\triangleq\sum_{t=0}^{\infty}\gamma^t\left| \sum_{k\in[K]}\theta_k\bar{r}_k^{\mathrm{MF}}(\boldsymbol{\bar{\mu}}_t,\boldsymbol{\bar{\pi}}_t)-\sum_{k\in[K]}\theta_k\mathbb{E}\left[\bar{r}_k^{\mathrm{MF}}(\boldsymbol{\bar{\mu}}_t^{\mathbf{N}},{\boldsymbol{\bar{\pi}}}_t)\right]\right|\\
&\leq\sum_{t=0}^{\infty}\gamma^t \sum_{k\in[K]}\theta_k\mathbb{E}\left|\bar{r}_k^{\mathrm{MF}}(\boldsymbol{\bar{\mu}}_t,\boldsymbol{\bar{\pi}}_t)-\bar{r}_k^{\mathrm{MF}}(\boldsymbol{\bar{\mu}}_t^{\mathbf{N}},{\boldsymbol{\bar{\pi}}}_t)\right|\\
&\overset{(a)}{\leq} \bar{S}_R\left( \sum_{t=0}^{\infty}\gamma^t\mathbb{E}\left|\boldsymbol{\bar{\mu}}_t^{\mathbf{N}}-\boldsymbol{\bar{\mu}}_t\right|_{1}\right)
\end{split}
\label{eq14_2}
\end{align}

Inequality (a) is due to Lemma \ref{lemma_continuity_2}. Observe that, $\forall t\geq 0$ the following holds,
\begin{align}
\mathbb{E}\left|\boldsymbol{\bar{\mu}}_{t+1}^{\mathbf{N}}-\boldsymbol{\bar{\mu}}_{t+1}\right|_{1}\leq \mathbb{E}\left|\boldsymbol{\bar{\mu}}_{t+1}^{\mathbf{N}}-\bar{P}^{\mathrm{MF}}\left(\boldsymbol{\bar{\mu}}_t^{\mathbf{N}},{\boldsymbol{\bar{\pi}}}_t\right)\right|_{1}+ \mathbb{E}\left|\bar{P}^{\mathrm{MF}}\left(\boldsymbol{\bar{\mu}}_t^{\mathbf{N}},{\boldsymbol{\bar{\pi}}}_t\right)-\boldsymbol{\bar{\mu}}_{t+1}\right|_{1}
\end{align} 

The first term can be upper bounded by invoking  Lemma \ref{lemma_approx_2}. Using Lemma \ref{lemma_continuity_2}, the second term can be upper bounded as follows:
\begin{align}
\begin{split}
\mathbb{E}&\left|\bar{P}^{\mathrm{MF}}\left(\boldsymbol{\bar{\mu}}_t^{\mathbf{N}},\boldsymbol{\bar{\pi}}_t\right)-\boldsymbol{\bar{\mu}}_{t+1}\right|_{1}=\mathbb{E}\left|\bar{P}^{\mathrm{MF}}\left(\boldsymbol{\bar{\mu}}_t^{\mathbf{N}},\boldsymbol{\bar{\pi}}_t\right)-\bar{P}^{\mathrm{MF}}\left(\boldsymbol{\bar{\mu}}_{t},\boldsymbol{\bar{\pi}}_t\right)\right|_{1}\leq \bar{S}_P (\mathbb{E} |\boldsymbol{\bar{\mu}}_t^{\mathbf{N}}-\boldsymbol{\bar{\mu}}_{t}|_{1})
\end{split}
\end{align}

Recall that, $\boldsymbol{\bar{\mu}}^{0,\mathbf{N}}=\boldsymbol{\bar{\mu}}^0$. Therefore, 
\begin{align}	
\label{eq_35_2}
\begin{split}
\mathbb{E}\left|\boldsymbol{\bar{\mu}}_{t+1}^{\mathbf{N}}-\boldsymbol{\bar{\mu}}_{t+1}\right|_{1}&\leq \bar{C}_P\left[\sqrt{|\mathcal{X}|}+\sqrt{|\mathcal{U}|}\right] \left(\sum_{k\in[K]}\dfrac{1}{\sqrt{N_k}}\right)+\bar{S}_P	\left(\mathbb{E}\left|\boldsymbol{\bar{\mu}}_t^{\mathbf{N}}-\boldsymbol{\bar{\mu}}_t\right|_{1}\right)\\
&\leq \bar{C}_P\left[\sqrt{|\mathcal{X}|}+\sqrt{|\mathcal{U}|}\right] \left(\sum_{k\in[K]}\dfrac{1}{\sqrt{N_k}}\right)\left(\dfrac{\bar{S}_P^{t+1}-1}{\bar{S}_P-1}\right)
\end{split}
\end{align}
Clearly, $J_2$ is upper bounded as follows,
\begin{align*}
J_2\leq \bar{C}_P\left(\dfrac{\bar{S}_R}{\bar{S}_P-1}\right)\left[\sqrt{|\mathcal{X}|}+\sqrt{|\mathcal{U}|}\right] \left(\sum_{k\in[K]}\dfrac{1}{\sqrt{N_k}}\right)\left(\dfrac{1}{1-\gamma \bar{S}_P}-\dfrac{1}{1-\gamma}\right)
\end{align*}

\section{Proof of Theorem \ref{thm_3}}

The following results are required to prove the theorem. The proofs of Lemma \ref{lemma_continuity}, \ref{lemma_approx} are given in Appendix \ref{app_lemma_11}, \ref{app_lemma_12} respectively. We define mean-field state, action distribution evolution functions $P^{\mathrm{MF}}(\cdot,\cdot)$, $\nu^{\mathrm{MF}}(\cdot,\cdot)$ and the class-average reward functions $r_k^{\mathrm{MF}}(\cdot,\cdot)$'s  by $(\ref{eq_phi}), (\ref{eq_nu_v})$, $(\ref{eqr})$, respectively.

\subsection{Helper Lemmas}

\begin{lemma}
	\label{lemma_continuity}
	The following inequalities hold $\forall \boldsymbol{\mu},\boldsymbol{\mu}'\in\mathcal{P}(\mathcal{X}\times[K])$ and $\forall \boldsymbol{\pi}=\{\pi_k\}_{k\in[K]}$ where $\pi_k$'s denote Lipschitz continuous decision rules with parameter $L_Q$.
	\begin{align*}
	(a)&~|\nu^{\mathrm{MF}}(\boldsymbol{\mu},\boldsymbol{\pi})[\mathcal{U}]-\nu^{\mathrm{MF}}(\boldsymbol{\mu}',\boldsymbol{\pi})[\mathcal{U}]|_{1}\leq |\nu^{\mathrm{MF}}(\boldsymbol{\mu},\boldsymbol{\pi})-\nu^{\mathrm{MF}}(\boldsymbol{\mu}',\boldsymbol{\pi})|_{1}\\
	&\hspace{5.5cm}\leq |\boldsymbol{\mu}-\boldsymbol{\mu}'|_1 + L_Q|\boldsymbol{\mu}[\mathcal{X}]-\boldsymbol{\mu}'[\mathcal{X}]|_1 \\
	(b)&~ \sum_{k\in[K]}	|	r_k^{\mathrm{MF}}(\boldsymbol{\mu},\boldsymbol{\pi})-	r_k^{\mathrm{MF}}(\boldsymbol{\mu}',\boldsymbol{\pi})|
	\leq S_R'|\boldsymbol{\mu}-\boldsymbol{\mu}'|_{1}+ S_R''|\boldsymbol{\mu}[\mathcal{X}]-\boldsymbol{\mu}'[\mathcal{X}]|_{1}\\
	(c)&~| P^{\mathrm{MF}}(\boldsymbol{\mu},\boldsymbol{\pi})[\mathcal{X}]-	 P^{\mathrm{MF}}(\boldsymbol{\mu}',\boldsymbol{\pi})[\mathcal{X}]|_{1}\leq| P^{\mathrm{MF}}(\boldsymbol{\mu},\boldsymbol{\pi})-	 P^{\mathrm{MF}}(\boldsymbol{\mu}',\boldsymbol{\pi})|_{1}\\
	&\hspace{5.85cm}\leq S_P'|\boldsymbol{\mu}-\boldsymbol{\mu}'|_{1}+ S_P''|\boldsymbol{\mu}[\mathcal{X}]-\boldsymbol{\mu}'[\mathcal{X}]|_{1}
	\end{align*}
	where $S_R'\triangleq M_R+L_R$, $S_R''\triangleq M_RL_Q+L_R(1+L_Q)$, $S_P'\triangleq 1+L_P$, and $S_P''\triangleq L_Q+L_P(1+L_Q)$. Note that, $S_R'+S_R''=S_R$ and $S_P'+S_P''=S_P$ where $S_R,S_P$ are defined in $(\ref{eq_SR}), (\ref{eq_SP})$ respectively.
\end{lemma}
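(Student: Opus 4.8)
The plan is to prove all three inequalities by the same add-and-subtract (telescoping) device that underlies Lemmas \ref{lemma1}--\ref{lemma3}, but now carefully tracking which perturbations are weighted by probability masses---yielding terms in the joint difference $|\boldsymbol{\mu}-\boldsymbol{\mu}'|_1$---and which arise from the Lipschitz continuity of the marginal-dependent primitives $r_k, P_k, \pi_k$---yielding terms in the smaller marginal difference $|\boldsymbol{\mu}[\mathcal{X}]-\boldsymbol{\mu}'[\mathcal{X}]|_1$. The leftmost inequalities in (a) and (c), namely that a marginal $L_1$-difference is dominated by the full $L_1$-difference, follow immediately from the triangle inequality, since marginalization (summing out the class label) is a linear contraction on $\ell_1$: $|\nu[\mathcal{U}]-\nu'[\mathcal{U}]|_1=\sum_u|\sum_k(\nu_k(u)-\nu_k'(u))|\le\sum_{k,u}|\nu_k(u)-\nu_k'(u)|=|\nu-\nu'|_1$, and likewise for $P^{\mathrm{MF}}[\mathcal{X}]$.

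For the right inequality in (a), I recall $\nu_k^{\mathrm{MF}}(\boldsymbol{\mu},\boldsymbol{\pi})(u)=\sum_x \pi_k(x,\boldsymbol{\mu}[\mathcal{X}])(u)\,\boldsymbol{\mu}(x,k)$ and split the single difference as $\pi_k(x,\boldsymbol{\mu}[\mathcal{X}])[\boldsymbol{\mu}(x,k)-\boldsymbol{\mu}'(x,k)]+[\pi_k(x,\boldsymbol{\mu}[\mathcal{X}])-\pi_k(x,\boldsymbol{\mu}'[\mathcal{X}])]\boldsymbol{\mu}'(x,k)$. Summing $\ell_1$ norms, the first piece (after summing $\pi_k(\cdot)(u)$ over $u$ to $1$) telescopes to $|\boldsymbol{\mu}-\boldsymbol{\mu}'|_1$, while the second, using Assumption \ref{ass_5}(e) and $\sum_{k,x}\boldsymbol{\mu}'(x,k)=1$, gives $L_Q|\boldsymbol{\mu}[\mathcal{X}]-\boldsymbol{\mu}'[\mathcal{X}]|_1$, which establishes (a).

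For (b) and (c) I would run a three-term decomposition on $r_k^{\mathrm{MF}}$ (resp. $P_k^{\mathrm{MF}}$), inserting and subtracting along the three places where $\boldsymbol{\mu}$ enters: the mass $\boldsymbol{\mu}(x,k)$, the policy $\pi_k(x,\boldsymbol{\mu}[\mathcal{X}])$, and the primitive, which depends on $(\boldsymbol{\mu}[\mathcal{X}],\nu^{\mathrm{MF}}[\mathcal{U}])$. The mass term, bounded using $|r_k|\le M_R$ (resp. $\sum_{x'}P_k(\cdot)(x')=1$), contributes $M_R|\boldsymbol{\mu}-\boldsymbol{\mu}'|_1$ (resp. $|\boldsymbol{\mu}-\boldsymbol{\mu}'|_1$); the policy term, again via Assumption \ref{ass_5}(e), contributes $M_RL_Q|\boldsymbol{\mu}[\mathcal{X}]-\boldsymbol{\mu}'[\mathcal{X}]|_1$ (resp. $L_Q|\boldsymbol{\mu}[\mathcal{X}]-\boldsymbol{\mu}'[\mathcal{X}]|_1$); and the primitive term, via Assumption \ref{ass_5}(c)/(d), yields $L_R[\,|\boldsymbol{\mu}[\mathcal{X}]-\boldsymbol{\mu}'[\mathcal{X}]|_1+|\nu^{\mathrm{MF}}(\boldsymbol{\mu})[\mathcal{U}]-\nu^{\mathrm{MF}}(\boldsymbol{\mu}')[\mathcal{U}]|_1\,]$ (and the analogue with $L_P$). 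The crucial chaining step is to feed part (a) into this last term, bounding the action-marginal difference by $|\boldsymbol{\mu}-\boldsymbol{\mu}'|_1+L_Q|\boldsymbol{\mu}[\mathcal{X}]-\boldsymbol{\mu}'[\mathcal{X}]|_1$; regrouping then produces exactly $S_R'=M_R+L_R$, $S_R''=M_RL_Q+L_R(1+L_Q)$ for (b) and $S_P'=1+L_P$, $S_P''=L_Q+L_P(1+L_Q)$ for (c).

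The calculations are routine; the one point demanding care---and the reason this lemma sharpens Lemmas \ref{lemma2}--\ref{lemma3}---is the bookkeeping that keeps the joint-difference and marginal-difference contributions separate throughout. Perturbations of the weights $\boldsymbol{\mu}(x,k)$ inevitably see the full joint distribution and must be charged to $|\boldsymbol{\mu}-\boldsymbol{\mu}'|_1$, whereas perturbations entering only through the marginal-dependent maps $r_k,P_k,\pi_k$ can be charged to the smaller $|\boldsymbol{\mu}[\mathcal{X}]-\boldsymbol{\mu}'[\mathcal{X}]|_1$; conflating the two would collapse the bound back to $S_R|\boldsymbol{\mu}-\boldsymbol{\mu}'|_1$ and forfeit the $1/\sqrt{N_{\mathrm{pop}}}$ improvement that Theorem \ref{thm_3} extracts from the marginal term.
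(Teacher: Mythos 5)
Your proposal is correct and follows essentially the same route as the paper's proof: the marginalization-contraction step for the leftmost inequalities in (a) and (c), the mass/policy/primitive add-and-subtract decomposition (the paper merely groups the mass and policy pieces into a single "product" term handled as in part (a)), and the chaining of part (a) into the Lipschitz bound on the primitives, yielding the identical constants $S_R', S_R'', S_P', S_P''$. No gaps.
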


Similar to Lemma \ref{lemma_nu}, \ref{lemma_r}, and \ref{lemma_mu}, we can derive the approximation results as follows.
\begin{lemma} 
	\label{lemma_approx}
	If $\{\boldsymbol{\mu}_t^{\mathbf{N}},\boldsymbol{\nu}_t^{\mathbf{N}}\}_{t\in\{0,1,\cdots\}}$ are the empirical joint state and action distributions induced by the policy $\boldsymbol{\pi}=\{\boldsymbol{\pi}_t\}_{t\in\{0,1,\cdots\}}$, then the following inequalities hold true $\forall t\in\{0,1,\cdots\}$.
	\begin{align}
	&(a)~\mathbb{E}|\boldsymbol{\nu}_t^{\mathbf{N}}[\mathcal{U}]-\nu^{\mathrm{MF}}(\boldsymbol{\mu}_t^{\mathbf{N}},\boldsymbol{\pi}_t)[\mathcal{U}]|_{1}\leq \dfrac{1}{\sqrt{N_{\mathrm{pop}}}}\sqrt{|\mathcal{U}|}\\
	\label{eq_34}
	&(b)~	\mathbb{E}\left|\dfrac{1}{N_{\mathrm{pop}}}\sum_{k\in[K]}\sum_{j=1}^{N_k}{r}_k(x_{j,k}^{t,\mathbf{N}},u_{j,k}^{t,\mathbf{N}},\boldsymbol{\mu}_t^{\mathbf{N}}[\mathcal{X}],\boldsymbol{\nu}_t^{\mathbf{N}}[\mathcal{U}])-\sum_{k\in[K]}r_k^{\mathrm{MF}}(\boldsymbol{\mu}_t^{\mathbf{N}},\boldsymbol{\pi}_t)\right|\leq \dfrac{C_R}{\sqrt{N_{\mathrm{pop}}}}\sqrt{|\mathcal{U}|} \\
	\label{eq35}
	&(c)~	\mathbb{E}\left|\boldsymbol{\mu}_{t+1}^{\mathbf{N}}[\mathcal{X}]-P^{\mathrm{MF}}(\boldsymbol{\mu}_t^{\mathbf{N}},\boldsymbol{\pi}_t)[\mathcal{X}]\right|_{1}\leq  \dfrac{C_P}{\sqrt{N_{\mathrm{pop}}}}\left[\sqrt{|\mathcal{X}|}+\sqrt{|\mathcal{U}|}\right] 
	\end{align}
	where $C_R$, $C_P$ are same as defined in Lemma \ref{lemma_r}, \ref{lemma_mu} respectively.
\end{lemma}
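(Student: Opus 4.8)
The plan is to prove the three bounds by the same device that drives Lemmas~\ref{lemma_nu}--\ref{lemma_mu}, but to exploit the one structural feature of Assumption~\ref{ass_5}: every argument entering $r_k$ and $P_k$ is a \emph{marginal} over the entire population. Whereas $\boldsymbol{\mu}_t^{\mathbf{N}}(\cdot,k)$ keeps the class label and forces a per-class use of Lemma~\ref{simple_lemma} with $N=N_k$ (yielding $\frac{1}{N_{\mathrm{pop}}}\sum_k\sqrt{N_k}$), the marginals $\boldsymbol{\mu}_t^{\mathbf{N}}[\mathcal{X}](x)=\frac{1}{N_{\mathrm{pop}}}\sum_{k}\sum_{j=1}^{N_k}\delta(x_{j,k}^{t,\mathbf{N}}=x)$ and $\boldsymbol{\nu}_t^{\mathbf{N}}[\mathcal{U}](u)=\frac{1}{N_{\mathrm{pop}}}\sum_{k}\sum_{j=1}^{N_k}\delta(u_{j,k}^{t,\mathbf{N}}=u)$ are single averages over all $N_{\mathrm{pop}}$ agents. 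Hence I would apply Lemma~\ref{simple_lemma} a single time with the sample index $n$ running over the whole population ($N=N_{\mathrm{pop}}$); the concavity inequality $\sqrt{N_{\mathrm{pop}}}=\sqrt{\sum_k N_k}\leq\sum_k\sqrt{N_k}$ is exactly what sharpens the earlier estimate to $1/\sqrt{N_{\mathrm{pop}}}$.

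For part~(a) I would first note, via Observation~\ref{obs1} and $u_{j,k}^{t,\mathbf{N}}\sim\pi_k^t(x_{j,k}^{t,\mathbf{N}},\boldsymbol{\mu}_t^{\mathbf{N}}[\mathcal{X}])$, that $\nu^{\mathrm{MF}}(\boldsymbol{\mu}_t^{\mathbf{N}},\boldsymbol{\pi}_t)[\mathcal{U}](u)=\frac{1}{N_{\mathrm{pop}}}\sum_{k}\sum_{j=1}^{N_k}\pi_k^t(x_{j,k}^{t,\mathbf{N}},\boldsymbol{\mu}_t^{\mathbf{N}}[\mathcal{X}])(u)$ equals $\mathbb{E}[\boldsymbol{\nu}_t^{\mathbf{N}}[\mathcal{U}](u)\mid\mathbf{x}_t^{\mathbf{N}}]$. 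Thus $\boldsymbol{\nu}_t^{\mathbf{N}}[\mathcal{U}]-\nu^{\mathrm{MF}}(\boldsymbol{\mu}_t^{\mathbf{N}},\boldsymbol{\pi}_t)[\mathcal{U}]$ is a centred average of conditionally independent indicators, and I would invoke Lemma~\ref{simple_lemma} conditioned on $\mathbf{x}_t^{\mathbf{N}}$ with categories $m\leftrightarrow u$ ($M=|\mathcal{U}|$), samples $n\leftrightarrow(j,k)$ ($N=N_{\mathrm{pop}}$), scalar coordinate ($S=1$), and constant coefficients $C_{m,n}\equiv 1/N_{\mathrm{pop}}$ so that $C=1/N_{\mathrm{pop}}$. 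This gives $C\sqrt{MNS}=\sqrt{|\mathcal{U}|}/\sqrt{N_{\mathrm{pop}}}$ after the outer expectation.

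Parts~(b) and~(c) would follow the same add-and-subtract template: insert the mean-field action marginal $\nu^{\mathrm{MF}}(\boldsymbol{\mu}_t^{\mathbf{N}},\boldsymbol{\pi}_t)[\mathcal{U}]$ and split each difference by the triangle inequality into a Lipschitz term and a fluctuation term. For~(b) the Lipschitz term is $\leq L_R\,\mathbb{E}|\boldsymbol{\nu}_t^{\mathbf{N}}[\mathcal{U}]-\nu^{\mathrm{MF}}[\mathcal{U}]|_1$, bounded by part~(a); the fluctuation term, after writing $r_k(x_{j,k}^{t,\mathbf{N}},u_{j,k}^{t,\mathbf{N}},\cdot)=\sum_u\delta(u_{j,k}^{t,\mathbf{N}}=u)\,r_k(x_{j,k}^{t,\mathbf{N}},u,\cdot)$, is a centred sum of terms that are conditionally independent given $\mathbf{x}_t^{\mathbf{N}}$, so Lemma~\ref{simple_lemma} applies with $S=1$ and $C=M_R/N_{\mathrm{pop}}$ (using $|r_k|\leq M_R$), giving $M_R\sqrt{|\mathcal{U}|}/\sqrt{N_{\mathrm{pop}}}$; the two pieces sum to $C_R\sqrt{|\mathcal{U}|}/\sqrt{N_{\mathrm{pop}}}$ with $C_R=M_R+L_R$. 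For~(c) I would additionally route $\boldsymbol{\mu}_{t+1}^{\mathbf{N}}[\mathcal{X}]-P^{\mathrm{MF}}(\boldsymbol{\mu}_t^{\mathbf{N}},\boldsymbol{\pi}_t)[\mathcal{X}]$ through the conditional mean $\mathbb{E}[\boldsymbol{\mu}_{t+1}^{\mathbf{N}}[\mathcal{X}]\mid\mathbf{x}_t^{\mathbf{N}},\mathbf{u}_t^{\mathbf{N}}]$: the next-state fluctuation is controlled by Lemma~\ref{simple_lemma} through Observation~\ref{obs2} ($S=1$, $C=1/N_{\mathrm{pop}}$, contributing the constant $1$), the $\boldsymbol{\nu}$-replacement yields $L_P$ by Assumption~\ref{ass_5}(d) together with part~(a), and the action fluctuation is treated as in~(b) but with the kernel rows $C_{u,(j,k)}(x)=\frac{1}{N_{\mathrm{pop}}}P_k(x_{j,k}^{t,\mathbf{N}},u,\boldsymbol{\mu}_t^{\mathbf{N}}[\mathcal{X}],\nu^{\mathrm{MF}}[\mathcal{U}])(x)$ as coefficient vectors over $S=|\mathcal{X}|$ coordinates. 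Summing $1+L_P+1$ reproduces $C_P=2+L_P$ and the factor $\sqrt{|\mathcal{X}||\mathcal{U}|}/\sqrt{N_{\mathrm{pop}}}$.

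The main obstacle is the bookkeeping in the fluctuation terms of~(b) and especially~(c). One must verify that, after conditioning on $\mathbf{x}_t^{\mathbf{N}}$ (resp.\ $(\mathbf{x}_t^{\mathbf{N}},\mathbf{u}_t^{\mathbf{N}})$), the relevant objects are the conditionally independent indicators $\{\delta(u_{j,k}^{t,\mathbf{N}}=u)\}$ (resp.\ $\{\delta(x_{j,k}^{t+1,\mathbf{N}}=x)\}$); that each coefficient vector satisfies $|C_{m,n}|_1=1/N_{\mathrm{pop}}$ precisely because every kernel row $P_k(\cdot)(\cdot)$ sums to one over $\mathcal{X}$; and that pulling the inner sum over $u$ outside the absolute value, in order to match the form of Lemma~\ref{simple_lemma}, costs nothing by the triangle inequality. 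Tracking the coordinate count $S$ correctly ($S=1$ for the scalar reward, $S=|\mathcal{X}|$ for the distribution-valued kernel) is exactly what produces the $\sqrt{|\mathcal{U}|}$ versus $\sqrt{|\mathcal{X}||\mathcal{U}|}$ dependence distinguishing the three parts.
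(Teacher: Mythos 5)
Your proposal is correct and matches the paper's own proof essentially step for step: the same single application of Lemma \ref{simple_lemma} over all $N_{\mathrm{pop}}$ agents (rather than class by class) for part (a), the same $J_1+J_2$ split via the triangle inequality and Assumption \ref{ass_5} for part (b), and the same three-term decomposition for part (c) with the kernel rows as coefficient vectors of unit $L_1$-norm over $S=|\mathcal{X}|$ coordinates, yielding $C_P=2+L_P$. The only difference is cosmetic (whether the $1/N_{\mathrm{pop}}$ prefactor is absorbed into the constants $C_{m,n}$ or kept outside).
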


\subsection{Proof of the Theorem}
Following the proof of Theorem \ref{thm_1}, we can write,
\begin{align}
\left|v^{\mathbf{N}}(\mathbf{x}_0^{\mathbf{N}},{\boldsymbol{\pi}})-v^{\mathrm{MF}}(\boldsymbol{\mu}_0,\boldsymbol{\pi})\right|\leq J_1 + J_2	
\end{align}
where the first term $J_1$ is defined as follows:
\begin{align}
\begin{split}
J_1&\triangleq \sum_{t=0}^{\infty}\gamma^t\mathbb{E}\left|\dfrac{1}{N_{\mathrm{pop}}}\sum_{k\in[K]}\sum_{j=1}^{N_k}{r}_k(x_{j,k}^{t,\mathbf{N}},u_{j,k}^{t,\mathbf{N}},\boldsymbol{\mu}_t^{\mathbf{N}}[\mathcal{X}],\boldsymbol{\nu}_t^{\mathbf{N}}[\mathcal{U}])-\sum_{k\in[K]}r_k^{\mathrm{MF}}(\boldsymbol{\mu}_t^{\mathbf{N}},\boldsymbol{\pi}_t)\right|\\
& \overset{(a)}{\leq} \dfrac{C_R}{1-\gamma}\sqrt{|\mathcal{U}|} \dfrac{1}{\sqrt{N_{\mathrm{pop}}}}
\end{split}
\end{align}

The inequality (a) follows from Lemma \ref{lemma_approx}. The second term, $J_2$ is given as follows:
\begin{align}
\begin{split}
J_2&\triangleq\sum_{t=0}^{\infty}\gamma^t\left| \sum_{k\in[K]}r_k^{\mathrm{MF}}(\boldsymbol{\mu}_t,\boldsymbol{\pi}_t)-\sum_{k\in[K]}\mathbb{E}\left[r_k^{\mathrm{MF}}(\boldsymbol{\mu}_t^{\mathbf{N}},{\boldsymbol{\pi}}_t)\right]\right|\\
&\leq\sum_{t=0}^{\infty}\gamma^t \sum_{k\in[K]}\mathbb{E}\left|r_k^{\mathrm{MF}}(\boldsymbol{\mu}_t,\boldsymbol{\pi}_t)-r_k^{\mathrm{MF}}(\boldsymbol{\mu}_t^{\mathbf{N}},{\boldsymbol{\pi}}_t)\right|\\
&\overset{(a)}{\leq} \ \sum_{t=0}^{\infty}\gamma^t\left\lbrace S_R'\mathbb{E}\left|\boldsymbol{\mu}_t^{\mathbf{N}}-\boldsymbol{\mu}_t\right|_{1}+S_R''\mathbb{E}\left|\boldsymbol{\mu}_t^{\mathbf{N}}[\mathcal{X}]-\boldsymbol{\mu}_t[\mathcal{X}]\right|_{1}\right\rbrace
\end{split}
\label{eq44}
\end{align}

Inequality (a) is due to Lemma \ref{lemma_continuity}. Observe that, $\forall t\geq 0$ the following holds,
\begin{align}
\label{eq46}
\begin{split}&S_R'\mathbb{E}\left|\boldsymbol{\mu}_{t+1}^{\mathbf{N}}-\boldsymbol{\mu}_{t+1}\right|_{1}+S_R''\mathbb{E}\left|\boldsymbol{\mu}_{t+1}^{\mathbf{N}}[\mathcal{X}]-\boldsymbol{\mu}_{t+1}[\mathcal{X}]\right|_{1}\\
&\leq  S_R'\mathbb{E}\left|\boldsymbol{\mu}_{t+1}^{\mathbf{N}}-P^{\mathrm{MF}}\left(\boldsymbol{\mu}_t^{\mathbf{N}},{\boldsymbol{\pi}}_t\right)\right|_{1}+S_R''\mathbb{E}\left|\boldsymbol{\mu}_{t+1}^{\mathbf{N}}[\mathcal{X}]-P^{\mathrm{MF}}\left(\boldsymbol{\mu}_t^{\mathbf{N}},{\boldsymbol{\pi}}_t\right)[\mathcal{X}]\right|_{1}\\
&+ S_R'\mathbb{E}\left|P^{\mathrm{MF}}\left(\boldsymbol{\mu}_t^{\mathbf{N}},{\boldsymbol{\pi}}_t\right)-\boldsymbol{\mu}_{t+1}\right|_{1} + S_R''\mathbb{E}\left|P^{\mathrm{MF}}\left(\boldsymbol{\mu}_t^{\mathbf{N}},{\boldsymbol{\pi}}_t\right)[\mathcal{X}]-\boldsymbol{\mu}_{t+1}[\mathcal{X}]\right|_{1}
\end{split}
\end{align} 

The first two terms can be upper bounded by invoking  Lemma \ref{lemma_mu} and \ref{lemma_approx} respectively. Utilising Lemma \ref{lemma_continuity}, the last two term can be upper bounded as follows:
\begin{align}
\begin{split}
\mathbb{E}\left|P^{\mathrm{MF}}\left(\boldsymbol{\mu}_t^{\mathbf{N}},\boldsymbol{\pi}_t\right)[\mathcal{X}]-\boldsymbol{\mu}_{t+1}[\mathcal{X}]\right|_{1}&\leq \mathbb{E}\left|P^{\mathrm{MF}}\left(\boldsymbol{\mu}_t^{\mathbf{N}},\boldsymbol{\pi}_t\right)-\boldsymbol{\mu}_{t+1}\right|_{1}\\
&=\mathbb{E}\left|P^{\mathrm{MF}}\left(\boldsymbol{\mu}_t^{\mathbf{N}},\boldsymbol{\pi}_t\right)-P^{\mathrm{MF}}\left(\boldsymbol{\mu}_{t},\boldsymbol{\pi}_t\right)\right|_{1}\\
&\leq S_P' (\mathbb{E} |\boldsymbol{\mu}_t^{\mathbf{N}}-\boldsymbol{\mu}_{t}|_{1}) + S_P'' (\mathbb{E} |\boldsymbol{\mu}_t^{\mathbf{N}}[\mathcal{X}]-\boldsymbol{\mu}_{t}[\mathcal{X}]|_{1})	
\end{split}
\end{align}

Therefore, $(\ref{eq46})$ can be rewritten as,
\begin{align}	
\label{eq_48}
\begin{split}
S_R'\mathbb{E}&\left|\boldsymbol{\mu}_{t+1}^{\mathbf{N}}-\boldsymbol{\mu}_{t+1}\right|_{1}+S_R''\mathbb{E}\left|\boldsymbol{\mu}_{t+1}^{\mathbf{N}}[\mathcal{X}]-\boldsymbol{\mu}_{t+1}[\mathcal{X}]\right|_{1}\\&\leq C_P\left[\sqrt{|\mathcal{X}|}+\sqrt{|\mathcal{U}|}\right] \left[\dfrac{S_R'}{N_{\mathrm{pop}}}\left(\sum_{k\in[K]}\sqrt{N_k}\right)+\dfrac{S_R''}{\sqrt{N_{\mathrm{pop}}}}\right]\\
&\hspace{1cm}+S_R	\Big(S_P'\mathbb{E}\left|\boldsymbol{\mu}_t^{\mathbf{N}}-\boldsymbol{\mu}_t\right|_{1}+S_P'\mathbb{E}\left|\boldsymbol{\mu}_t^{\mathbf{N}}[\mathcal{X}]-\boldsymbol{\mu}_t[\mathcal{X}]\right|_{1}\Big)
\end{split}
\end{align}

where $S_R=S_R'+S_R''$. Similarly, one can show that,
\begin{align}
\label{eq_49}
\begin{split}
S_P'&\mathbb{E}\left|\boldsymbol{\mu}_{t+1}^{\mathbf{N}}-\boldsymbol{\mu}_{t+1}\right|_{1}+S_P''\mathbb{E}\left|\boldsymbol{\mu}_{t+1}^{\mathbf{N}}[\mathcal{X}]-\boldsymbol{\mu}_{t+1}[\mathcal{X}]\right|_{1}\\&\leq C_P\left[\sqrt{|\mathcal{X}|}+\sqrt{|\mathcal{U}|}\right] \left[\dfrac{S_P'}{N_{\mathrm{pop}}}\left(\sum_{k\in[K]}\sqrt{N_k}\right)+\dfrac{S_P''}{\sqrt{N_{\mathrm{pop}}}}\right]\\
&\hspace{1cm}+S_P	\Big(S_P'\mathbb{E}\left|\boldsymbol{\mu}_t^{\mathbf{N}}-\boldsymbol{\mu}_t\right|_{1}+S_P'\mathbb{E}\left|\boldsymbol{\mu}_t^{\mathbf{N}}[\mathcal{X}]-\boldsymbol{\mu}_t[\mathcal{X}]\right|_{1}\Big)
\end{split}
\end{align}

Recall that $\boldsymbol{\mu}_0^{\mathbf{N}}=\boldsymbol{\mu}_0$. Combining the above results, we therefore obtain,
\begin{align}
\label{eq_50}
\begin{split}
&S_R'\mathbb{E}\left|\boldsymbol{\mu}_{t+1}^{\mathbf{N}}-\boldsymbol{\mu}_{t+1}\right|_{1}+S_R''\mathbb{E}\left|\boldsymbol{\mu}_{t+1}^{\mathbf{N}}[\mathcal{X}]-\boldsymbol{\mu}_{t+1}[\mathcal{X}]\right|_{1}\\&\leq C_P\left[\sqrt{|\mathcal{X}|}+\sqrt{|\mathcal{U}|}\right]\Bigg\lbrace \left[\dfrac{S_R'}{N_{\mathrm{pop}}}\left(\sum_{k\in[K]}\sqrt{N_k}\right)+\dfrac{S_R''}{\sqrt{N_{\mathrm{pop}}}}\right]\\
&\hspace{3cm}+S_R\left[\dfrac{S_P'}{N_{\mathrm{pop}}}\left(\sum_{k\in[K]}\sqrt{N_k}\right)+\dfrac{S_P''}{\sqrt{N_{\mathrm{pop}}}}\right]\left(\dfrac{S_P^t-1}{S_P-1}\right)\Bigg\rbrace
\end{split}
\end{align}

Clearly, $J_2$ is upper bounded as follows,
\begin{align*}
J_2&\leq C_P\left[\sqrt{|\mathcal{X}|}+\sqrt{|\mathcal{U}|}\right]\left(\dfrac{\gamma}{1-\gamma}\right) \left[\dfrac{S_R'}{N_{\mathrm{pop}}}\left(\sum_{k\in[K]}\sqrt{N_k}\right)+\dfrac{S_R''}{\sqrt{N_{\mathrm{pop}}}}\right]\\
&+ C_P\left(\dfrac{S_R}{S_P-1}\right)\left[\sqrt{|\mathcal{X}|}+\sqrt{|\mathcal{U}|}\right] \left[\dfrac{S_P'}{N_{\mathrm{pop}}}\left(\sum_{k\in[K]}\sqrt{N_k}\right)+\dfrac{S_P''}{\sqrt{N_{\mathrm{pop}}}}\right]\left(\dfrac{\gamma}{1-\gamma S_P}-\dfrac{\gamma}{1-\gamma}\right)
\end{align*}

This completes the proof of the Theorem. 

\section{Proof of Lemma \ref{lemma1}}
\label{app_proof_lemma_2}

The following chain of inequalities hold true.
\begin{align*}
\begin{split}
|\nu^{\mathrm{MF}}(\boldsymbol{\mu}&,\boldsymbol{\pi})-\nu^{\mathrm{MF}}(\boldsymbol{\mu}',\boldsymbol{\pi})|_{1}\\
&=\sum_{k\in[K]}|\nu_k^{\mathrm{MF}}(\boldsymbol{\mu},\boldsymbol{\pi})-\nu_k^{\mathrm{MF}}(\boldsymbol{\mu}',\boldsymbol{\pi})|_{1}\\
&= \sum_{k\in[K]}\left|\sum_{x\in\mathcal{X}}\boldsymbol{\mu}(x,k)\pi_k(x,\boldsymbol{\mu})-\sum_{x\in\mathcal{X}}\boldsymbol{\mu}'(x,k)\pi_k(x,\boldsymbol{\mu}')\right|_{1}\\
&= \sum_{k\in[K]}\sum_{u\in\mathcal{U}}\left|\sum_{x\in\mathcal{X}}\boldsymbol{\mu}(x,k)\pi_k(x,\boldsymbol{\mu})(u)-\sum_{x\in\mathcal{X}}\boldsymbol{\mu}'(x,k)\pi_k(x,\boldsymbol{\mu}')(u)\right|\\
&\overset{}{\leq} \sum_{k\in[K]}\sum_{u\in\mathcal{U}}\sum_{x\in\mathcal{X}}\left|\boldsymbol{\mu}(x,k)\pi_k(x,\boldsymbol{\mu})(u)-\boldsymbol{\mu}'(x,k)\pi_k(x,\boldsymbol{\mu}')(u)\right|\\
&\overset{}{\leq} \sum_{k\in[K]}\sum_{x\in\mathcal{X}}|\boldsymbol{\mu}(x,k)-\boldsymbol{\mu}'(x,k)|\sum_{u\in\mathcal{U}}\pi_k(x,\boldsymbol{\mu})(u)\\
&\hspace{2cm}+\sum_{k\in[K]}\sum_{x\in\mathcal{X}}\boldsymbol{\mu}'(x,k)\sum_{u\in\mathcal{U}}|\pi_k(x,\boldsymbol{\mu})(u)-\pi_k(x,\boldsymbol{\mu}')(u)|\\
&\overset{(a)}{\leq} \sum_{k\in[K]}\sum_{x\in\mathcal{X}}|\boldsymbol{\mu}(x,k)-\boldsymbol{\mu}'(x,k)|+L_Q|\boldsymbol{\mu}-\boldsymbol{\mu}'|_{1}\sum_{k\in[K]}\sum_{x\in\mathcal{X}}\boldsymbol{\mu}'(x,k)\\
&\overset{(b)}{=} \left(1+L_Q\right)|\boldsymbol{\mu}-\boldsymbol{\mu}'|_1
\end{split}
\end{align*}

Inequality (a) follows from Assumption \ref{assumptions} and the fact that $\pi_k(x,\boldsymbol{\mu})$ is a  distribution. Finally, equality (b) uses  the fact that $\boldsymbol{\mu}'$ is a distribution. This concludes the result.

\section{Proof of Lemma \ref{lemma2}}

Note that,
\begin{align*}
\sum_{k\in[K]}&|	r_k^{\mathrm{MF}}(\boldsymbol{\mu},\boldsymbol{\pi})-	r_k^{\mathrm{MF}}(\boldsymbol{\mu}',\boldsymbol{\pi})|\\
&\leq  \sum_{k\in[K]}\sum_{x\in\mathcal{X}} \sum_{u\in\mathcal{U}} |r_k(x,u,\boldsymbol{\mu},\nu^{\mathrm{MF}}(\boldsymbol{\mu},\boldsymbol{\pi}))-r_k(x,u,\boldsymbol{\mu}',\nu^{\mathrm{MF}}(\boldsymbol{\mu}',\boldsymbol{\pi}))|\times\boldsymbol{\mu}(x,k)\pi_k(x,\boldsymbol{\mu})(u)\\
&+\sum_{k\in[K]} \sum_{x\in\mathcal{X}} \sum_{u\in\mathcal{U}} |r_k(x,u,\boldsymbol{\mu}',\nu^{\mathrm{MF}}(\boldsymbol{\mu}',\boldsymbol{\pi}))|\times|\boldsymbol{\mu}(x,k)\pi_k(x,\boldsymbol{\mu})(u) - \boldsymbol{\mu}'(x,k)\pi_k(x,\boldsymbol{\mu}')(u)|
\end{align*}

Utilising Assumption \ref{ass_1}(c), and the facts that $\boldsymbol{\mu}$, $\pi_k(x,\boldsymbol{\mu})$ are probability distributions, the first term can be upper bounded by the following expression,
\begin{align*} &L_R\left(|\boldsymbol{\mu}-\boldsymbol{\mu}'|_{1} + |\nu^{\mathrm{MF}}(\boldsymbol{\mu},\boldsymbol{\pi})-\nu^{\mathrm{MF}}(\boldsymbol{\mu}',\boldsymbol{\pi})|_{1}\right)
\leq L_R \left[1+(1+L_Q)\right]|\boldsymbol{\mu}-\boldsymbol{\mu}'|_{1}
\end{align*}

Lemma \ref{lemma1} is applied to derive the above inequality. Utilising Assumption \ref{ass_1}(b), the second term can be upper bounded by the following quantity:
\begin{align*}
M_R\sum_{k\in[K]} \sum_{x\in\mathcal{X}} \sum_{u\in\mathcal{U}} |\boldsymbol{\mu}(x,k)\pi_k(x,\boldsymbol{\mu})(u) - \boldsymbol{\mu}'(x,k)\pi_k(x,\boldsymbol{\mu}')(u)|\overset{(a)}{\leq} M_R(1+L_Q)|\boldsymbol{\mu}-\boldsymbol{\mu}'|_{1}
\end{align*}

Inequality (a) can be proved using identical arguments as used in Lemma \ref{lemma1}. 
This concludes the result.

\section{Proof of Lemma \ref{lemma3}}

Note that,
\begin{align*}
|	P^{\mathrm{MF}}&(\boldsymbol{\mu},\boldsymbol{\pi})-	P^{\mathrm{MF}}(\boldsymbol{\mu}',\boldsymbol{\pi})|_1=\sum_{k\in[K]}|	P_k^{\mathrm{MF}}(\boldsymbol{\mu},\boldsymbol{\pi})-	P_k^{\mathrm{MF}}(\boldsymbol{\mu}',\boldsymbol{\pi})|_1\\
&\leq  \sum_{k\in[K]}\sum_{x\in\mathcal{X}} \sum_{u\in\mathcal{U}} |P_k(x,u,\boldsymbol{\mu},\nu^{\mathrm{MF}}(\boldsymbol{\mu},\boldsymbol{\pi}))-P_k(x,u,\boldsymbol{\mu}',\nu^{\mathrm{MF}}(\boldsymbol{\mu}',\boldsymbol{\pi}))|_1\times\boldsymbol{\mu}(x,k)\pi_k(x,\boldsymbol{\mu})(u)\\
&+\sum_{k\in[K]} \sum_{x\in\mathcal{X}} \sum_{u\in\mathcal{U}} |P_k(x,u,\boldsymbol{\mu}',\nu^{\mathrm{MF}}(\boldsymbol{\mu}',\boldsymbol{\pi}))|_1\times|\boldsymbol{\mu}(x,k)\pi_k(x,\boldsymbol{\mu})(u) - \boldsymbol{\mu}'(x,k)\pi_k(x,\boldsymbol{\mu}')(u)|
\end{align*}

Utilising Assumption \ref{ass_1}(d), and the facts that $\boldsymbol{\mu}$, $\pi_k(x,\boldsymbol{\mu})$ are probability distributions, the first term can be upper bounded by the following expression,
\begin{align*} &L_P\left(|\boldsymbol{\mu}-\boldsymbol{\mu}'|_{1} + |\nu^{\mathrm{MF}}(\boldsymbol{\mu},\boldsymbol{\pi})-\nu^{\mathrm{MF}}(\boldsymbol{\mu}',\boldsymbol{\pi})|_{1}\right)
\leq L_P \left[1+(1+L_Q)\right]|\boldsymbol{\mu}-\boldsymbol{\mu}'|_{1}
\end{align*}

Lemma \ref{lemma1} is applied to derive the above inequality. Note that, $|P_k(x,u,\boldsymbol{\mu}',\nu(\boldsymbol{\mu}',\boldsymbol{\pi}))|_1=1$. Therefore, the second term can be bounded by the following quantity.
\begin{align*}
\sum_{k\in[K]} \sum_{x\in\mathcal{X}} \sum_{u\in\mathcal{U}} |\boldsymbol{\mu}(x,k)\pi_k(x,\boldsymbol{\mu})(u) - \boldsymbol{\mu}'(x,k)\pi_k(x,\boldsymbol{\mu}')(u)|\overset{(a)}{\leq} (1+L_Q)|\boldsymbol{\mu}-\boldsymbol{\mu}'|_{1}
\end{align*}

Inequality (a) can be proved using identical arguments as used in Lemma \ref{lemma1}. 
This concludes the result.

\section{Proof of Lemma \ref{simple_lemma}}

Let, $Y_{m,n}\triangleq X_{m,n}-\mathbb{E}[X_{m,n}]$, $\forall m\in [M]$, $\forall m\in[N]$. We need the following results to prove Lemma \ref{simple_lemma}. 

\begin{proposition}
$\forall m\in[M]$, $\forall n\in[N]$, $\mathbb{E}[Y_{m,n}^2]\leq \mathbb{E}[X_{m,n}]$.
	\label{prop_1}
\end{proposition}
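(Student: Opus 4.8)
The plan is to observe that $\mathbb{E}[Y_{m,n}^2]$ is precisely the variance of the Bernoulli variable $X_{m,n}$ and then bound that variance by its mean. Fix arbitrary $m\in[M]$ and $n\in[N]$ and abbreviate $p\triangleq\mathbb{E}[X_{m,n}]$, which lies in $[0,1]$ since $X_{m,n}$ is Bernoulli.

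First I would expand the square. Since $Y_{m,n}=X_{m,n}-p$, linearity of expectation gives $\mathbb{E}[Y_{m,n}^2]=\mathbb{E}[X_{m,n}^2]-2p\,\mathbb{E}[X_{m,n}]+p^2=\mathbb{E}[X_{m,n}^2]-p^2$. The single key fact is that $X_{m,n}$ takes values in $\{0,1\}$, so $X_{m,n}^2=X_{m,n}$ pointwise and hence $\mathbb{E}[X_{m,n}^2]=\mathbb{E}[X_{m,n}]=p$. Substituting yields $\mathbb{E}[Y_{m,n}^2]=p-p^2=p(1-p)$.

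Finally I would bound $p(1-p)\le p$ using $1-p\le 1$, which holds because $p\ge 0$. This gives $\mathbb{E}[Y_{m,n}^2]\le p=\mathbb{E}[X_{m,n}]$, as claimed. There is no real obstacle here: the statement reduces to the elementary identity $\mathrm{Var}(\mathrm{Bernoulli}(p))=p(1-p)$ together with $1-p\le 1$, and the only observation worth isolating is that squaring a $\{0,1\}$-valued random variable leaves it unchanged. Notably the constraint $\sum_{m}\mathbb{E}[X_{m,n}]=1$ is not needed for this proposition; it will presumably enter later, when these per-coordinate second-moment bounds are summed over $m$ in the proof of Lemma \ref{simple_lemma}.
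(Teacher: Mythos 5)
Your proof is correct and follows essentially the same route as the paper's: both reduce $\mathbb{E}[Y_{m,n}^2]$ to the variance $\mathbb{E}[X_{m,n}]-(\mathbb{E}[X_{m,n}])^2$ via the identity $X_{m,n}^2=X_{m,n}$ for a $\{0,1\}$-valued variable, and then drop the nonnegative subtracted term. Your remark that the constraint $\sum_m \mathbb{E}[X_{m,n}]=1$ is not needed here and only enters when summing over $m$ in Lemma \ref{simple_lemma} is also accurate.
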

\begin{proof}
For random variables $X_{m,n}\in[0,1]$, note that, 
\begin{align*}
\mathbb{E}[Y_{m,n}^2]&=E[X_{m,n}^2]-(\mathbb{E}[X_{m,n}])^2\\
&\leq E[X_{m,n}]-(\mathbb{E}[X_{m,n}])^2\leq \mathbb{E}[X_{m,n}]
\end{align*}
\end{proof}

\begin{proposition}
	\label{prop_2}
$\forall m\in[M]$, $\mathbb{E}[\sum_{n=1}^NC_{m,n}Y_{m,n}]^2\leq C^2\sum_{n=1}^N\mathbb{E}[Y_{m,n}^2]$.
\end{proposition}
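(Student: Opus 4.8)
The plan is to fix an arbitrary $m\in[M]$ and treat each coordinate $s\in[S]$ separately. For each $s$, define the scalar random variable $Z_s\triangleq\sum_{n=1}^N C_{m,n}(s)Y_{m,n}$, and expand the second moment:
\begin{align*}
\mathbb{E}[Z_s^2]=\sum_{n=1}^N\sum_{n'=1}^N C_{m,n}(s)C_{m,n'}(s)\,\mathbb{E}[Y_{m,n}Y_{m,n'}].
\end{align*}
First I would argue that all off-diagonal terms vanish. Since $Y_{m,n}=X_{m,n}-\mathbb{E}[X_{m,n}]$ has zero mean and, for fixed $m$, the variables $\{X_{m,n}\}_{n\in[N]}$ (hence $\{Y_{m,n}\}_{n\in[N]}$) are independent, we get $\mathbb{E}[Y_{m,n}Y_{m,n'}]=\mathbb{E}[Y_{m,n}]\mathbb{E}[Y_{m,n'}]=0$ whenever $n\neq n'$. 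This collapses the double sum to the diagonal, yielding $\mathbb{E}[Z_s^2]=\sum_{n=1}^N C_{m,n}(s)^2\,\mathbb{E}[Y_{m,n}^2]$.

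Next I would sum over $s$ and swap the order of summation to isolate the dependence on $C_{m,n}$:
\begin{align*}
\sum_{s=1}^S\mathbb{E}[Z_s^2]=\sum_{n=1}^N\mathbb{E}[Y_{m,n}^2]\left(\sum_{s=1}^S C_{m,n}(s)^2\right).
\end{align*}
The remaining task is to control the inner factor $\sum_{s=1}^S C_{m,n}(s)^2$, i.e.\ the squared $\ell_2$-norm of the vector $C_{m,n}$, using the hypothesis $|C_{m,n}|_1\leq C$. This is the crux of the argument, and it rests on the elementary inequality $\sum_{s}a_s^2\leq\left(\sum_{s}|a_s|\right)^2$, which follows by expanding the right-hand side and discarding the nonnegative cross terms. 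Applying it gives $\sum_{s=1}^S C_{m,n}(s)^2\leq|C_{m,n}|_1^2\leq C^2$, a bound uniform in $n$.

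Substituting this uniform bound back produces $\sum_{s=1}^S\mathbb{E}[Z_s^2]\leq C^2\sum_{n=1}^N\mathbb{E}[Y_{m,n}^2]$, which is exactly the claimed inequality. I expect no genuine obstacle here: the proof is a routine second-moment expansion whose only two ideas are (i) the cancellation of cross terms via independence and mean-zero, and (ii) the $\ell_2\le\ell_1$ norm comparison. The mild point to state carefully is that independence is only assumed \emph{within} a fixed index $m$, which is all that the computation requires since $m$ is held fixed throughout. This proposition, together with Proposition~\ref{prop_1} bounding $\mathbb{E}[Y_{m,n}^2]$ by $\mathbb{E}[X_{m,n}]$ and the constraint $\sum_{m}\mathbb{E}[X_{m,n}]=1$, will then feed into the proof of Lemma~\ref{simple_lemma} via a Cauchy--Schwarz step converting second-moment control into the desired $L_1$ bound.
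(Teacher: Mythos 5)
Your proof is correct and follows essentially the same route as the paper's: expand the square, kill the cross terms using independence across $n$ (for fixed $m$) together with $\mathbb{E}[Y_{m,n}]=0$, and then bound $\sum_s C_{m,n}(s)^2$ by $|C_{m,n}|_1^2\leq C^2$. No substantive differences to report.
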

\begin{proof}
	Using the independence of $Y_{m,n}$'s, we deduce, $\forall m\in [M]$,
	\begin{align*}
    &\mathbb{E}\left[\sum_{n=1}^NC_{m,n}Y_{m,n}\right]^2 \\
    &=   \mathbb{E}\left[\sum_{n_1=1}^N \sum_{n_2=1}^NC_{m,n_1}C_{m,n_2}Y_{m,n_1}Y_{m,n_2}\right]\\
    &= \sum_{n=1}^NC_{m,n}^2\mathbb{E}\left[Y_{m,n}^2\right] +2 \sum_{n_1=1}^{N}\sum_{n_2>n_1}^N C_{m,n_1}C_{m,n_2}\mathbb{E}[Y_{m,n_1}]\mathbb{E}[Y_{m,n_2}]\\
    &\overset{(a)}{=}\sum_{n=1}^NC_{m,n}^2\mathbb{E}\left[Y_{m,n}^2\right]\\
    &\leq C^2\sum_{n=1}^N \mathbb{E}[Y^2_{m,n}]
	\end{align*}
	Equality (a) uses the fact that $\mathbb{E}[Y_{m,n}]=0$, $\forall m\in [M],\forall n\in [N]$. 
\end{proof}

We are now ready to prove Lemma \ref{simple_lemma}. Note that,
\begin{align*}
   &\sum_{m=1}^M\mathbb{E}\left|\sum_{n=1}^N C_{m,n}Y_{m,n}\right|\\
   &\overset{(a)}{\leq}\sqrt{M} \left\lbrace\sum_{m=1}^M\mathbb{E}\left[\sum_{n=1}^N C_{m,n}Y_{m,n}\right]^2\right\rbrace^{\frac{1}{2}}\\
   &\overset{(b)}{\leq}C\sqrt{M} \left\lbrace\sum_{m=1}^M\sum_{n=1}^N\mathbb{E}\left[Y_{m,n}^2\right]\right\rbrace^{\frac{1}{2}}\\
   &
      \overset{(c)}{\leq}C\sqrt{M} \left\lbrace\sum_{n=1}^N\sum_{m=1}^M\mathbb{E}\left[X_{m,n}\right]\right\rbrace^{\frac{1}{2}}\\ &=C\sqrt{MN}
\end{align*}

Result (a) is a consequence of Cauchy-Schwarz inequality, and (b), (c) follow from Proposition \ref{prop_2}, and {\ref{prop_1}} respectively. This concludes the result.

\section{Proof of Lemma \ref{lemma_nu}}

Using the definition of $L_{1}$-norm, we get:
\begin{align*}\mathbb{E}\left|\boldsymbol{\nu}_t^{\mathbf{N}}-\nu^{\mathrm{MF}}(\boldsymbol{\mu}_t^{\mathbf{N}},\boldsymbol{\pi}_t)\right|_1 &=  \sum_{k\in[K]}\sum_{u\in\mathcal{U}}\mathbb{E}\left|\boldsymbol{\nu}_t^{\mathbf{N}}(u,k)-\nu^{\mathrm{MF}}(\boldsymbol{\mu}_t^{\mathbf{N}},\boldsymbol{\pi}_t)(u,k)\right|\\
&=\dfrac{1}{N_{\mathrm{pop}}}\sum_{k\in[K]} \sum_{u\in\mathcal{U}}\mathbb{E}\left|\sum_{j=1}^{N_k}\delta(u_{j,k}^{t,\mathbf{N}}=u)-\sum_{j=1}^{N_k}\pi_k^t(x_{j,k}^{t,\mathbf{N}},\boldsymbol{\mu}_t^{\mathbf{N}})(u)\right|
\end{align*}

Recall from Observation \ref{obs1} that, the random variables $u_{j,k}^{t,\mathbf{N}}$'s are independent conditioned on $\mathbf{x}_t^{\mathbf{N}}$. Also, it is easy to check the following relations, 
\begin{align*}
&	\mathbb{E}\left[\delta(u_{j,k}^{t,\mathbf{N}}=u)\Big|\mathbf{x}_t^{\mathbf{N}}\right] = \pi_k^t(x_{j,k}^{t,\mathbf{N}},\boldsymbol{\mu}_t^{\mathbf{N}})(u)~\text{and}~\sum_{u\in\mathcal{U}}\mathbb{E}\left[\delta(u_{j,k}^{t,\mathbf{N}}=u)\Big|\mathbf{x}_t^{\mathbf{N}}\right] = 1
\end{align*}

Using Lemma \ref{simple_lemma}, we therefore conclude:
\begin{align*}
\mathbb{E}\left|\boldsymbol{\nu}_t^{\mathbf{N}}-\nu^{\mathrm{MF}}(\boldsymbol{\mu}_t^{\mathbf{N}},\boldsymbol{\pi}_t)\right|_1\leq \dfrac{1}{N_{\mathrm{pop}}}\left(\sum_{k\in[K]}\sqrt{N_k}\right)\sqrt{|\mathcal{U}|}
\end{align*}

\section{Proof of Lemma \ref{lemma_r}}

Note that,
\begin{align*}
r_k^{\mathrm{MF}}(\boldsymbol{\mu}_t^{\mathbf{N}},\boldsymbol{\pi}_t)&=\dfrac{1}{N_{\mathrm{pop}}}\sum_{j=1}^{N_k}\sum_{x\in\mathcal{X}}\sum_{u\in\mathcal{U}}r_k(x,u,\boldsymbol{\mu}_t^{\mathbf{N}},\nu^{\mathrm{MF}}(\boldsymbol{\mu}_t^{\mathbf{N}},\boldsymbol{\pi}_{t}))\times\pi_k^t(x,\boldsymbol{\mu}_t^{\mathbf{N}})(u)\delta(x_{j,k}^{t,\mathbf{N}}=x)\\
&=\dfrac{1}{N_{\mathrm{pop}}}\sum_{j=1}^{N_k}\sum_{u\in\mathcal{U}}r_k(x_{j,k}^{t,\mathbf{N}},u,\boldsymbol{\mu}_t^{\mathbf{N}},\nu^{\mathrm{MF}}(\boldsymbol{\mu}_t^{\mathbf{N}},\boldsymbol{\pi}_{t}))\times\pi_k^t(x_{j,k}^{t,\mathbf{N}},\boldsymbol{\mu}_t^{\mathbf{N}})(u)
\end{align*}

We can upper bound the $\mathrm{LHS}$ of $(\ref{eq_34_app})$ by $ J_1+J_2$ where $J_1$ is defined as follows.
\begin{align*}
J_1&\triangleq\dfrac{1}{N_{\mathrm{pop}}}\mathbb{E}\left|\sum_{k\in[K]}\sum_{j=1}^{N_k}
r_k(x_{j,k}^{t,\mathbf{N}},u_{j,k}^{t,\mathbf{N}},\boldsymbol{\mu}_t^{\mathbf{N}},\boldsymbol{\nu}_t^{\mathbf{N}})-r_k(x_{j,k}^{t,\mathbf{N}},u_{j,k}^{t,\mathbf{N}},\boldsymbol{\mu}_t^{\mathbf{N}},\nu^{\mathrm{MF}}(\boldsymbol{\mu}_t^{\mathbf{N}},\boldsymbol{\pi}_{t}))
\right|\\
&\leq\dfrac{1}{N_{\mathrm{pop}}}\mathbb{E}\sum_{k\in[K]}\sum_{j=1}^{N_k}\left|
r_k(x_{j,k}^{t,\mathbf{N}},u_{j,k}^{t,\mathbf{N}},\boldsymbol{\mu}_t^{\mathbf{N}},\boldsymbol{\nu}_t^{\mathbf{N}})-r_k(x_{j,k}^{t,\mathbf{N}},u_{j,k}^{t,\mathbf{N}},\boldsymbol{\mu}_t^{\mathbf{N}},\nu^{\mathrm{MF}}(\boldsymbol{\mu}_t^{\mathbf{N}},\boldsymbol{\pi}_{t}))
\right|\\
&\overset{(a)}{\leq}\dfrac{1}{N_{\mathrm{pop}}}\sum_{k\in[K]}\sum_{j=1}^{N_k}L_R \mathbb{E}\left|\boldsymbol{\nu}_t^{\mathbf{N}}-\nu^{\mathrm{MF}}(\boldsymbol{\mu}_t^{\mathbf{N}},\boldsymbol{\pi}_t)\right|_{1}\overset{(b)}{\leq} \dfrac{L_R}{N_{\mathrm{pop}}}\left(\sum_{k\in[K]}\sqrt{N_k}\right)\sqrt{|\mathcal{U}|}
\end{align*}

Inequality (a) follows from Assumption \ref{ass_1}(c) whereas inequality (b) follows from Lemma \ref{lemma_nu}. The term $J_2$ is defined below.
\begin{align*}
J_2&\triangleq\mathbb{E}\left|\dfrac{1}{N_{\mathrm{pop}}}\sum_{k\in[K]} \sum_{j=1}^{N_k}\sum_{u\in\mathcal{U}}r_k(x_{j,k}^{t,\mathbf{N}},u,\boldsymbol{\mu}_t^{\mathbf{N}},\nu^{\mathrm{MF}}(\boldsymbol{\mu}_t^{\mathbf{N}},\boldsymbol{\pi}_{t}))\times\left[\delta(u_{j,k}^{t,\mathbf{N}}=u)-\pi_k^t(x_{j,k}^{t,\mathbf{N}},\boldsymbol{\mu}_t^{\mathbf{N}})(u)\right]\right|\\
&\leq\dfrac{1}{N_{\mathrm{pop}}}\sum_{u\in\mathcal{U}}\mathbb{E}\left|\sum_{k\in[K]} \sum_{j=1}^{N_k}r_k(x_{j,k}^{t,\mathbf{N}},u,\boldsymbol{\mu}_t^{\mathbf{N}},\nu^{\mathrm{MF}}(\boldsymbol{\mu}_t^{\mathbf{N}},\boldsymbol{\pi}_{t}))\times\left[\delta(u_{j,k}^{t,\mathbf{N}}=u)-\pi_k^t(x_{j,k}^{t,\mathbf{N}},\boldsymbol{\mu}_t^{\mathbf{N}})(u)\right]\right| 
\end{align*}

Recall from Observation \ref{obs1} that $u_{j,k}^{t,\mathbf{N}}$'s are independent conditioned on $\mathbf{x}_t^{\mathbf{N}}$. Therefore, $\forall u\in \mathcal{U}$, $\delta(u_{j,k}^{t,\mathbf{N}}=u)$'s are independent, conditioned on $\mathbf{x}_t^{\mathbf{N}}$. Moreover,
\begin{align*}
&\mathbb{E}\left[\delta(u_{j,k}^{t,\mathbf{N}}=u)\Big|\mathbf{x}_t^{\mathbf{N}}\right]=\pi_k^t(x_{j,k}^{t,\mathbf{N}},\boldsymbol{\mu}_t^{\mathbf{N}})(u),~\forall u\in\mathcal{U},~~\\
&
\sum_{u\in\mathcal{U}}\mathbb{E}\left[\delta(u_{j,k}^{t,\mathbf{N}}=u)\Big|\mathbf{x}_t^{\mathbf{N}}\right]=1\\
\text{and~}& |r_k(x,u,\boldsymbol{\mu}_t^{\mathbf{N}},\nu^{\mathrm{MF}}(\boldsymbol{\mu}_t^{\mathbf{N}},\boldsymbol{\pi}_{t}))| \leq M_R, ~\forall x\in\mathcal{X}, \forall u\in\mathcal{U}
\end{align*}

Using Lemma \ref{simple_lemma}, we therefore get,
\begin{align*}
J_2&\leq \dfrac{M_R}{\sqrt{N}}\sqrt{|\mathcal{U}|}\leq\dfrac{M_R}{N}\left(\sum_{k\in[K]}\sqrt{N_k}\right)\sqrt{|\mathcal{U}|}
\end{align*}

This concludes the result.

\section{Proof of Lemma \ref{lemma_mu}}
\label{section_proof_lemma_8}
Note that the $\mathrm{LHS}$ of $(\ref{eq_35_app})$ can be upper bounded as follows.
\begin{align*}
\mathrm{LHS}&=\sum_{k\in[K]}\sum_{x\in\mathcal{X}}\mathbb{E}\left|\boldsymbol{\mu}_{t+1}^{\mathbf{N}}(x,k)-P_k^{\mathrm{MF}}\left(\boldsymbol{\mu}_t^{\mathbf{N}},\boldsymbol{\pi}_t\right)(x)\right|\\
&= \sum_{k\in[K]}  \sum_{x\in\mathcal{X}}\dfrac{1}{N_{\mathrm{pop}}}\mathbb{E}\Bigg|\sum_{j=1}^{N_k}\delta(x_{j,k}^{t+1,\mathbf{N}}=x)\\
&\hspace{3.5cm}-\sum_{j=1}^{N_k}\sum_{u\in\mathcal{U}}\pi_k^t(x_{j,k}^{t,\mathbf{N}},\boldsymbol{\mu}_t^{\mathbf{N}})(u) P_k(x_{j,k}^{t,\mathbf{N}},u,\boldsymbol{\mu}_t^{\mathbf{N}},\nu^{\mathrm{MF}}(\boldsymbol{\mu}_t^{\mathbf{N}},\boldsymbol{\pi}_t))(x)\Bigg|\\
&\leq  J_1 + J_2 + J_3
\end{align*}

 The first term, $J_1$ is defined as follows:
\begin{align*}
J_1&\triangleq \sum_{k\in[K]} \sum_{x\in\mathcal{X}}\dfrac{1}{N_{\mathrm{pop}}} \mathbb{E}\left|\sum_{j=1}^{N_k}\delta(x_{j,k}^{t+1,\mathbf{N}}=x)-\sum_{j=1}^{N_k}P_k(x_{j,k}^{t,\mathbf{N}},u_{j,k}^{t,\mathbf{N}},\boldsymbol{\mu}_t^{\mathbf{N}},\boldsymbol{\nu}_t^{\mathbf{N}})(x)\right|
\end{align*}

Recall from observation \ref{obs2} that $x_{j,k}^{t+1,\mathbf{N}}$'s are independent conditional on $\mathbf{x}_t^{\mathbf{N}},\mathbf{u}_t^{\mathbf{N}}$. Also,
\begin{align*}
&\mathbb{E}\left[\delta\left(x_{j,k}^{t+1,\mathbf{N}}=x\right)\big|\mathbf{x}_t^{\mathbf{N}},\mathbf{u}_t^{\mathbf{N}}\right]=P_k(x_{j,k}^{t,\mathbf{N}},u_{j,k}^{t,\mathbf{N}},\boldsymbol{\mu}_t^{\mathbf{N}},\boldsymbol{\nu}_t^{\mathbf{N}})(x),~\forall x\in\mathcal{X}\\
\text{and~}&\sum_{x\in\mathcal{X}}\mathbb{E}\left[\delta\left(x_{j,k}^{t+1,\mathbf{N}}=x\right)\big|\mathbf{x}_t^{\mathbf{N}},\mathbf{u}_t^{\mathbf{N}}\right]=1
\end{align*}

Applying  Lemma \ref{simple_lemma}, we can conclude that,
\begin{align*}
J_1&\leq \dfrac{1}{N_{\mathrm{pop}}}\left(\sum_{k\in[K]}\sqrt{N_k}\right)\sqrt{|\mathcal{X}|}\leq \dfrac{1}{N_{\mathrm{pop}}}\left(\sum_{k\in[K]}\sqrt{N_k}\right)\left[\sqrt{|\mathcal{X}|}+\sqrt{|\mathcal{U}|}\right]
\end{align*}

The second term, $J_2$ is defined as follows,
\begin{align*}
&J_2\triangleq\sum_{k\in[K]}\sum_{x\in\mathcal{X}}\dfrac{1}{N_{\mathrm{pop}}}\mathbb{E}\left|\sum_{j=1}^{N_k}P_k(x_{j,k}^{t,\mathbf{N}},u_{j,k}^{t,\mathbf{N}},\boldsymbol{\mu}_t^{\mathbf{N}},\boldsymbol{\nu}_t^{\mathbf{N}})(x)- P_k(x_{j,k}^{t,\mathbf{N}},u_{j,k}^{t,\mathbf{N}},\boldsymbol{\mu}_t^{\mathbf{N}},\nu^{\mathrm{MF}}(\boldsymbol{\mu}_t^{\mathbf{N}},\boldsymbol{\pi}_t))(x)\right|\\
&\overset{}{\leq} \dfrac{1}{N_{\mathrm{pop}}}\sum_{k\in[K]}\sum_{j=1}^{N_k}\sum_{x\in\mathcal{X}}\mathbb{E}\left|P_k(x_{j,k}^{t,\mathbf{N}},u_{j,k}^{t,\mathbf{N}},\boldsymbol{\mu}_t^{\mathbf{N}},\boldsymbol{\nu}_t^{\mathbf{N}})(x)- P_k(x_{j,k}^{t,\mathbf{N}},u_{j,k}^{t,\mathbf{N}},\boldsymbol{\mu}_t^{\mathbf{N}},\nu^{\mathrm{MF}}(\boldsymbol{\mu}_t^{\mathbf{N}},\boldsymbol{\pi}_t))(x)\right|\\
&\overset{(a)}{\leq} L_P\left|\boldsymbol{\nu}_t^{\mathbf{N}}-\nu^{\mathrm{MF}}(\boldsymbol{\mu}_t^{\mathbf{N}},\boldsymbol{\pi}_t)\right|_{1}\\
&\overset{(b)}{\leq} \dfrac{L_P}{N_{\mathrm{pop}}}\left(\sum_{k\in[K]}\sqrt{N_k}\right)\sqrt{|\mathcal{U}|}\leq \dfrac{L_P}{N_{\mathrm{pop}}}\left(\sum_{k\in[K]}\sqrt{N_k}\right)\left[\sqrt{|\mathcal{X}|}+\sqrt{|\mathcal{U}|}\right]
\end{align*}

Relation (a) is a consequence of Assumption \ref{ass_1}(d) and the inequality (b) follows from Lemma \ref{lemma_nu}. Finally, 
\begin{align*}
&J_3=\sum_{k\in[K]}\sum_{x\in\mathcal{X}}\mathbb{E}\left|\dfrac{1}{N_{\mathrm{pop}}}\sum_{j=1}^{N_k}\left[P_k(x_{j,k}^{t,\mathbf{N}},u_{j,k}^{t,\mathbf{N}},\boldsymbol{\mu}_t^{\mathbf{N}},\nu^{\mathrm{MF}}(\boldsymbol{\mu}_t^{\mathbf{N}},\boldsymbol{\pi}_t))(x)\right. \right.\\&\hspace{3cm}-\left.\left.\sum_{u\in\mathcal{U}}\pi_k^t(x_{j,k}^{t,\mathbf{N}},\boldsymbol{\mu}_t^{\mathbf{N}})(u)P_k(x_{j,k}^{t,\mathbf{N}},u,\boldsymbol{\mu}_t^{\mathbf{N}},\nu^{\mathrm{MF}}(\boldsymbol{\mu}_t^{\mathbf{N}},\boldsymbol{\pi}_t))(x)\right] \right|\\
&\overset{(a)}{\leq}\dfrac{1}{N_{\mathrm{pop}}}\left(\sum_{k\in[K]}\sqrt{N_k}\right)\sqrt{|\mathcal{X}|}\leq \dfrac{1}{N_{\mathrm{pop}}}\left(\sum_{k\in[K]}\sqrt{N_k}\right)\left[\sqrt{|\mathcal{X}|}+\sqrt{|\mathcal{U}|}\right]
\end{align*}

Inequality (a) is a result of Lemma \ref{simple_lemma} and the facts that $\{\mathbf{u}_{j,k}^{t,\mathbf{N}}\}_{j,k}$'s are independent conditioned on $\mathbf{x}_t^{\mathbf{N}}$ and
\begin{align*}
&\mathbb{E}\left[P_k(x_{j,k}^{t,\mathbf{N}},u_{j,k}^{t,\mathbf{N}},\boldsymbol{\mu}_t^{\mathbf{N}},\nu^{\mathrm{MF}}(\boldsymbol{\mu}_t^{\mathbf{N}},\boldsymbol{\pi}_t))(x)\Big|\mathbf{x}_t^{\mathbf{N}}\right]\\
&\hspace{2cm}=\sum_{u\in\mathcal{U}}\pi_k^t(x_{j,k}^{t,\mathbf{N}},\boldsymbol{\mu}_t^{\mathbf{N}})(u)P_k(x_{j,k}^{t,\mathbf{N}},u,\boldsymbol{\mu}_t^{\mathbf{N}},\nu^{\mathrm{MF}}(\boldsymbol{\mu}_t^{\mathbf{N}},\boldsymbol{\pi}_t))(x),~\forall x\in\mathcal{X},\\
&
\sum_{x\in\mathcal{X}}\mathbb{E}\left[P_k(x_{j,k}^{t,\mathbf{N}},u_{j,k}^{t,\mathbf{N}},\boldsymbol{\mu}_t^{\mathbf{N}},\nu^{\mathrm{MF}}(\boldsymbol{\mu}_t^{\mathbf{N}},\boldsymbol{\pi}_t))(x)\Big|\mathbf{x}_t^{\mathbf{N}}\right]=1
\end{align*}

This concludes the result.

\section{Proof of Lemma \ref{lemma_continuity_2}}
\label{app_lemma_9}
\subsection{Proof of Proposition (a)}

Following similar line of argument as used in the proof of Lemma \ref{lemma1}, we obtain,
\begin{align*}
\begin{split}
|\bar{\nu}^{\mathrm{MF}}(\boldsymbol{\bar{\mu}}&,\boldsymbol{\bar{\pi}})-\bar{\nu}^{\mathrm{MF}}(\boldsymbol{\bar{\mu}}',\boldsymbol{\bar{\pi}})|_{1}\\
&=\sum_{k\in[K]}|\bar{\nu}_k^{\mathrm{MF}}(\boldsymbol{\bar{\mu}},\boldsymbol{\bar{\pi}})-\bar{\nu}_k^{\mathrm{MF}}(\boldsymbol{\bar{\mu}}',\boldsymbol{\bar{\pi}})|_{1}\\
&= \sum_{k\in[K]}\sum_{u\in\mathcal{U}}\left|\sum_{x\in\mathcal{X}}\boldsymbol{\bar{\mu}}(x,k)\bar{\pi}_k(x,\boldsymbol{\bar{\mu}})(u)-\sum_{x\in\mathcal{X}}\boldsymbol{\bar{\mu}}'(x,k)\bar{\pi}_k(x,\boldsymbol{\bar{\mu}}')(u)\right|\\
&\leq \sum_{k\in[K]}\sum_{u\in\mathcal{U}}\sum_{x\in\mathcal{X}}\left|\boldsymbol{\bar{\mu}}(x,k)\bar{\pi}_k(x,\boldsymbol{\bar{\mu}})(u)-\boldsymbol{\bar{\mu}}'(x,k)\bar{\pi}_k(x,\boldsymbol{\bar{\mu}}')(u)\right|\\
&\overset{}{\leq} \sum_{k\in[K]}\sum_{x\in\mathcal{X}}|\boldsymbol{\bar{\mu}}(x,k)-\boldsymbol{\bar{\mu}}'(x,k)|\sum_{u\in\mathcal{U}}\bar{\pi}_k(x,\bar{\boldsymbol{\mu}})(u)\\
&\hspace{2cm}+\sum_{k\in[K]}\sum_{x\in\mathcal{X}}\boldsymbol{\bar{\mu}}'(x,k)\sum_{u\in\mathcal{U}}\Big|\bar{\pi}_k(x,\bar{\boldsymbol{\mu}})(u)-\bar{\pi}_k(x,\bar{\boldsymbol{\mu}}')(u)\Big|\\
&\overset{(a)}{\leq} \sum_{k\in[K]}\sum_{x\in\mathcal{X}}|\boldsymbol{\bar{\mu}}(x,k)-\boldsymbol{\bar{\mu}}'(x,k)|+\bar{L}_Q|\boldsymbol{\bar{\mu}}-\boldsymbol{\bar{\mu}}'|_{1}\sum_{k\in[K]}\sum_{x\in\mathcal{X}}\boldsymbol{\bar{\mu}}'(x,k)\\
&\overset{(b)}{=} \left(1+K\bar{L}_Q\right)|\boldsymbol{\bar{\mu}}-\boldsymbol{\bar{\mu}}'|_1
\end{split}
\end{align*}

Inequality (a) follows from Assumption \ref{ass_4} and the fact that $\bar{\pi}_k(x,\boldsymbol{\bar{\mu}})$ is a distribution $\forall x\in \mathcal{X}$, $\forall k\in [K]$. Equality $(b)$ uses the fact that $\boldsymbol{\bar{\mu}}'(.,k)$ is a distribution $\forall k\in [K]$.

\subsection{Proof of Proposition (b)}

Note that,
\begin{align*}
\sum_{k\in[K]}&\theta_k|	\bar{r}_k^{\mathrm{MF}}(\boldsymbol{\bar{\mu}},\boldsymbol{\bar{\pi}})-	\bar{r}_k^{\mathrm{MF}}(\boldsymbol{\bar{\mu}}',\boldsymbol{\bar{\pi}})|\\
&\leq  \sum_{k\in[K]}\sum_{x\in\mathcal{X}} \sum_{u\in\mathcal{U}} |\bar{r}_k(x,u,\boldsymbol{\bar{\mu}},\bar{\nu}^{\mathrm{MF}}(\boldsymbol{\bar{\mu}},\boldsymbol{\bar{\pi}}))-\bar{r}_k(x,u,\boldsymbol{\bar{\mu}}',\bar{\nu}^{\mathrm{MF}}(\boldsymbol{\bar{\mu}}',\boldsymbol{\bar{\pi}}))|\times \theta_k\boldsymbol{\bar{\mu}}(x,k)\bar{\pi}_k(x,\boldsymbol{\bar{\mu}})(u)\\
&+\sum_{k\in[K]} \sum_{x\in\mathcal{X}} \sum_{u\in\mathcal{U}} |\bar{r}_k(x,u,\boldsymbol{\bar{\mu}}',\bar{\nu}^{\mathrm{MF}}(\boldsymbol{\bar{\mu}}',\boldsymbol{\bar{\pi}}))|\times\theta_k|\boldsymbol{\bar{\mu}}(x,k)\bar{\pi}_k(x,\boldsymbol{\bar{\mu}})(u) - \boldsymbol{\bar{\mu}}'(x,k)\bar{\pi}_k(x,\boldsymbol{\bar{\mu}}')(u)|
\end{align*}

Utilising Assumption \ref{ass_2}(c), and the facts that $\boldsymbol{\theta}$, $\boldsymbol{\bar{\mu}}(\cdot,k)$, $\pi_k(x,\boldsymbol{\bar{\mu}})$ are probability distributions $\forall k\in[K]$, $\forall x\in \mathcal{X}$, the first term can be upper bounded by the following expression,
\begin{align*} &\bar{L}_R\left(|\boldsymbol{\bar{\mu}}-\boldsymbol{\bar{\mu}}'|_{1} + |\bar{\nu}^{\mathrm{MF}}(\boldsymbol{\bar{\mu}},\boldsymbol{\bar{\pi}})-\bar{\nu}^{\mathrm{MF}}(\boldsymbol{\bar{\mu}}',\boldsymbol{\bar{\pi}})|_{1}\right)
\leq \bar{L}_R \left[1+(1+K\bar{L}_Q)\right]|\boldsymbol{\bar{\mu}}-\boldsymbol{\bar{\mu}}'|_{1}
\end{align*}

Proposition (a) is used to derive the above inequality. Applying assumption \ref{ass_2}(b), the second term can be upper bounded by the following quantity.
\begin{align*}
&\bar{M}_R\sum_{k\in[K]} \sum_{x\in\mathcal{X}} \sum_{u\in\mathcal{U}} \theta_k|\boldsymbol{\bar{\mu}}(x,k)\bar{\pi}_k(x,\boldsymbol{\bar{\mu}})(u) - \boldsymbol{\bar{\mu}}'(x,k)\bar{\pi}_k(x,\boldsymbol{\bar{\mu}}')(u)|\\
&\overset{}{\leq} \bar{M}_R\sum_{k\in[K]}\sum_{x\in\mathcal{X}}\theta_k|\boldsymbol{\bar{\mu}}(x,k)-\boldsymbol{\bar{\mu}}'(x,k)|\sum_{u\in\mathcal{U}}\bar{\pi}_k(x,\bar{\boldsymbol{\mu}})(u)\\
&\hspace{2cm}+\bar{M}_R\sum_{k\in[K]}\sum_{x\in\mathcal{X}}\theta_k\boldsymbol{\bar{\mu}}'(x,k)\sum_{u\in\mathcal{U}}\Big|\bar{\pi}_k(x,\bar{\boldsymbol{\mu}})(u)-\bar{\pi}_k(x,\bar{\boldsymbol{\mu}}')(u)\Big|\\
&\overset{(a)}{\leq} \bar{M}_R\sum_{k\in[K]}\sum_{x\in\mathcal{X}}\theta_k|\boldsymbol{\bar{\mu}}(x,k)-\boldsymbol{\bar{\mu}}'(x,k)|+\bar{M}_R\bar{L}_Q|\boldsymbol{\bar{\mu}}-\boldsymbol{\bar{\mu}}'|_{1}\sum_{k\in[K]}\sum_{x\in\mathcal{X}}\theta_k\boldsymbol{\bar{\mu}}'(x,k)\\
&\overset{(b)}{\leq} \bar{M}_R\sum_{k\in[K]}\sum_{x\in\mathcal{X}}|\boldsymbol{\bar{\mu}}(x,k)-\boldsymbol{\bar{\mu}}'(x,k)|+\bar{M}_R\bar{L}_Q|\boldsymbol{\bar{\mu}}-\boldsymbol{\bar{\mu}}'|_{1}\sum_{k\in[K]}\theta_k\\
&\overset{(c)}{=}\bar{M}_R(1+\bar{L}_Q)|\boldsymbol{\bar{\mu}}-\boldsymbol{\bar{\mu}}'|_1
\end{align*}

Inequality (a) follows from Assumption \ref{ass_4} and the fact that $\bar{\pi}_k(x,\boldsymbol{\bar{\mu}})$ is a distribution $\forall x\in \mathcal{X}$, $\forall k\in [K]$ while result (b) is derived from the fact that $\boldsymbol{\bar{\mu}}'(\cdot,k)$ is a distribution and $\theta_k\leq 1$, $\forall k\in [K]$. Finally, equality (c) holds because $\boldsymbol{\theta}$ is a distribution. This proves the proposition. 

\subsection{Proof of Proposition (c)}
Note that,
\begin{align*}
|	\bar{P}^{\mathrm{MF}}&(\boldsymbol{\bar{\mu}},\boldsymbol{\bar{\pi}})-	\bar{P}^{\mathrm{MF}}(\boldsymbol{\bar{\mu}}',\boldsymbol{\bar{\pi}})|_1=\sum_{k\in[K]}|	\bar{P}_k^{\mathrm{MF}}(\boldsymbol{\bar{\mu}},\boldsymbol{\bar{\pi}})-	\bar{P}_k^{\mathrm{MF}}(\boldsymbol{\bar{\mu}}',\boldsymbol{\bar{\pi}})|_1\\
&\leq  \sum_{k\in[K]}\sum_{x\in\mathcal{X}} \sum_{u\in\mathcal{U}} |\bar{P}_k(x,u,\boldsymbol{\bar{\mu}},\bar{\nu}^{\mathrm{MF}}(\boldsymbol{\bar{\mu}},\boldsymbol{\bar{\pi}}))-\bar{P}_k(x,u,\boldsymbol{\bar{\mu}}',\bar{\nu}^{\mathrm{MF}}(\boldsymbol{\bar{\mu}}',\boldsymbol{\bar{\pi}}))|_1\times\boldsymbol{\bar{\mu}}(x,k)\bar{\pi}_k(x,\boldsymbol{\bar{\mu}})(u)\\
&+\sum_{k\in[K]} \sum_{x\in\mathcal{X}} \sum_{u\in\mathcal{U}} |\bar{P}_k(x,u,\boldsymbol{\bar{\mu}}',\bar{\nu}^{\mathrm{MF}}(\boldsymbol{\bar{\mu}}',\boldsymbol{\bar{\pi}}))|_1\times|\boldsymbol{\bar{\mu}}(x,k)\bar{\pi}_k(x,\boldsymbol{\bar{\mu}})(u) - \boldsymbol{\bar{\mu}}'(x,k)\bar{\pi}_k(x,\boldsymbol{\bar{\mu}}')(u)|
\end{align*}

Using Assumption \ref{ass_2}(d) and the facts that $\boldsymbol{\bar{\mu}}(\cdot,k)$, $\bar{\pi}_k(x,\boldsymbol{\bar{\mu}})$ are probability distributions $\forall x\in \mathcal{X}$, $\forall k\in [K]$, the first term can be upper bounded by the following expression,
\begin{align*} &K\bar{L}_P\left(|\boldsymbol{\bar{\mu}}-\boldsymbol{\bar{\mu}}'|_{1} + |\bar{\nu}^{\mathrm{MF}}(\boldsymbol{\bar{\mu}},\boldsymbol{\bar{\pi}})-\bar{\nu}^{\mathrm{MF}}(\boldsymbol{\bar{\mu}}',\boldsymbol{\bar{\pi}})|_{1}\right)
\leq K\bar{L}_P \left[1+(1+K\bar{L}_Q)\right]|\boldsymbol{\bar{\mu}}-\boldsymbol{\bar{\mu}}'|_{1}
\end{align*}

Proposition (a) is applied to derive the above inequality. Note that, $|\bar{P}_k(x,u,\boldsymbol{\bar{\mu}}',\bar{\nu}(\boldsymbol{\bar{\mu}}',\boldsymbol{\bar{\pi}}))|_1=1$. Therefore, the second term can be upper bounded by the following quantity.
\begin{align*}
\sum_{k\in[K]} \sum_{x\in\mathcal{X}} \sum_{u\in\mathcal{U}} |\boldsymbol{\bar{\mu}}(x,k)\bar{\pi}_k(x,\boldsymbol{\bar{\mu}})(u) - \boldsymbol{\bar{\mu}}'(x,k)\bar{\pi}_k(x,\boldsymbol{\bar{\mu}}')(u)|\overset{(a)}{\leq}(1+K\bar{L}_Q)|\boldsymbol{\bar{\mu}}-\boldsymbol{\bar{\mu}}'|_{1}
\end{align*}

Inequality (a) can be established by following identical arguments as used in Proposition (a). This concludes the result.

\section{Proof of Lemma \ref{lemma_approx_2}}
\label{app_lemma_10}

\subsection{Proof of Proposition (a)}

Using the definition of $L_{1}$-norm, we get:
\begin{align*}
\mathbb{E}\left|\boldsymbol{\bar{\nu}}_t^{\mathbf{N}}-\bar{\nu}^{\mathrm{MF}}(\boldsymbol{\bar{\mu}}_t^{\mathbf{N}},\boldsymbol{\bar{\pi}}_t)\right|_1 &=  \sum_{k\in[K]}\sum_{u\in\mathcal{U}}\mathbb{E}\left|\boldsymbol{\bar{\nu}}_t^{\mathbf{N}}(u,k)-\bar{\nu}^{\mathrm{MF}}(\boldsymbol{\bar{\mu}}_t^{\mathbf{N}},\boldsymbol{\bar{\pi}}_t)(u,k)\right|\\
&=\sum_{k\in[K]} \sum_{u\in\mathcal{U}}\dfrac{1}{N_k}\mathbb{E}\left|\sum_{j=1}^{N_k}\delta(u_{j,k}^{t,\mathbf{N}}=u)-\sum_{j=1}^{N_k}\pi_k^t(x_{j,k}^{t,\mathbf{N}},\boldsymbol{\bar{\mu}}_t^{\mathbf{N}})(u)\right|\\
&\overset{(a)}{\leq}\left(\sum_{k\in[K]}\dfrac{1}{\sqrt{N_k}}\right)\sqrt{|\mathcal{U}|}
\end{align*}

Inequality (a) follows from Lemma \ref{simple_lemma}. This concludes the proposition.

\subsection{Proof of Proposition (b)}

Note that,
\begin{align*}
&\theta_k\bar{r}_k^{\mathrm{MF}}(\boldsymbol{\bar{\mu}}_t^{\mathbf{N}},\boldsymbol{\bar{\pi}}_t)\\
&=\left(\dfrac{N_k}{N_{\mathrm{pop}}}\right)\dfrac{1}{N_k}\sum_{j=1}^{N_k}\sum_{x\in\mathcal{X}}\sum_{u\in\mathcal{U}}\bar{r}_k(x,u,\boldsymbol{\bar{\mu}}_t^{\mathbf{N}},\bar{\nu}^{\mathrm{MF}}(\boldsymbol{\bar{\mu}}_t^{\mathbf{N}},\boldsymbol{\bar{\pi}}_{t}))\times\bar{\pi}_k^t(x,\boldsymbol{\bar{\mu}}_t^{\mathbf{N}})(u)\delta(x_{j,k}^{t,\mathbf{N}}=x)\\
&=\dfrac{1}{N_{\mathrm{pop}}}\sum_{j=1}^{N_k}\sum_{u\in\mathcal{U}}\bar{r}_k(x_{j,k}^{t,\mathbf{N}},u,\boldsymbol{\bar{\mu}}_t^{\mathbf{N}},\bar{\nu}^{\mathrm{MF}}(\boldsymbol{\bar{\mu}}_t^{\mathbf{N}},\boldsymbol{\bar{\pi}}_{t}))\times\bar{\pi}_k^t(x_{j,k}^{t,\mathbf{N}},\boldsymbol{\bar{\mu}}_t^{\mathbf{N}})(u)
\end{align*}

We can upper bound the $\mathrm{LHS}$ of $(\ref{eq_34_2})$ by $ J_1+J_2$ where $J_1$ is defined as follows.
\begin{align*}
J_1&\triangleq\dfrac{1}{N_{\mathrm{pop}}}\mathbb{E}\left|\sum_{k\in[K]}\sum_{j=1}^{N_k}
\bar{r}_k(x_{j,k}^{t,\mathbf{N}},u_{j,k}^{t,\mathbf{N}},\boldsymbol{\bar{\mu}}_t^{\mathbf{N}},\boldsymbol{\bar{\nu}}_t^{\mathbf{N}})-\bar{r}_k(x_{j,k}^{t,\mathbf{N}},u_{j,k}^{t,\mathbf{N}},\boldsymbol{\bar{\mu}}_t^{\mathbf{N}},\bar{\nu}^{\mathrm{MF}}(\boldsymbol{\bar{\mu}}_t^{\mathbf{N}},\boldsymbol{\bar{\pi}}_{t}))
\right|\\
&\leq \dfrac{1}{N_{\mathrm{pop}}}\sum_{k\in[K]}\sum_{j=1}^{N_k}\mathbb{E}\left|
\bar{r}_k(x_{j,k}^{t,\mathbf{N}},u_{j,k}^{t,\mathbf{N}},\boldsymbol{\bar{\mu}}_t^{\mathbf{N}},\boldsymbol{\bar{\nu}}_t^{\mathbf{N}})-\bar{r}_k(x_{j,k}^{t,\mathbf{N}},u_{j,k}^{t,\mathbf{N}},\boldsymbol{\bar{\mu}}_t^{\mathbf{N}},\bar{\nu}^{\mathrm{MF}}(\boldsymbol{\bar{\mu}}_t^{\mathbf{N}},\boldsymbol{\bar{\pi}}_{t}))
\right|\\
&\overset{(a)}{\leq}\dfrac{1}{N_{\mathrm{pop}}}\sum_{k\in[K]}\sum_{j=1}^{N_k}\bar{L}_R \mathbb{E}\left|\boldsymbol{\bar{\nu}}_t^{\mathbf{N}}-\bar{\nu}^{\mathrm{MF}}(\boldsymbol{\bar{\mu}}_t^{\mathbf{N}},\boldsymbol{\bar{\pi}}_t)\right|_{1}\\
&\overset{(b)}{\leq} \bar{L}_R\Big(\sum_{k\in[K]}\dfrac{1}{\sqrt{N_k}}\Big)\sqrt{|\mathcal{U}|}
\end{align*}

Inequality (a) follows from Assumption \ref{ass_2}(c) while inequality (b) follows from Proposition (a). The term $J_2$ is defined as
\begin{align*}
J_2&\triangleq\mathbb{E}\left|\dfrac{1}{N_{\mathrm{pop}}}\sum_{k\in[K]} \sum_{j=1}^{N_k}\sum_{u\in\mathcal{U}}\bar{r}_k(x_{j,k}^{t,\mathbf{N}},u,\boldsymbol{\bar{\mu}}_t^{\mathbf{N}},\bar{\nu}^{\mathrm{MF}}(\boldsymbol{\bar{\mu}}_t^{\mathbf{N}},\boldsymbol{\bar{\pi}}_{t}))\times\left[\delta(u_{j,k}^{t,\mathbf{N}}=u)-\bar{\pi}_k^t(x_{j,k}^{t,\mathbf{N}},\boldsymbol{\bar{\mu}}_t^{\mathbf{N}})(u)\right]\right|\\
&\leq\dfrac{1}{N_{\mathrm{pop}}}\sum_{u\in\mathcal{U}}\mathbb{E}\left|\sum_{k\in[K]} \sum_{j=1}^{N_k}\bar{r}_k(x_{j,k}^{t,\mathbf{N}},u,\boldsymbol{\bar{\mu}}_t^{\mathbf{N}},\bar{\nu}^{\mathrm{MF}}(\boldsymbol{\bar{\mu}}_t^{\mathbf{N}},\boldsymbol{\bar{\pi}}_{t}))\times\left[\delta(u_{j,k}^{t,\mathbf{N}}=u)-\bar{\pi}_k^t(x_{j,k}^{t,\mathbf{N}},\boldsymbol{\bar{\mu}}_t^{\mathbf{N}})(u)\right]\right| 
\end{align*}

Using similar argument as used in Lemma \ref{lemma_r}, we therefore get,
\begin{align*}
J_2&\leq \dfrac{\bar{M}_R}{\sqrt{N}}\sqrt{|\mathcal{U}|}\leq\bar{M}_R\left(\sum_{k\in[K]}\dfrac{1}{\sqrt{N_k}}\right)\sqrt{|\mathcal{U}|}
\end{align*}

This concludes the result.

\subsection{Proof of Proposition (c)}

Note that the $\mathrm{LHS}$ of $(\ref{eq35_2})$ can be upper bounded by the following quantity..
\begin{align*}
&\sum_{k\in[K]}\sum_{x\in\mathcal{X}}\mathbb{E}\left|\boldsymbol{\bar{\mu}}_{t+1}^{\mathbf{N}}(x,k)-\bar{P}_k^{\mathrm{MF}}\left(\boldsymbol{\bar{\mu}}_t^{\mathbf{N}},\boldsymbol{\bar{\pi}}_t\right)(x)\right|\\
&= \sum_{k\in[K]}  \sum_{x\in\mathcal{X}}\dfrac{1}{N_k}\mathbb{E}\Bigg|\sum_{j=1}^{N_k}\delta(x_{j,k}^{t+1,\mathbf{N}}=x)\\
&\hspace{2cm}-\sum_{j=1}^{N_k}\sum_{u\in\mathcal{U}}\bar{\pi}_k^t(x_{j,k}^{t,\mathbf{N}},\boldsymbol{\bar{\mu}}_t^{\mathbf{N}})(u) \bar{P}_k(x_{j,k}^{t,\mathbf{N}},u,\boldsymbol{\bar{\mu}}_t^{\mathbf{N}},\bar{\nu}^{\mathrm{MF}}(\boldsymbol{\bar{\mu}}_t^{\mathbf{N}},\boldsymbol{\bar{\pi}}_t))(x)\Bigg|\leq  J_1 + J_2 + J_3
\end{align*}

The first term, $J_1$ is defined as follows:
\begin{align*}
J_1&\triangleq \sum_{k\in[K]} \sum_{x\in\mathcal{X}}\dfrac{1}{N_k} \mathbb{E}\left|\sum_{j=1}^{N_k}\delta(x_{j,k}^{t+1,\mathbf{N}}=x)-\sum_{j=1}^{N_k}\bar{P}_k(x_{j,k}^{t,\mathbf{N}},u_{j,k}^{t,\mathbf{N}},\boldsymbol{\bar{\mu}}_t^{\mathbf{N}},\boldsymbol{\bar{\nu}}_t^{\mathbf{N}})(x)\right|
\end{align*}

Using similar argument as used in Lemma \ref{lemma_mu} to bound $J_1$, we get,
\begin{align*}
J_1&\leq \left(\sum_{k\in[K]}\dfrac{1}{\sqrt{N_k}}\right)\sqrt{|\mathcal{X}|}\leq \left(\sum_{k\in[K]}\dfrac{1}{\sqrt{N_k}}\right)\left[\sqrt{|\mathcal{X}|}+\sqrt{|\mathcal{U}|}\right]
\end{align*}

The second term, $J_2$ is defined as follows,
\begin{align*}
J_2&\triangleq\sum_{k\in[K]}\sum_{x\in\mathcal{X}}\dfrac{1}{N_k}\mathbb{E}\left|\sum_{j=1}^{N_k}\bar{P}_k(x_{j,k}^{t,\mathbf{N}},u_{j,k}^{t,\mathbf{N}},\boldsymbol{\bar{\mu}}_t^{\mathbf{N}},\boldsymbol{\bar{\nu}}_t^{\mathbf{N}})(x)-\sum_{j=1}^{N_k} \bar{P}_k(x_{j,k}^{t,\mathbf{N}},u_{j,k}^{t,\mathbf{N}},\boldsymbol{\bar{\mu}}_t^{\mathbf{N}},\bar{\nu}^{\mathrm{MF}}(\boldsymbol{\bar{\mu}}_t^{\mathbf{N}},\boldsymbol{\bar{\pi}}_t))(x)\right|\\
&\overset{(a)}{\leq} K\bar{L}_P\left|\boldsymbol{\bar{\nu}}_t^{\mathbf{N}}-\bar{\nu}^{\mathrm{MF}}(\boldsymbol{\bar{\mu}}_t^{\mathbf{N}},\boldsymbol{\bar{\pi}}_t)\right|_{1}\\
&\overset{(b)}{\leq} K\bar{L}_P\left(\sum_{k\in[K]}\dfrac{1}{\sqrt{N_k}}\right)\sqrt{|\mathcal{U}|}\leq K\bar{L}_P\left(\sum_{k\in[K]}\dfrac{1}{\sqrt{N_k}}\right)\left[\sqrt{|\mathcal{X}|}+\sqrt{|\mathcal{U}|}\right]
\end{align*}

Relation (a) is a result of Assumption \ref{ass_2}(d) and the inequality (b) follows from Proposition (a). Finally, 
\begin{align*}
J_3
&=\sum_{k\in[K]}\sum_{x\in\mathcal{X}}\mathbb{E}\Bigg|\dfrac{1}{N_k}\sum_{j=1}^{N_k}\Bigg[\bar{P}_k(x_{j,k}^{t,\mathbf{N}},u_{j,k}^{t,\mathbf{N}},\boldsymbol{\bar{\mu}}_t^{\mathbf{N}},\bar{\nu}^{\mathrm{MF}}(\boldsymbol{\bar{\mu}}_t^{\mathbf{N}},\boldsymbol{\bar{\pi}}_t))(x)\\
&\hspace{2cm}-\sum_{u\in\mathcal{U}}\pi_k^t(x_{j,k}^{t,\mathbf{N}},\boldsymbol{\bar{\mu}}_t^{\mathbf{N}})(u) \bar{P}_k(x_{j,k}^{t,\mathbf{N}},u,\boldsymbol{\bar{\mu}}_t^{\mathbf{N}},\bar{\nu}^{\mathrm{MF}}(\boldsymbol{\bar{\mu}}_t^{\mathbf{N}},\boldsymbol{\bar{\pi}}_t))(x)\Bigg]\Bigg|\\
&\overset{(a)}{\leq}\left(\sum_{k\in[K]}\dfrac{1}{\sqrt{N_k}}\right)\sqrt{|\mathcal{X}|}\leq \left(\sum_{k\in[K]}\dfrac{1}{\sqrt{N_k}}\right)\left[\sqrt{|\mathcal{X}|}+\sqrt{|\mathcal{U}|}\right]
\end{align*}

Inequality (a) is a result of Lemma \ref{simple_lemma}. This concludes the result.

\section{Proof of Lemma \ref{lemma_continuity}}
\label{app_lemma_11}
\subsection{Proof of Proposition (a)}

The following chain of inequalities hold true.
\begin{align*}
\begin{split}
|\nu^{\mathrm{MF}}(\boldsymbol{\mu}&,\boldsymbol{\pi})[\mathcal{U}]-\nu^{\mathrm{MF}}(\boldsymbol{\mu}',\boldsymbol{\pi})[\mathcal{U}]|_{1}\\&= \left|\sum_{k\in[K]}\sum_{x\in\mathcal{X}}\boldsymbol{\mu}(x,k)\pi_k(x,\boldsymbol{\mu}[\mathcal{X}])-\sum_{k\in[K]}\sum_{x\in\mathcal{X}}\boldsymbol{\mu}'(x,k)\pi_k(x,\boldsymbol{\mu}'[\mathcal{X}])\right|_{1}\\
&\leq \sum_{k\in[K]}\left|\sum_{x\in\mathcal{X}}\boldsymbol{\mu}(x,k)\pi_k(x,\boldsymbol{\mu}[\mathcal{X}])-\sum_{x\in\mathcal{X}}\boldsymbol{\mu}'(x,k)\pi_k(x,\boldsymbol{\mu}'[\mathcal{X}])\right|_{1}=|\nu(\boldsymbol{\mu},\boldsymbol{\pi})-\nu(\boldsymbol{\mu}',\boldsymbol{\pi})|_{1}\\
&= \sum_{u\in\mathcal{U}}\sum_{k\in[K]}\left|\sum_{x\in\mathcal{X}}\boldsymbol{\mu}(x,k)\pi_k(x,\boldsymbol{\mu}[\mathcal{X}])(u)-\sum_{x\in\mathcal{X}}\boldsymbol{\mu}'(x,k)\pi_k(x,\boldsymbol{\mu}'[\mathcal{X}])(u)\right|\\
&\overset{}{\leq} \sum_{k\in[K]}\sum_{x\in\mathcal{X}}|\boldsymbol{\mu}(x,k)-\boldsymbol{\mu}'(x,k)|\sum_{u\in\mathcal{U}}\pi_k(x,\boldsymbol{\mu}[\mathcal{X}])(u)\\
&\hspace{2cm}+\sum_{k\in[K]}\sum_{x\in\mathcal{X}}\boldsymbol{\mu}'(x,k)\sum_{u\in\mathcal{U}}|\pi_k(x,\boldsymbol{\mu}[\mathcal{X}])(u)-\pi_k(x,\boldsymbol{\mu}'[\mathcal{X}])(u)|\\
&\overset{(a)}{\leq} \sum_{k\in[K]}\sum_{x\in\mathcal{X}}|\boldsymbol{\mu}(x,k)-\boldsymbol{\mu}'(x,k)|+L_Q|\boldsymbol{\mu}[\mathcal{X}]-\boldsymbol{\mu}'[\mathcal{X}]|_{1}\sum_{k\in[K]}\sum_{x\in\mathcal{X}}\boldsymbol{\mu}'(x,k)\\
&\overset{(b)}{=} |\boldsymbol{\mu}-\boldsymbol{\mu}'|_1 + L_Q|\boldsymbol{\mu}[\mathcal{X}]-\boldsymbol{\mu}'[\mathcal{X}]|_1
\end{split}
\end{align*}

Result (a) follows from Lipschitz continuity of $\pi_k^t$ and the fact that $\pi_k(x,\boldsymbol{\mu})$ is a probability distribution. Finally, inequality (b) uses  the fact that $\boldsymbol{\mu}'$ is a distribution. This concludes the result.

\subsection{Proof of Proposition (b)}

Note that,
\begin{align*}
&\sum_{k\in[K]}|	r_k^{\mathrm{MF}}(\boldsymbol{\mu},\boldsymbol{\pi})-	r_k^{\mathrm{MF}}(\boldsymbol{\mu}',\boldsymbol{\pi})|\\
&\leq  \sum_{k\in[K]}\sum_{x\in\mathcal{X}} \sum_{u\in\mathcal{U}} |{r}_k(x,u,\boldsymbol{\mu}[\mathcal{X}],\nu(\boldsymbol{\mu},\boldsymbol{\pi})[\mathcal{U}])-{r}_k(x,u,\boldsymbol{\mu}'[\mathcal{X}],\nu(\boldsymbol{\mu}',\boldsymbol{\pi})[\mathcal{U}])|\times\boldsymbol{\mu}(x,k)\pi_k(x,\boldsymbol{\mu}[\mathcal{X}])(u)\\
&+\sum_{k\in[K]} \sum_{x\in\mathcal{X}} \sum_{u\in\mathcal{U}} |{r}_k(x,u,\boldsymbol{\mu}'[\mathcal{X}],\nu(\boldsymbol{\mu}',\boldsymbol{\pi})[\mathcal{U}])|\times|\boldsymbol{\mu}(x,k)\pi_k(x,\boldsymbol{\mu}[\mathcal{X}])(u) - \boldsymbol{\mu}'(x,k)\pi_k(x,\boldsymbol{\mu}'[\mathcal{X}])(u)|
\end{align*}

Using the Lipschitz continuity of $r_k$, and the facts that $\boldsymbol{\mu}$, $\pi_k(x,\boldsymbol{\mu})$ are distributions, the first term can be upper bounded by the following expression,
\begin{align*} &L_R\left(|\boldsymbol{\mu}[\mathcal{X}]-\boldsymbol{\mu}'[\mathcal{X}]|_{1} + |\nu^{\mathrm{MF}}(\boldsymbol{\mu},\boldsymbol{\pi})[\mathcal{U}]-\nu^{\mathrm{MF}}(\boldsymbol{\mu}',\boldsymbol{\pi})[\mathcal{U}]|_{1}\right)
\\
&\leq L_R|\boldsymbol{\mu}-\boldsymbol{\mu}'|_{1}+  L_R(1+L_Q)|\boldsymbol{\mu}[\mathcal{X}]-\boldsymbol{\mu}'[\mathcal{X}]|_{1}
\end{align*}

Proposition (a) is used to derive the above inequality. Utilising similar logic as used in Proposition (a), we can upper bound the second term by the following quantity:
\begin{align*}
M_R|\boldsymbol{\mu}-\boldsymbol{\mu}'|_{1}+M_RL_Q|\boldsymbol{\mu}[\mathcal{X}]-\boldsymbol{\mu}'[\mathcal{X}]|_{1}
\end{align*}

\subsection{Proof of Proposition (c)}

The proof is similar to that of Proposition (b). Note that,
\begin{align*}
&|	P^{\mathrm{MF}}(\boldsymbol{\mu},\boldsymbol{\pi})-	P^{\mathrm{MF}}(\boldsymbol{\mu}',\boldsymbol{\pi})|_1=\sum_{k\in[K]}|	P_k^{\mathrm{MF}}(\boldsymbol{\mu},\boldsymbol{\pi})-	P_k^{\mathrm{MF}}(\boldsymbol{\mu}',\boldsymbol{\pi})|_1\\
&\leq  \sum_{k\in[K]}\sum_{x\in\mathcal{X}} \sum_{u\in\mathcal{U}} |{P}_k(x,u,\boldsymbol{\mu}[\mathcal{X}],\nu^{\mathrm{MF}}(\boldsymbol{\mu},\boldsymbol{\pi})[\mathcal{U}])-{P}_k(x,u,\boldsymbol{\mu}'[\mathcal{X}],\nu^{\mathrm{MF}}(\boldsymbol{\mu}',\boldsymbol{\pi})[\mathcal{U}])|_1\boldsymbol{\mu}(x,k)\pi_k(x,\boldsymbol{\mu}[\mathcal{X}])(u)\\
&+\sum_{k\in[K]} \sum_{x\in\mathcal{X}} \sum_{u\in\mathcal{U}} |{P}_k(x,u,\boldsymbol{\mu}'[\mathcal{X}],\nu^{\mathrm{MF}}(\boldsymbol{\mu}',\boldsymbol{\pi})[\mathcal{U}])|_1\times|\boldsymbol{\mu}(x,k)\pi_k(x,\boldsymbol{\mu}[\mathcal{X}])(u) - \boldsymbol{\mu}'(x,k)\pi_k(x,\boldsymbol{\mu}'[\mathcal{X}])(u)|
\end{align*}

Using the Lipschitz continuity of ${P}_k$ and the facts that $\boldsymbol{\mu}$, $\pi_k(x,\boldsymbol{\mu})$ are distributions, the first term can be upper bounded by the following expression,
\begin{align*} L_P\Big(|\boldsymbol{\mu}[\mathcal{X}]-\boldsymbol{\mu}'[\mathcal{X}]|_{1} &+ |\nu^{\mathrm{MF}}(\boldsymbol{\mu},\boldsymbol{\pi})[\mathcal{U}]-\nu^{\mathrm{MF}}(\boldsymbol{\mu}',\boldsymbol{\pi})[\mathcal{U}]|_{1}\Big)
\\
&\leq L_P|\boldsymbol{\mu}-\boldsymbol{\mu}'|_{1}+  L_P(1+L_Q)|\boldsymbol{\mu}[\mathcal{X}]-\boldsymbol{\mu}'[\mathcal{X}]|_{1}
\end{align*}

Proposition (a) is used to derive the above inequality. Utilising similar logic as used in Proposition (a), and the fact that $|{P}_k(x,u,\boldsymbol{\mu}'[\mathcal{X}],\nu^{\mathrm{MF}}(\boldsymbol{\mu}',\boldsymbol{\pi})[\mathcal{U}])|_1=1$, $\forall x\in\mathcal{X}, \forall u\in\mathcal{U}$, we can bound the second term by the following quantity:
\begin{align*}
|\boldsymbol{\mu}-\boldsymbol{\mu}'|_{1}+L_Q|\boldsymbol{\mu}[\mathcal{X}]-\boldsymbol{\mu}'[\mathcal{X}]|_{1}
\end{align*}
This concludes the result.

\section{Proof of Lemma \ref{lemma_approx}}
\label{app_lemma_12}

\subsection{Proof of Proposition (a)}

Using the definition of $L_{1}$-norm, we get:
\begin{align*}\mathbb{E}&\left|\boldsymbol{\nu}_t^{\mathbf{N}}[\mathcal{U}]-\nu^{\mathrm{MF}}(\boldsymbol{\mu}_t^{\mathbf{N}},\boldsymbol{\pi}_t)[\mathcal{U}]\right|_1 \\
&=  \sum_{u\in\mathcal{U}}\mathbb{E}\left|\sum_{k\in[K]}\boldsymbol{\nu}_t^{\mathbf{N}}(u,k)-\sum_{k\in[K]}\nu^{\mathrm{MF}}(\boldsymbol{\mu}_t^{\mathbf{N}},\boldsymbol{\pi}_t)(u,k)\right|\\
&=\dfrac{1}{N_{\mathrm{pop}}} \sum_{u\in\mathcal{U}}\mathbb{E}\left|\sum_{k\in[K]}\sum_{j=1}^{N_k}\delta(u_{j,k}^{t,\mathbf{N}}=u)-\sum_{k\in[K]}\sum_{j=1}^{N_k}\pi_k^t(x_{j,k}^{t,\mathbf{N}},\boldsymbol{\mu}_t^{\mathbf{N}})(u)\right|
\end{align*}

Using Lemma \ref{simple_lemma}, we conclude the proposition.

\subsection{Proof of Proposition (b)}

Using similar argument as used in Lemma \ref{lemma_r}, we can bound the $\mathrm{LHS}$ of $(\ref{eq_34})$  by $J_1+J_2$ where
\begin{align*}
J_1&\triangleq\dfrac{1}{N_{\mathrm{pop}}}\mathbb{E}\left|\sum_{k\in[K]}\sum_{j=1}^{N_k}
{r}_k(x_{j,k}^{t,\mathbf{N}},u_{j,k}^{t,\mathbf{N}},\boldsymbol{\mu}_t^{\mathbf{N}}[\mathcal{X}],\boldsymbol{\nu}_t^{\mathbf{N}}[\mathcal{U}])-{r}_k(x_{j,k}^{t,\mathbf{N}},u_{j,k}^{t,\mathbf{N}},\boldsymbol{\mu}_t^{\mathbf{N}}[\mathcal{X}],\nu^{\mathrm{MF}}(\boldsymbol{\mu}_t^{\mathbf{N}},\boldsymbol{\pi}_{t})[\mathcal{U}])
\right|\\
&\leq\dfrac{1}{N_{\mathrm{pop}}}\sum_{k\in[K]}\sum_{j=1}^{N_k}\mathbb{E}\left|
{r}_k(x_{j,k}^{t,\mathbf{N}},u_{j,k}^{t,\mathbf{N}},\boldsymbol{\mu}_t^{\mathbf{N}}[\mathcal{X}],\boldsymbol{\nu}_t^{\mathbf{N}}[\mathcal{U}])-{r}_k(x_{j,k}^{t,\mathbf{N}},u_{j,k}^{t,\mathbf{N}},\boldsymbol{\mu}_t^{\mathbf{N}}[\mathcal{X}],\nu^{\mathrm{MF}}(\boldsymbol{\mu}_t^{\mathbf{N}},\boldsymbol{\pi}_{t})[\mathcal{U}])
\right|\\
&\overset{(a)}{\leq}L_R \mathbb{E}\left|\boldsymbol{\nu}_t^{\mathbf{N}}[\mathcal{U}]-\nu^{\mathrm{MF}}(\boldsymbol{\mu}_t^{\mathbf{N}},\boldsymbol{\pi}_t)[\mathcal{U}]\right|_{1}\\
&\overset{(b)}{\leq} \dfrac{L_R}{\sqrt{N_{\mathrm{pop}}}}\sqrt{|\mathcal{U}|} 
\end{align*}

Inequality (a) follows from the Lipschitz continuity of ${r}_k$ while (b) is a consequence of proposition (a). The second term, $J_2$ is as follows,
\begin{align*}
J_2&\triangleq\dfrac{1}{N_{\mathrm{pop}}}\mathbb{E}\left|\sum_{k\in[K]} \sum_{j=1}^{N_k}\sum_{u\in\mathcal{U}}{r}_k(x_{j,k}^{t,\mathbf{N}},u,\boldsymbol{\mu}_t^{\mathbf{N}}[\mathcal{X}],\nu^{\mathrm{MF}}(\boldsymbol{\mu}_t^{\mathbf{N}},\boldsymbol{\pi}_{t})[\mathcal{U}])\right.\\
&\hspace{4cm}\times\left[\delta(u_{j,k}^{t,\mathbf{N}}=u)-\pi_k^t(x_{j,k}^{t,\mathbf{N}},\boldsymbol{\mu}_t^{\mathbf{N}}[\mathcal{X}])(u)\right]\Bigg|\\
&\overset{(a)}{\leq}\dfrac{M_R}{\sqrt{N_{\mathrm{pop}}}}\sqrt{|\mathcal{U}|}
\end{align*}

Inequality (a) can be proved using Lemma \ref{simple_lemma}. 

\subsection{Proof of Proposition (c)}

Note that the $\mathrm{LHS}$ of $(\ref{eq35})$ can be upper bounded as follows, 
\begin{align*}
\mathbb{E}&\left|\boldsymbol{\mu}_{t+1}^{\mathbf{N}}[\mathcal{X}]-P^{\mathrm{MF}}(\boldsymbol{\mu}_t^{\mathbf{N}},\boldsymbol{\pi}_t)[\mathcal{X}]\right|_{1}\\
&=\sum_{x\in\mathcal{X}}\mathbb{E}\left|\sum_{k\in[K]}\boldsymbol{\mu}_{t+1}^{\mathbf{N}}(x,k)-\sum_{k\in[K]}P_k^{\mathrm{MF}}\left(\boldsymbol{\mu}_t^{\mathbf{N}},\boldsymbol{\pi}_t\right)(x)\right|\\
&=  \dfrac{1}{N_{\mathrm{pop}}}\sum_{x\in\mathcal{X}}\mathbb{E}\Bigg|\sum_{k\in[K]} \sum_{j=1}^{N_k}\Bigg\lbrace\delta(x_{j,k}^{t+1,\mathbf{N}}=x)\\
&\hspace{3.5cm}- \sum_{u\in\mathcal{U}}\pi_k^t(x_{j,k}^{t,\mathbf{N}},\boldsymbol{\mu}_t^{\mathbf{N}}[\mathcal{X}])(u) {P}_k(x_{j,k}^{t,\mathbf{N}},u,\boldsymbol{\mu}_t^{\mathbf{N}}[\mathcal{X}],\nu^{\mathrm{MF}}(\boldsymbol{\mu}_t^{\mathbf{N}},\boldsymbol{\pi}_t)[\mathcal{U}])(x)\Bigg\rbrace\Bigg|\\
&\leq  J_1 + J_2 + J_3
\end{align*}

The first term is defined as:
\begin{align*}
J_1&\triangleq  \dfrac{1}{N_{\mathrm{pop}}}\sum_{x\in\mathcal{X}} \mathbb{E}\left|\sum_{k\in[K]}\sum_{j=1}^{N_k}\delta(x_{j,k}^{t+1,\mathbf{N}}=x)-\sum_{k\in[K]}\sum_{j=1}^{N_k}{P}_k(x_{j,k}^{t,\mathbf{N}},u_{j,k}^{t,\mathbf{N}},\boldsymbol{\mu}^{t,\mathbf{N}}[\mathcal{X}],\boldsymbol{\nu}^{t,\mathbf{N}}[\mathcal{U}])(x)\right|
\end{align*}

Applying  Lemma \ref{simple_lemma}, we can conclude that,
\begin{align*}
J_1&\leq \dfrac{1}{\sqrt{N_{\mathrm{pop}}}}\sqrt{|\mathcal{X}|}\leq \dfrac{1}{\sqrt{N_{\mathrm{pop}}}}\left[\sqrt{|\mathcal{X}|}+\sqrt{|\mathcal{U}|}\right]
\end{align*}

The second term, $J_2$ is as follows,
\begin{align*}
&J_2\triangleq\dfrac{1}{N_{\mathrm{pop}}}\sum_{x\in\mathcal{X}}\mathbb{E}\Bigg|\sum_{k\in[K]}\sum_{j=1}^{N_k}\Big\lbrace\tilde{P}_k(x_{j,k}^{t,\mathbf{N}},u_{j,k}^{t,\mathbf{N}},\boldsymbol{\mu}_t^{\mathbf{N}}[\mathcal{X}],\boldsymbol{\nu}_t^{\mathbf{N}}[\mathcal{U}])(x)\\
&\hspace{4.5cm}- \tilde{P}_k(x_{j,k}^{t,\mathbf{N}},u_{j,k}^{t,\mathbf{N}},\boldsymbol{\mu}_t^{\mathbf{N}}[\mathcal{X}],\nu^{\mathrm{MF}}(\boldsymbol{\mu}_t^{\mathbf{N}},\boldsymbol{\pi}_t)[\mathcal{U}])(x)\Big\rbrace\Bigg|\\
&\overset{}{\leq} \dfrac{1}{N_{\mathrm{pop}}}\sum_{k\in[K]}\sum_{j=1}^{N_k}\sum_{x\in\mathcal{X}}\mathbb{E}\Big|\tilde{P}_k(x_{j,k}^{t,\mathbf{N}},u_{j,k}^{t,\mathbf{N}},\boldsymbol{\mu}_t^{\mathbf{N}}[\mathcal{X}],\boldsymbol{\nu}_t^{\mathbf{N}}[\mathcal{U}])(x)\\
&\hspace{4cm}- \tilde{P}_k(x_{j,k}^{t,\mathbf{N}},u_{j,k}^{t,\mathbf{N}},\boldsymbol{\mu}_t^{\mathbf{N}}[\mathcal{X}],\nu^{\mathrm{MF}}(\boldsymbol{\mu}_t^{\mathbf{N}},\boldsymbol{\pi}_t)[\mathcal{U}])(x)\Big|\\
&\overset{(a)}{\leq} L_P\left|\boldsymbol{\nu}_t^{\mathbf{N}}[\mathcal{U}]-\nu^{\mathrm{MF}}(\boldsymbol{\mu}_t^{\mathbf{N}},\boldsymbol{\pi}_t)[\mathcal{U}]\right|_{1}\\
&\overset{(b)}{\leq} \dfrac{L_P}{\sqrt{N_{\mathrm{pop}}}}\sqrt{|\mathcal{U}|}\\
&\leq \dfrac{L_P}{\sqrt{N_{\mathrm{pop}}}}\left[\sqrt{|\mathcal{X}|}+\sqrt{|\mathcal{U}|}\right]
\end{align*}

Inequality (a) is due to Lipschitz continuity of $P_k$ and (b) follows from Proposition (a). Finally, 
\begin{align*}
&J_3=\sum_{x\in\mathcal{X}}\mathbb{E}\Bigg|\dfrac{1}{N_{\mathrm{pop}}}\sum_{k\in[K]}\sum_{j=1}^{N_k}\Bigg[P_k(x_{j,k}^{t,\mathbf{N}},u_{j,k}^{t,\mathbf{N}},\boldsymbol{\mu}_t^{\mathbf{N}}[\mathcal{X}],\nu(\boldsymbol{\mu}_t^{\mathbf{N}},\boldsymbol{\pi}_t)[\mathcal{U}])(x)-\\
&\hspace{4cm}-\sum_{u\in\mathcal{U}} \pi_k^t(x_{j,k}^{t,\mathbf{N}},\boldsymbol{\mu}_t^{\mathbf{N}}[\mathcal{X}])(u) P_k(x_{j,k}^{t,\mathbf{N}},u,\boldsymbol{\mu}_t^{\mathbf{N}}[\mathcal{X}],\nu(\boldsymbol{\mu}_t^{\mathbf{N}},\boldsymbol{\pi}_t)[\mathcal{U}])(x)\Bigg]\Bigg|
\end{align*}

Applying Lemma \ref{simple_lemma}, we finally obtain,
$J_3\leq \frac{1}{\sqrt{N_{\mathrm{pop}}}}\sqrt{|\mathcal{X}|}\leq \frac{1}{\sqrt{N_{\mathrm{pop}}}}\left[\sqrt{|\mathcal{X}|}+\sqrt{|\mathcal{U}|}\right]$.

\section{Proof of Theorem \ref{corr_1}}

Note that the $\mathrm{LHS}$ of $(\ref{eq_thm4})$ can be upper bounded as,
\begin{align*}
\mathrm{LHS}\leq\left|\sup_{\Phi\in\mathbb{R}^{\mathrm{d}}}v^{\mathbf{N}}(\boldsymbol{\mu}_0,\pi_{\Phi})-v_{\mathrm{MF}}^*(\boldsymbol{\mu}_0)\right|+\left|v_{\mathrm{MF}}^*(\boldsymbol{\mu}_0)-\dfrac{1}{T}\sum_{j=1}^J v^{\mathrm{MF}}({\boldsymbol{\mu}_0},\boldsymbol{\pi}_{\Phi_j})\right| 
\end{align*}

Using Theorem $\ref{thm_1}$, the first term can be bounded by $C'\left[\sqrt{|\mathcal{X}|}+\sqrt{|\mathcal{U}|}\right]\frac{1}{N_{\mathrm{pop}}}\sum_{k\in[K]}\sqrt{N_k}$ for some constant $C'$. Using Lemma \ref{lemma_10}, the second term be bounded by $\sqrt{\epsilon_{\mathrm{bias}}}/(1-\gamma)+\epsilon$ with a sample complexity  $\mathcal{O}(\epsilon^{-3})$. Choosing $\epsilon=C'\left[\sqrt{|\mathcal{X}|}+\sqrt{|\mathcal{U}|}\right]\frac{1}{N_{\mathrm{pop}}}\sum_{k\in[K]}\sqrt{N_k}$, we obtain the result as in the statement of the theorem.

\section{Loose Bounds}
\label{app_loose_bounds}

In this section, we shall demonstrate that one can derive loose bounds for multi-agent systems satisfying Assumption \ref{ass_1}, \ref{assumptions} using Theorem \ref{thm_2}. Similarly, loose bounds for systems satisfying Assumption \ref{ass_2} and \ref{ass_4} can be derived using Theorem \ref{thm_1}.

\subsection{Loose Bound Using Theorem \ref{thm_1}}

Consider a multi-agent system satisfying Assumptions \ref{ass_2} and \ref{ass_4}. We shall use the notations of Theorem \ref{thm_2}. Let, $\boldsymbol{\theta}\triangleq\{\theta_k\}_{k\in[K]}$ be prior probabilities of different classes. If $\bar{r}_k$'s and $\bar{P}_k$'s are given reward and transition functions of the system, then one can define $r_k$'s and $P_k$'s such that, $\forall x\in\mathcal{X}$, $\forall u\in \mathcal{U}$, $\forall \bar{\boldsymbol{\mu}}\in \mathcal{P}^K(\mathcal{X})$, $\forall \bar{\boldsymbol{\nu}}\in \mathcal{P}^K(\mathcal{U})$ and $\forall k\in[K]$,
\begin{align*}
	&\bar{r}_k(x,u,\bar{\boldsymbol{\mu}},\bar{\boldsymbol{\nu}})=r_k(x,u,\boldsymbol{\mu},\boldsymbol{\nu}), \\
	&\bar{P}_k(x,u,\bar{\boldsymbol{\mu}},\bar{\boldsymbol{\nu}})=P_k(x,u,\boldsymbol{\mu},\boldsymbol{\nu})
\end{align*}
where $\boldsymbol{\mu}$, $\boldsymbol{\nu}$ are uniquely defined as, $\boldsymbol{\mu}\triangleq\{\theta_k\bar{\boldsymbol{\mu}}(.,k)\}_{k\in[K]}$ and $\boldsymbol{\nu}\triangleq\{\theta_k\bar{\boldsymbol{\nu}}(.,k)\}_{k\in[K]}$. Clearly, $\boldsymbol{\mu}\in \mathcal{P}_{\boldsymbol{\theta}}(\mathcal{X}\times [K])$ where $\mathcal{P}_{\boldsymbol{\theta}}(\mathcal{X}\times [K])$ is the collection of distributions over $\mathcal{X}\times [K]$ such that the marginal distribution over $[K]$ derived from each of its elements is $\boldsymbol{\theta}$. Similarly, $\boldsymbol{\nu}\in \mathcal{P}_{\boldsymbol{\theta}}(\mathcal{U}\times[K])$. Also, for every policy $\boldsymbol{\bar{\pi}}\triangleq\{(\bar{\pi}_k^t)_{k\in[K]}\}_{t\in\{0,1,\cdots\}}$, one can define $\boldsymbol{\pi}\triangleq\{(\pi_k^t)_{k\in [K]}\}_{t\in\{0,1,\cdots\}}$ such that, $\forall x\in \mathcal{X}$, $\forall \boldsymbol{\bar{\mu}}\in\mathcal{P}^K(\mathcal{X})$ and $\forall k\in [K]$, 
\begin{align*}
	\bar{\pi}_k^t(x,\bar{\boldsymbol{\mu}})=\pi_k^t(x,\boldsymbol{\mu})
\end{align*}

Note that, the following inequality holds $\forall \boldsymbol{\mu},\boldsymbol{\mu}'\in\mathcal{P}_{\boldsymbol{\theta}}(\mathcal{X}\times[K])$, $\forall \boldsymbol{\nu},\boldsymbol{\nu}'\in \mathcal{P}_{\boldsymbol{\theta}}(\mathcal{U}\times [K])$, $\forall x\in \mathcal{X}$,  $\forall u\in \mathcal{U}$, and $\forall k\in [K]$
\begin{align}
	\begin{split}
		|r_k(x,u,\boldsymbol{\mu},\boldsymbol{\nu})-r_k(x,u,\boldsymbol{\mu}',&\boldsymbol{\nu}')|=|\bar{r}_k(x,u,\bar{\boldsymbol{\mu}},\bar{\boldsymbol{\nu}})-\bar{r}_k(x,u,\bar{\boldsymbol{\mu}}',\bar{\boldsymbol{\nu}}')|\\
		&\leq \bar{L}_R\left[|\bar{\boldsymbol{\mu}}-\bar{\boldsymbol{\mu}}'|_1+|\bar{\boldsymbol{\nu}}-\bar{\boldsymbol{\nu}}'|_1\right]\\
		&=\bar{L}_R\sum_{k\in[K]}\theta_k^{-1}\left[|\boldsymbol{\mu}(.,k)-\boldsymbol{\mu}'(.,k)|_1+|\boldsymbol{\nu}(.,k)-\boldsymbol{\nu}'(.,k)|_1\right]\\
		&\leq\bar{L}_R\boldsymbol{\theta}^{-1}_M\left[|\boldsymbol{\mu}-\boldsymbol{\mu}'|_1+|\boldsymbol{\nu}-\boldsymbol{\nu}'|_1\right]
	\end{split}
\end{align}
where we have, $\boldsymbol{\theta}_M^{-1}\triangleq\max\{\theta_k^{-1}\}_{k\in[K]}$, $\bar{\boldsymbol{\mu}}\triangleq\{\theta_k^{-1}\boldsymbol{\mu}(.,k)\}_{k\in[K]}$, $\bar{\boldsymbol{\mu}}'\triangleq\{\theta_k^{-1}\boldsymbol{\mu}'(.,k)\}_{k\in[K]}$, $\bar{\boldsymbol{\nu}}\triangleq\{\theta_k^{-1}\boldsymbol{\nu}(.,k)\}_{k\in[K]}$, and $\bar{\boldsymbol{\nu}}'\triangleq\{\theta_k^{-1}\boldsymbol{\nu}'(.,k)\}_{k\in[K]}$. Similarly, $\forall \boldsymbol{\mu},\boldsymbol{\mu}'\in\mathcal{P}_{\boldsymbol{\theta}}(\mathcal{X}\times[K])$, $\forall \boldsymbol{\nu},\boldsymbol{\nu}'\in \mathcal{P}_{\boldsymbol{\theta}}(\mathcal{U}\times [K])$, $\forall x\in \mathcal{X}$,  $\forall u\in \mathcal{U}$,  $\forall k\in [K]$, $\forall t\in \{0,1,\cdots\}$,
\begin{align}
	|P_k(x,u,\boldsymbol{\mu},\boldsymbol{\nu})-P_k(x,u,\boldsymbol{\mu}',\boldsymbol{\nu}')|_1&\leq \bar{L}_P\boldsymbol{\theta}_M^{-1}\left[|\boldsymbol{\mu}-\boldsymbol{\mu}'|_1+|\boldsymbol{\nu}-\boldsymbol{\nu}'|_1\right]\\
	|\pi_k^t(x,\boldsymbol{\mu})-\pi_k^t(x,\boldsymbol{\mu}')|_1&\leq \bar{L}_Q\boldsymbol{\theta}_M^{-1}|\boldsymbol{\mu}-\boldsymbol{\mu}'|_1
\end{align}

Hence, the given system can equivalently be thought as a multi-agent system satisfying Assumptions \ref{ass_1} and \ref{assumptions} with parameters $\bar{M}_R$, $\bar{L}_R\boldsymbol{\theta}_M^{-1}$, $\bar{L}_P\boldsymbol{\theta}_M^{-1}$ and $\bar{L}_Q\boldsymbol{\theta}_M^{-1}$. Using Theorem \ref{thm_1}, the approximation error bound for this translated system can be expressed as follows.

\begin{theorem}
	\label{thm_1_app}
	Let $\mathbf{x}_0^{\mathbf{N}}$ be the initial states and $\boldsymbol{\bar{\mu}}_0\in\mathcal{P}^K(\mathcal{X})$ their corresponding distribution. If $\bar{v}^{\mathbf{N}}$ denotes the empirical value function and $\bar{v}^{\mathrm{MF}}$ is its mean-field limit, then for any policy, $\boldsymbol{\bar{\pi}}\in\bar{\Pi}$,  the following inequality holds 
	\begin{align}
		\label{eq_28_new_app}
		\begin{split}
			\Big|&\bar{v}^{\mathbf{N}}(\mathbf{x}_0^{\mathbf{N}},{\boldsymbol{\bar{\pi}}})-\bar{v}^{\mathrm{MF}}(\boldsymbol{\bar{\mu}}_0,\boldsymbol{\bar{\pi}})\Big|\leq
			\dfrac{\bar{C}_R(\boldsymbol{\theta})}{1-\gamma}\sqrt{|\mathcal{U}|} \dfrac{1}{N_{\mathrm{pop}}}\left(\sum_{k\in[K]}\sqrt{N_k}\right) \\&+ \bar{C}_P(\boldsymbol{\theta})\left(\dfrac{\bar{S}_R(\boldsymbol{\theta})}{\bar{S}_P(\boldsymbol{\theta})-1}\right)\left[\sqrt{|\mathcal{X}|}+\sqrt{|\mathcal{U}|}\right] \dfrac{1}{N_{\mathrm{pop}}}\left(\sum_{k\in[K]}\sqrt{N_k}\right)\times\left[\dfrac{1}{1-\gamma \bar{S}_P(\boldsymbol{\theta})}-\dfrac{1}{1-\gamma}\right]
		\end{split}
	\end{align}
	whenever $\gamma \bar{S}_P(\boldsymbol{\theta}) <1$ where the parameters are defined as follows,
	\begin{align*}
		&\bar{C}_R(\boldsymbol{\theta})\triangleq \bar{M}_R+\bar{L}_R\boldsymbol{\theta}_M^{-1}\\
		&\bar{C}_P(\boldsymbol{\theta})\triangleq 2+\bar{L}_P\boldsymbol{\theta}_M^{-1}\\
		&\bar{S}_R(\boldsymbol{\theta})\triangleq \bar{M}_R(1+\bar{L}_Q\boldsymbol{\theta}_M^{-1})+\bar{L}_R\boldsymbol{\theta}_M^{-1}(2+\bar{L}_Q\boldsymbol{\theta}_M^{-1})\\
		&\bar{S}_P(\boldsymbol{\theta})\triangleq (1+\bar{L}_Q\boldsymbol{\theta}_M^{-1})+\bar{L}_P\boldsymbol{\theta}_M^{-1}(2+\bar{L}_Q\boldsymbol{\theta}_M^{-1})
	\end{align*}
\end{theorem}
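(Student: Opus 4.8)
The plan is to reduce Theorem \ref{thm_1_app} to Theorem \ref{thm_1} through the explicit change of variables constructed just above the statement, and to show that this change of variables leaves both the empirical and the mean-field value functions unchanged. The preceding derivation already establishes that the translated triple $(r_k,P_k,\pi_k^t)$ satisfies Assumptions \ref{ass_1} and \ref{assumptions} — albeit only for pairs of arguments lying in $\mathcal{P}_{\boldsymbol{\theta}}(\mathcal{X}\times[K])$ and $\mathcal{P}_{\boldsymbol{\theta}}(\mathcal{U}\times[K])$ — with effective constants $\bar{M}_R$, $\bar{L}_R\boldsymbol{\theta}_M^{-1}$, $\bar{L}_P\boldsymbol{\theta}_M^{-1}$, and $\bar{L}_Q\boldsymbol{\theta}_M^{-1}$. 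Substituting these four constants in place of $M_R,L_R,L_P,L_Q$ in the definitions of $C_R,C_P,S_R,S_P$ from Theorem \ref{thm_1} reproduces verbatim the constants $\bar{C}_R(\boldsymbol{\theta}),\bar{C}_P(\boldsymbol{\theta}),\bar{S}_R(\boldsymbol{\theta}),\bar{S}_P(\boldsymbol{\theta})$ of the statement, and the hypothesis $\gamma\bar{S}_P(\boldsymbol{\theta})<1$ is precisely the condition $\gamma S_P<1$ for the translated system.

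The crux is the equivalence of value functions. First I would observe that, since the class sizes $N_k$ are fixed, the empirical joint law always lies in $\mathcal{P}_{\boldsymbol{\theta}}$: comparing $(\ref{def_mu})$ with $(\ref{eq5})$ gives $\boldsymbol{\mu}_t^{\mathbf{N}}(\cdot,k)=\theta_k\boldsymbol{\bar{\mu}}_t^{\mathbf{N}}(\cdot,k)$ for every $t$ and $k$, and likewise for the action distributions; the same identity $\boldsymbol{\mu}_t(\cdot,k)=\theta_k\boldsymbol{\bar{\mu}}_t(\cdot,k)$ holds for the mean-field iterates because the class marginals are preserved by the evolution operators. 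Consequently the two systems, started from the same $\mathbf{x}_0^{\mathbf{N}}$ and driven by the corresponding policies, induce identical conditional laws on $(u_{j,k}^{t,\mathbf{N}},x_{j,k}^{t+1,\mathbf{N}})$ at every step, since $\pi_k^t(\cdot,\boldsymbol{\mu}_t^{\mathbf{N}})=\bar{\pi}_k^t(\cdot,\boldsymbol{\bar{\mu}}_t^{\mathbf{N}})$ and $P_k(\cdot,\cdot,\boldsymbol{\mu}_t^{\mathbf{N}},\boldsymbol{\nu}_t^{\mathbf{N}})=\bar{P}_k(\cdot,\cdot,\boldsymbol{\bar{\mu}}_t^{\mathbf{N}},\boldsymbol{\bar{\nu}}_t^{\mathbf{N}})$ by construction. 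Matching the per-step rewards $r_k=\bar{r}_k$ then yields $\bar{v}^{\mathbf{N}}(\mathbf{x}_0^{\mathbf{N}},\boldsymbol{\bar{\pi}})=v^{\mathbf{N}}(\mathbf{x}_0^{\mathbf{N}},\boldsymbol{\pi})$. For the mean-field values I would exploit that $\boldsymbol{\mu}_t(\cdot,k)$ carries the extra factor $\theta_k$, so $(\ref{eqr})$ gives $r_k^{\mathrm{MF}}(\boldsymbol{\mu}_t,\boldsymbol{\pi}_t)=\theta_k\,\bar{r}_k^{\mathrm{MF}}(\boldsymbol{\bar{\mu}}_t,\boldsymbol{\bar{\pi}}_t)$; summing over $k$ and $t$ and comparing $(\ref{eqvr})$ with $(\ref{eqvr_2})$ shows $v^{\mathrm{MF}}(\boldsymbol{\mu}_0,\boldsymbol{\pi})=\bar{v}^{\mathrm{MF}}(\boldsymbol{\bar{\mu}}_0,\boldsymbol{\bar{\pi}})$.

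With both equivalences in hand, Theorem \ref{thm_1} applied to the translated system bounds $|v^{\mathbf{N}}-v^{\mathrm{MF}}|$, which equals $|\bar{v}^{\mathbf{N}}-\bar{v}^{\mathrm{MF}}|$, by the right-hand side of $(\ref{eq_28_new_app})$, completing the argument. The one point demanding care — and the main obstacle — is that the translated functions are genuinely Lipschitz only on the affine slice $\mathcal{P}_{\boldsymbol{\theta}}$, whereas Theorem \ref{thm_1} is phrased for functions Lipschitz on the full simplex. The resolution is that the proof of Theorem \ref{thm_1} evaluates the reward, transition, and action-evolution operators exclusively at the iterates $\boldsymbol{\mu}_t,\boldsymbol{\mu}_t^{\mathbf{N}}$ and their induced action distributions, all of which remain in $\mathcal{P}_{\boldsymbol{\theta}}$; hence only the Lipschitz bounds between pairs of $\mathcal{P}_{\boldsymbol{\theta}}$-elements are ever invoked, and every lemma transfers unchanged. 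Equivalently, one may extend each translated function off $\mathcal{P}_{\boldsymbol{\theta}}$ to a globally Lipschitz function with the same constant (for instance by precomposing with the nearest-point projection onto $\mathcal{P}_{\boldsymbol{\theta}}$), which alters no value function and places the system formally within the hypotheses of Theorem \ref{thm_1}.
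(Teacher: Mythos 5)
Your proposal is correct and follows essentially the same route as the paper: translate the Assumption \ref{ass_2}/\ref{ass_4} system into an Assumption \ref{ass_1}/\ref{assumptions} system with constants $\bar{M}_R$, $\bar{L}_R\boldsymbol{\theta}_M^{-1}$, $\bar{L}_P\boldsymbol{\theta}_M^{-1}$, $\bar{L}_Q\boldsymbol{\theta}_M^{-1}$ and invoke Theorem \ref{thm_1}. You additionally make explicit two points the paper leaves implicit — the identities $\bar{v}^{\mathbf{N}}=v^{\mathbf{N}}$ and $\bar{v}^{\mathrm{MF}}=v^{\mathrm{MF}}$ under the correspondence $\boldsymbol{\mu}(\cdot,k)=\theta_k\boldsymbol{\bar{\mu}}(\cdot,k)$, and the fact that Lipschitz continuity is only available on the slice $\mathcal{P}_{\boldsymbol{\theta}}$ but suffices because all iterates remain there — both of which are correct and strengthen the argument.
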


One can verify that the bound $(\ref{eq_28_new_app})$ is weaker than the bound provided by Theorem \ref{thm_2}.

\subsection{Loose Bound Using Theorem \ref{thm_2}}

Consider a multi-agent system satisfying Assumptions \ref{ass_1} and \ref{assumptions}. We shall use the notations of Theorem \ref{thm_1}. Let, $\boldsymbol{\theta}\triangleq\{\theta_k\}_{k\in[K]}$ be prior probabilities of different classes. If ${r}_k$'s and ${P}_k$'s are given reward and transition functions of the system, then one can define $\bar{r}_k$'s and $\bar{P}_k$'s such that, $\forall x\in\mathcal{X}$, $\forall u\in \mathcal{U}$, $\forall {\boldsymbol{\mu}}\in \mathcal{P}(\mathcal{X}\times[K])$, $\forall {\boldsymbol{\nu}}\in \mathcal{P}(\mathcal{U}\times [K])$ and $\forall k\in[K]$,
\begin{align*}
	&{r}_k(x,u,{\boldsymbol{\mu}},{\boldsymbol{\nu}})=\bar{r}_k(x,u,\boldsymbol{\bar{\mu}},\boldsymbol{\bar{\nu}}), \\
	&{P}_k(x,u,{\boldsymbol{\mu}},{\boldsymbol{\nu}})=\bar{P}_k(x,u,\boldsymbol{\bar{\mu}},\boldsymbol{\bar{\nu}})
\end{align*}
where $\boldsymbol{\bar{\mu}}$, $\boldsymbol{\bar{\nu}}$ are uniquely defined as, $\boldsymbol{\bar{\mu}}\triangleq\{\theta_k^{-1}{\boldsymbol{\mu}}(.,k)\}_{k\in[K]}$ and $\boldsymbol{\bar{\nu}}\triangleq\{\theta_k^{-1}{\boldsymbol{\nu}}(.,k)\}_{k\in[K]}$. Clearly, $\boldsymbol{\bar{\mu}}\in \mathcal{P}^K(\mathcal{X})$, $\boldsymbol{\bar{\nu}}\in \mathcal{P}^K(\mathcal{U})$. Also, for every policy $\boldsymbol{{\pi}}\triangleq\{({\pi}_k^t)_{k\in[K]}\}_{t\in\{0,1,\cdots\}}$, one can define $\boldsymbol{\bar{\pi}}\triangleq\{(\bar{\pi}_k^t)_{k\in [K]}\}_{t\in\{0,1,\cdots\}}$ such that, $\forall x\in \mathcal{X}$, $\forall k\in [K]$, and $\forall\boldsymbol{{\mu}}\in\mathcal{P}(\mathcal{X}\times [K])$, 
\begin{align*}
	{\pi}_k^t(x,{\boldsymbol{\mu}})=\bar{\pi}_k^t(x,\boldsymbol{\bar{\mu}})
\end{align*}

Note that, the following inequality holds $\forall \boldsymbol{\bar{\mu}},\boldsymbol{\bar{\mu}}'\in\mathcal{P}^K(\mathcal{X})$, $\forall \boldsymbol{\bar{\nu}},\boldsymbol{\bar{\nu}}'\in \mathcal{P}^K(\mathcal{U})$, $\forall x\in \mathcal{X}$,  $\forall u\in \mathcal{U}$, and $\forall k\in [K]$
\begin{align}
	\begin{split}
		|\bar{r}_k(x,u,\boldsymbol{\bar{\mu}},\boldsymbol{\bar{\nu}})-\bar{r}_k(x,u,\boldsymbol{\bar{\mu}}',\boldsymbol{\bar{\nu}}'&)|=|{r}_k(x,u,{\boldsymbol{\mu}},{\boldsymbol{\nu}})-{r}_k(x,u,{\boldsymbol{\mu}}',{\boldsymbol{\nu}}')|\\
		&\leq {L}_R\left[|{\boldsymbol{\mu}}-{\boldsymbol{\mu}}'|_1+|{\boldsymbol{\nu}}-{\boldsymbol{\nu}}'|_1\right]\\
		&={L}_R\sum_{k\in[K]}\theta_k\left[|\boldsymbol{\bar{\mu}}(.,k)-\boldsymbol{\bar{\mu}}'(.,k)|_1+|\boldsymbol{\bar{\nu}}(.,k)-\boldsymbol{\bar{\nu}}'(.,k)|_1\right]\\
		&\leq{L}_R\left[|\boldsymbol{\bar{\mu}}-\boldsymbol{\bar{\mu}}'|_1+|\boldsymbol{\bar{\nu}}-\boldsymbol{\bar{\nu}}'|_1\right]
	\end{split}
\end{align}
where  ${\boldsymbol{\mu}}\triangleq\{\theta_k\boldsymbol{\bar{\mu}}(.,k)\}_{k\in[K]}$, ${\boldsymbol{\mu}}'\triangleq\{\theta_k\boldsymbol{\bar{\mu}}'(.,k)\}_{k\in[K]}$, ${\boldsymbol{\nu}}\triangleq\{\theta_k\boldsymbol{\bar{\nu}}(.,k)\}_{k\in[K]}$,  ${\boldsymbol{\nu}}'\triangleq\{\theta_k\boldsymbol{\bar{\nu}}'(.,k)\}_{k\in[K]}$. Similarly, $\forall \boldsymbol{\bar{\mu}},\boldsymbol{\bar{\mu}}'\in\mathcal{P}^K(\mathcal{X})$, $\forall \boldsymbol{\bar{\nu}},\boldsymbol{\bar{\nu}}'\in \mathcal{P}^K(\mathcal{U})$, $\forall x\in \mathcal{X}$,  $\forall u\in \mathcal{U}$,  $\forall k\in [K]$, $\forall t\in \{0,1,\cdots\}$,
\begin{align}
	|\bar{P}_k(x,u,\boldsymbol{\bar{\mu}},\boldsymbol{\bar{\nu}})-\bar{P}_k(x,u,\boldsymbol{\bar{\mu}}',\boldsymbol{\bar{\nu}}')|_1&\leq {L}_P\left[|\boldsymbol{\bar{\mu}}-\boldsymbol{\bar{\mu}}'|_1+|\boldsymbol{\bar{\nu}}-\boldsymbol{\bar{\nu}}'|_1\right]\\
	|\bar{\pi}_k^t(x,\boldsymbol{\bar{\mu}})-\bar{\pi}_k^t(x,\boldsymbol{\bar{\mu}}')|_1&\leq {L}_Q|\boldsymbol{\bar{\mu}}-\boldsymbol{\bar{\mu}}'|_1
\end{align}

Hence, the given system can equivalently be thought as a multi-agent system satisfying Assumptions \ref{ass_2} and \ref{ass_4} with parameters ${M}_R$, ${L}_R$, ${L}_P$ and ${L}_Q$. Using Theorem \ref{thm_2}, the approximation error bound for this translated system can be expressed as follows.
\begin{theorem}
	\label{thm_2_app}
	If $\mathbf{x}_0^{\mathbf{N}}$ be initial states and $\boldsymbol{{\mu}}_0\in\mathcal{P}(\mathcal{X}\times [K])$ its resulting distribution, then $\forall \boldsymbol{{\pi}}\in {\Pi}$,
	\begin{align}
		\label{eq_within_thm2_app}
		\begin{split}
			\Big|{v}^{\mathbf{N}}&(\mathbf{x}_0^{\mathbf{N}},{\boldsymbol{{\pi}}})-{v}^{\mathrm{MF}}(\boldsymbol{{\mu}}_0,\boldsymbol{{\pi}})\Big|
			\leq
			\dfrac{{C}_R}{1-\gamma}\sqrt{|\mathcal{U}|} \left(\sum_{k\in[K]}\dfrac{1}{\sqrt{N_k}}\right) \\+ &{C}_P\left(\dfrac{{S}_R}{{S}_P-1}\right)\left[\sqrt{|\mathcal{X}|}+\sqrt{|\mathcal{U}|}\right] \left(\sum_{k\in[K]}\dfrac{1}{\sqrt{N_k}}\right)\times\left[\dfrac{1}{1-\gamma {S}_P}-\dfrac{1}{1-\gamma}\right]
		\end{split}
	\end{align}
	whenever $\gamma {S}_P<1$ where ${v}^{\mathbf{N}}(\cdot,\cdot)$ denotes the empirical value function and ${v}^{\mathrm{MF}}(\cdot,\cdot)$ is its mean-field limit. The other terms are given as follows: ${C}_R\triangleq {M}_R+{L}_R$, ${C}_P\triangleq 2+K{L}_P$, ${S}_R\triangleq {M}_R(1+{L}_Q)+{L}_R(2+K{L}_Q)$, and ${S}_P\triangleq (1+K{L}_Q)+K{L}_P(2+K{L}_Q)$.
\end{theorem}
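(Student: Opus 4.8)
The plan is to exploit the explicit change of variables already set up in the text preceding the theorem, under which the given system (satisfying Assumptions \ref{ass_1} and \ref{assumptions}) is recast as an equivalent system satisfying Assumptions \ref{ass_2} and \ref{ass_4}, and then to invoke Theorem \ref{thm_2} verbatim. The entire argument reduces to showing that the two value functions of interest, $v^{\mathbf{N}}$ and $v^{\mathrm{MF}}$ for the original system, are literally identical to their counterparts $\bar{v}^{\mathbf{N}}$ and $\bar{v}^{\mathrm{MF}}$ for the translated system, so that the bound of Theorem \ref{thm_2} transfers without any loss beyond the change of constants.

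First I would treat the finite-agent side. By definitions \eqref{def_mu} and \eqref{eq5}, the two families of empirical distributions satisfy $\boldsymbol{\bar{\mu}}_t^{\mathbf{N}}(\cdot,k)=\theta_k^{-1}\boldsymbol{\mu}_t^{\mathbf{N}}(\cdot,k)$, and analogously for the action distributions, since $\tfrac{1}{N_k}=\theta_k^{-1}\tfrac{1}{N_{\mathrm{pop}}}$. This is exactly the bijection used to define $\bar{r}_k$, $\bar{P}_k$, and $\bar{\pi}_k^t$. Consequently, for every agent the reward evaluations agree, $r_k(x_{j,k}^{t,\mathbf{N}},u_{j,k}^{t,\mathbf{N}},\boldsymbol{\mu}_t^{\mathbf{N}},\boldsymbol{\nu}_t^{\mathbf{N}})=\bar{r}_k(x_{j,k}^{t,\mathbf{N}},u_{j,k}^{t,\mathbf{N}},\boldsymbol{\bar{\mu}}_t^{\mathbf{N}},\boldsymbol{\bar{\nu}}_t^{\mathbf{N}})$, while the action-selection and transition laws coincide, so the two constructions induce the same law over trajectories. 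Comparing \eqref{eq_10}--\eqref{eq_vN} with \eqref{eq_18} then yields $v^{\mathbf{N}}(\mathbf{x}_0^{\mathbf{N}},\boldsymbol{\pi})=\bar{v}^{\mathbf{N}}(\mathbf{x}_0^{\mathbf{N}},\boldsymbol{\bar{\pi}})$.

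Next I would verify the mean-field side by induction on $t$, showing that the scaling relation $\boldsymbol{\mu}_t(\cdot,k)=\theta_k\boldsymbol{\bar{\mu}}_t(\cdot,k)$ is preserved by the evolution operators. The base case is the standing hypothesis $\boldsymbol{\mu}_0(\cdot,k)=\theta_k\boldsymbol{\bar{\mu}}_0(\cdot,k)$. For the inductive step, substituting the relation into \eqref{eq_nu_v} and using $\pi_k^t(x,\boldsymbol{\mu}_t)=\bar{\pi}_k^t(x,\boldsymbol{\bar{\mu}}_t)$ gives $\nu_k^{\mathrm{MF}}(\boldsymbol{\mu}_t,\boldsymbol{\pi}_t)=\theta_k\bar{\nu}_k^{\mathrm{MF}}(\boldsymbol{\bar{\mu}}_t,\boldsymbol{\bar{\pi}}_t)$, and substituting into \eqref{eq_phi} gives $P_k^{\mathrm{MF}}(\boldsymbol{\mu}_t,\boldsymbol{\pi}_t)=\theta_k\bar{P}_k^{\mathrm{MF}}(\boldsymbol{\bar{\mu}}_t,\boldsymbol{\bar{\pi}}_t)$, i.e. $\boldsymbol{\mu}_{t+1}(\cdot,k)=\theta_k\boldsymbol{\bar{\mu}}_{t+1}(\cdot,k)$, closing the induction. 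The same substitution in \eqref{eqr} yields the per-class identity $r_k^{\mathrm{MF}}(\boldsymbol{\mu}_t,\boldsymbol{\pi}_t)=\theta_k\bar{r}_k^{\mathrm{MF}}(\boldsymbol{\bar{\mu}}_t,\boldsymbol{\bar{\pi}}_t)$; summing over $k$ and over the $\gamma$-discounted horizon and comparing \eqref{eqvr} with \eqref{eqvr_2} gives $v^{\mathrm{MF}}(\boldsymbol{\mu}_0,\boldsymbol{\pi})=\bar{v}^{\mathrm{MF}}(\boldsymbol{\bar{\mu}}_0,\boldsymbol{\bar{\pi}})$. The extra factor $\theta_k$ appearing in \eqref{eqvr_2} is precisely what absorbs the rescaling, and this is the one point to check with care.

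Finally, since the displays preceding the theorem verify that the translated system obeys Assumptions \ref{ass_2} and \ref{ass_4} with parameters $M_R,L_R,L_P,L_Q$ in place of $\bar{M}_R,\bar{L}_R,\bar{L}_P,\bar{L}_Q$, I would apply Theorem \ref{thm_2} to the pair $(\bar{v}^{\mathbf{N}},\bar{v}^{\mathrm{MF}})$; its constants $\bar{C}_R,\bar{C}_P,\bar{S}_R,\bar{S}_P$ then specialize exactly to the $C_R,C_P,S_R,S_P$ defined in the statement of Theorem \ref{thm_2_app}, and the condition $\gamma\bar{S}_P<1$ becomes $\gamma S_P<1$. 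Replacing $\bar{v}^{\mathbf{N}},\bar{v}^{\mathrm{MF}}$ by the equal quantities $v^{\mathbf{N}},v^{\mathrm{MF}}$ produces \eqref{eq_within_thm2_app}. The argument contains no genuine inequality of its own---everything delicate in the bound is inherited from Theorem \ref{thm_2}---so the only substantive work, and the sole place where an error could enter, is the bookkeeping ensuring that the $\theta_k$-rescaling is consistently preserved across both the empirical and the mean-field dynamics.
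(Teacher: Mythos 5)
Your proposal is correct and follows essentially the same route as the paper: the paper's proof of Theorem \ref{thm_2_app} is precisely the preceding translation argument (defining $\bar{r}_k,\bar{P}_k,\bar{\pi}_k^t$ via the $\theta_k$-rescaling, checking that the Lipschitz constants $M_R,L_R,L_P,L_Q$ carry over), followed by a direct invocation of Theorem \ref{thm_2} with those constants substituted. Your additional verification that $v^{\mathbf{N}}=\bar{v}^{\mathbf{N}}$ and $v^{\mathrm{MF}}=\bar{v}^{\mathrm{MF}}$ (via the inductive preservation of $\boldsymbol{\mu}_t(\cdot,k)=\theta_k\boldsymbol{\bar{\mu}}_t(\cdot,k)$ and the absorption of $\theta_k$ in \eqref{eqvr_2}) is left implicit in the paper but is exactly the right bookkeeping.
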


Clearly, the bound provided by $(\ref{eq_within_thm2_app})$ is weaker than the bound suggested in Theorem \ref{thm_1}.

\bibliography{BibL.bib}

\end{document}